%% file: neurips_2026.tex
\documentclass{article}

\usepackage[preprint]{neurips_2026}

\usepackage[utf8]{inputenc} 
\usepackage[T1]{fontenc}    
\usepackage{hyperref}       
\usepackage{url}            
\usepackage{booktabs}       
\usepackage{amsfonts}       
\usepackage{nicefrac}       
\usepackage{microtype}      
\usepackage{xcolor}         
\usepackage{algorithmic}
\usepackage{multirow}

\usepackage[ruled,vlined]{algorithm2e}
\usepackage{amsmath}
\usepackage{amssymb}

\usepackage{mathtools}
\usepackage{amsthm}
\usepackage{bbm}
\usepackage{tikz}
\usetikzlibrary{arrows.meta,positioning,decorations.pathmorphing,calc}

\usepackage[capitalize,noabbrev]{cleveref}

\theoremstyle{plain}
\newtheorem{theorem}{Theorem}[section]
\newtheorem{proposition}[theorem]{Proposition}
\newtheorem{lemma}[theorem]{Lemma}

\theoremstyle{definition}

\newtheorem{assumption}[theorem]{Assumption}
\theoremstyle{remark}

\title{Entropic Riemannian Neural Optimal Transport}

\author{%
  Alessandro Micheli \\
  Imperial College London\\
  London, UK \\
  \texttt{a.micheli19@imperial.ac.uk} \\
  \And
  Silvia Sapora \\
  University of Oxford \\
  Oxford, UK \\
  \texttt{silvia.sapora@stats.ox.ac.uk} \\
   \And
  Anthea Monod \\
  Imperial College London\\
  London, UK \\
  \texttt{a.monod@imperial.ac.uk} \\
    \And
  Samir Bhatt \\
  Imperial College London \\
  London, UK \\
  Statens Serum Institut \\
  Copenhagen, Denmark\\ 
  University of Copenhagen \\
  Copenhagen, Denmark\\
  \texttt{s.bhatt@imperial.ac.uk} \\
}

\begin{document}

\maketitle

\begin{abstract}
\input{sections/abstract}
\end{abstract}

\section{Introduction}
\input{sections/introduction}

\section{Background}
\label{sec:background}
\input{sections/background}

\section{Entropic Riemannian Neural Optimal Transport}
\label{sec:method}
\input{sections/method}

\section{Theoretical Guarantees}
\label{sec-theory}
\input{sections/theory}
\section{Experiments}
\input{sections/experiments}

\section{Discussion and Limitations}
\label{sec:discussion}
\input{sections/discussion}

 \section*{Acknowledgements}
 S.B. acknowledges support from the Novo Nordisk Foundation via The Novo Nordisk Young Investigator Award (NNF20OC0059309). S.B. acknowledges support from The Eric and Wendy Schmidt Fund For Strategic Innovation via the Schmidt Polymath Award (G-22-63345) which also supports A.Micheli. S.B. acknowledges support from the Novo Nordisk Foundation via the Global Pathogen Analysis Platform (GPAP) (NNF26SA0109818) which also supports A.Micheli. A.Monod is supported by the EPSRC AI Hub on Mathematical Foundations of Intelligence: An ``Erlangen Programme'' for AI No.~EP/Y028872/1. S. S. is supported by the Engineering and Physical Sciences Research Council EP/W524311/.

 \section*{Contribution Statements}
 Author contributions are reported using the CRediT (Contributor Roles Taxonomy).
  \begin{itemize}
     \item \textbf{Alessandro Micheli}: Conceptualization; Methodology; Software; Formal analysis; Supervision; Investigation; Project administration; Visualization; Validation; Writing – original draft; Writing – review \& editing.
     \item \textbf{Silvia Sapora}:  
     Writing – review \& editing.
     \item \textbf{Anthea Monod}:
     Writing – original draft; Writing – review \& editing.
     \item \textbf{Samir Bhatt}:
      Software; Funding acquisition; Resources; Visualization; Validation; Writing – original draft; Writing – review \& editing.
 \end{itemize}

\bibliographystyle{plain}
\bibliography{ref_update} 

\clearpage
\newpage
\appendix
\input{sections/appendix}


\end{document}

%% file: sections/abstract.tex
Many machine learning problems involve data supported on curved spaces such as
spheres, rotation groups, hyperbolic spaces, and general Riemannian manifolds,
where Euclidean geometry can distort distances, averages, and the resulting
optimal transport (OT) problem. Existing manifold OT methods have pursued
amortized out-of-sample maps, while entropic regularization has made discrete OT
more scalable, but these advantages have remained largely disjoint. We propose Entropic Riemannian Neural Optimal Transport (Entropic RNOT), a unified framework that combines intrinsic entropic OT with amortized out-of-sample evaluation on Riemannian manifolds.
Our method learns a single target-side
Schr\"odinger potential through a neural pullback parameterization, recovers the
induced Gibbs coupling and uses the resulting conditional laws to construct intrinsic transport surrogates. These include barycentric projections on Cartan--Hadamard
manifolds and heat-smoothed conditional surrogates on complete stochastically
complete manifolds, the latter turning possibly atomic target laws into
absolutely continuous ones. For fixed regularization \(\varepsilon>0\), we prove
that the proposed hypothesis class recovers the entropic optimal coupling in
strong probabilistic metrics. As consequences, barycentric surrogates converge
in \(L^2\), while heat-smoothed surrogates are stable at fixed heat time and
asymptotically unbiased as the heat time vanishes. The guarantees hold for
compactly supported data on possibly noncompact manifolds. Empirically, our
method matches or improves over Euclidean, tangent-space, and log-Euclidean
baselines on benchmarks over \(\mathbb S^2\), \(\mathrm{SO}(3)\),
\(\mathrm{SPD}(3)\), \(\mathrm{SE}(3)\), and \(\mathbb H^2\), scales favorably
relative to discrete manifold Sinkhorn, and in a protein–ligand docking application, refines poses on $\mathrm{SE}(3)$ without retraining or per-instance optimization.

%% file: sections/introduction.tex
Optimal transport (OT) provides a principled way to compare and transform
probability distributions, and has become a powerful tool in modern machine
learning. In many applications, however, the data are not naturally Euclidean:
directions lie on spheres, orientations on rotation groups, rigid poses on
\(\mathrm{SE}(3)\), and covariance descriptors on SPD manifolds. In such
settings, Euclidean approximations can distort both distances and transport
summaries, motivating OT methods defined intrinsically on Riemannian manifolds.

Recent work has begun to extend neural OT to this setting. A central motivation
is \emph{amortization}: after training, a learned transport mechanism should
generalize to unseen data without requiring a new OT solve at test time. Recent
manifold OT models~\cite{micheli2026riemannianneuraloptimaltransport} pursue
intrinsic neural parameterizations of transport maps with out-of-sample
generalization, including constructions based on implicit cost-concave
potentials~\cite{rezende2021implicitriemannianconcavepotential}. However, these
approaches also highlight a key computational difficulty: intrinsic manifold OT
may still require iterative inner optimization, since essential quantities are
defined through optimization subproblems rather than closed-form evaluation.

A complementary route to tractability is \emph{entropic regularization}, which
replaces OT by a smoother problem that is much easier to solve, notably via
Sinkhorn iterations~\cite{cuturisinkhorn}. This greatly improves scalability
and numerical stability, but does not by itself provide an amortized model:
each new instance still typically requires solving a new regularized OT
problem. Amortization and entropic regularization therefore address distinct
challenges, namely reusability at test time and per-instance computational
cost.

We combine both ideas in an intrinsic geometric setting. Specifically, we
introduce \emph{Entropic Riemannian Neural Optimal Transport (Entropic RNOT)},
an intrinsic neural framework for the \emph{static} entropic OT problem on
possibly noncompact
Riemannian manifolds that aims to learn a reusable manifold-aware transport
model while retaining the smoothing and scalability benefits of entropic
regularization. To the best of our knowledge, ours is the first intrinsic
neural framework for this problem with amortized out-of-sample evaluation.

Our method is based on the semidual formulation of entropic OT: we learn a
target-side Schr\"odinger potential through a neural pullback parameterization
and recover the induced Gibbs coupling. We then study intrinsic transport
surrogates extracted from the conditional laws of this coupling. In the
Cartan--Hadamard setting, these conditionals define a deterministic transport
surrogate by barycentric projection. More generally, on complete
stochastically complete manifolds, heat smoothing provides a canonical
Riemannian way to turn possibly atomic conditional target laws, such as those
arising from finite samples or finite target supports, into absolutely
continuous conditional distributions.

For fixed \(\varepsilon>0\), we show that the proposed hypothesis class admits
approximating sequences whose induced Gibbs plans recover the entropic optimal
coupling. As stable consequences of this plan recovery, barycentric
projections recover the associated deterministic surrogate in \(L^2(\mu)\) on
Cartan--Hadamard manifolds, while heat-smoothed conditional laws are stable at
every fixed heat time and have vanishing population-level smoothing bias as the
heat time tends to zero.

Empirically, we evaluate our framework on synthetic and real manifold-valued
transport tasks. On benchmarks over the compact and noncompact manifolds
\(\mathbb S^2\), \(\mathrm{SO}(3)\), \(\mathrm{SPD}(3)\), \(\mathrm{SE}(3)\),
and \(\mathbb H^2\), our method is consistently competitive with or superior
to ambient Euclidean, tangent-space, and log-Euclidean baselines. Scalability
experiments show substantially better wall-clock, memory, and inference
scaling than discrete manifold Sinkhorn in the large-support regime. On a
real-world, rigid-pose refinement task on \(\mathrm{SE}(3)\) for
protein--ligand docking in CrossDocked2020, a single model trained on pooled
complexes refines held-out docking poses toward the near-native region without
crystal supervision during training or inference and without per-instance
optimization. This reduces top-1 RMSD from 11.24 \AA{} to 3.47 \AA{}, improves
success at 2 \AA{} from 10.3\% to 75.9\%, and substantially outperforms
physics-based minimization.

\emph{Our contributions are threefold.} First, we introduce Entropic RNOT, an intrinsic neural
framework for the \emph{static} entropic OT problem on Riemannian manifolds
that combines the semidual formulation with amortized out-of-sample evaluation.
Second, for fixed \(\varepsilon>0\), we prove recovery of the entropic optimal
coupling in strong probabilistic metrics, and derive stable recovery
results for intrinsic transport surrogates: barycentric projections converge in
\(L^2(\mu)\) in the Cartan--Hadamard setting, while heat-smoothed conditional
laws are stable at fixed heat time and asymptotically unbiased as the heat time
vanishes. Third, we demonstrate empirically strong transport quality across
diverse manifold geometries, more favorable scaling than discrete manifold
Sinkhorn, and effective real-world \(\mathrm{SE}(3)\) post-docking pose
refinement.

%% file: sections/background.tex
We briefly introduce the notation and constructions used throughout the paper;
a more detailed review of entropic optimal transport on Riemannian supports is
provided in Appendix~\ref{app:review-eot}.

Throughout, \((\mathcal M,g)\) denotes a complete \(p\)-dimensional
Riemannian manifold with geodesic distance \(d\), and
\(\mathcal P(\mathcal M)\) denotes the set of Borel probability measures on
\(\mathcal M\). For \(r\ge 1\), we write \(\mathcal P_r(\mathcal M)\) for the
space of probability measures on \(\mathcal M\) with finite \(r\)th moment.
We fix an entropic regularization parameter \(\varepsilon>0\)
 and consider the quadratic geodesic cost
\begin{equation}
\label{eq-quadratic-cost}
c(x,y):=\frac12 d(x,y)^2,
\qquad x,y\in\mathcal M,
\end{equation}

\paragraph{Entropic OT and the Semidual Formulation.}
For \(\mu,\nu\in\mathcal P(\mathcal M)\) with
\(c\in L^1(\mu\otimes\nu)\), the \emph{entropically regularized optimal
transport problem} is
\begin{equation}
\label{eq:eot-primal-general}
\mathrm{OT}_\varepsilon(\mu,\nu)
:=
\inf_{\pi\in\Pi(\mu,\nu)}
\left\{
\int_{\mathcal M\times\mathcal M} c(x,y)\,d\pi(x,y)
+
\varepsilon\,\mathrm{KL}(\pi\mid \mu\otimes\nu)
\right\}.
\end{equation}
Given a measurable function \(g:\mathcal M\to\mathbb R\), define its
\emph{soft \(c\)-transform relative to \(\nu\)} by
\begin{equation}
\label{eq:soft-c-transform-general}
(\mathcal T_\nu^\varepsilon g)(x)
:=
-\varepsilon\log\!\left(
\int_{\mathcal M}
\exp\!\left(\frac{g(y)-c(x,y)}{\varepsilon}\right)\,d\nu(y)
\right),
\qquad x\in\mathcal M,
\end{equation}
whenever the normalizing integral belongs to \((0,\infty)\). For admissible
\(g\), in the sense made precise in
Proposition~\ref{prop:semidual-reduction-review}, set
\begin{equation}
\label{eq:semidual-functional-general}
\mathcal J_\varepsilon(g)
:=
\int_{\mathcal M} g\,d\nu
+
\int_{\mathcal M}\mathcal T_\nu^\varepsilon g\,d\mu.
\end{equation}
Under the standing assumption \(c\in L^1(\mu\otimes\nu)\), entropic OT admits
Schr\"odinger potentials
\((f_\varepsilon^\star,g_\varepsilon^\star)\), unique up to an additive
constant, satisfying
\[
f_\varepsilon^\star
=
\mathcal T_\nu^\varepsilon g_\varepsilon^\star
\]
\(\mu\)-almost surely, or everywhere on the relevant support after choosing the
canonical pointwise representatives described in
Proposition~\ref{prop:dual-attainment-review}. The unique optimizer
\(\pi_\varepsilon^\star\) has Gibbs density
\begin{equation}
\label{eq:gibbs-density-general}
\frac{d\pi_\varepsilon^\star}{d(\mu\otimes\nu)}(x,y)
=
\exp\!\left(
\frac{f_\varepsilon^\star(x)+g_\varepsilon^\star(y)-c(x,y)}{\varepsilon}
\right).
\end{equation}
Moreover, the problem admits the following one-potential semidual representation, where the supremum is over admissible \(g\):
\[
\mathrm{OT}_\varepsilon(\mu,\nu)
=
\sup_g \mathcal J_\varepsilon(g),
\]

\paragraph{Transport Surrogates Extracted from Gibbs Conditionals.}
Let \(\pi\) be a probability measure on \(\mathcal M\times\mathcal M\) with
first marginal \(\mu\), and write its disintegration as
\[
\pi(dx,dy)=\mu(dx)\,\pi_x(dy).
\]
In general, the conditional law \(\pi_x\) may be viewed as a distributional
transport output associated with the source point \(x\). The two following
intrinsic approaches to extract transport surrogates from these conditionals
will be relevant.

First, when \((\mathcal M,g)\) is Cartan--Hadamard and the conditional laws have
finite second moments, the barycentric projection is defined by
\[
T_\pi(x):=\operatorname{bar}(\pi_x)
:=
\operatorname*{arg\,min}_{z\in\mathcal M}
\frac12\int_{\mathcal M} d(z,y)^2\,\pi_x(dy).
\]
The finite-second-moment condition is automatic in the compact-support setting:
if \(\pi\) is supported on \(\mathcal M\times K_\nu\) for a compact set
\(K_\nu\subset\mathcal M\), then \(\pi_x(K_\nu)=1\) for
\(\mu\)-almost every \(x\), and hence
\(\pi_x\in\mathcal P_2(\mathcal M)\) for \(\mu\)-almost every \(x\).

Second, if \((\mathcal M,g)\) is complete and stochastically complete, with heat
kernel \(p_t(y,z)\), the conditional law \(\pi_x\) in the second variable may be
smoothed by the heat semigroup. For \(t>0\), define
\[
Q_{\pi,t}(x,dz)
:=
\Bigl(\int_{\mathcal M} p_t(y,z)\,\pi_x(dy)\Bigr)\,\mathrm{vol}_g(dz).
\]
Heat smoothing provides a canonical Riemannian way to turn possibly atomic
conditional target laws, such as those obtained from finite samples of a
learned plan, into absolutely continuous conditional distributions. This is
useful when one wants a density-based transport surrogate rather than only a
discrete conditional measure. Depending on the application, the smoothed law
\(Q_{\pi,t}(x,\cdot)\) may be used directly or a point prediction may be derived
from it, for instance by selecting a mode of the heat-smoothed density. As shown
in Proposition~\ref{prop:main-fixed-eps-heat}, the heat-smoothed surrogate is
stable at every fixed heat time \(t>0\), and its population-level smoothing bias
vanishes as \(t\downarrow0\).
\paragraph{Feature-Induced Hypothesis Classes.}
We learn functions on \(D\subset\mathcal M\) by pulling back Euclidean models
through a feature map; see Appendix~\ref{app:feature-map-review} for the
compact-domain transfer principle and intrinsic examples of admissible feature
maps. Let \(D\) be compact, let
\[
\varphi:D\to\mathbb R^n
\]
be a continuous feature map, and let
\(\mathcal F\subset C(\mathbb R^n,\mathbb R)\) be a Euclidean function class,
for example a neural network class. The induced pullback class is
\begin{equation}
\label{eq:pullback-class-general}
\varphi^*\mathcal F
:=
\{a\circ\varphi:\ a\in\mathcal F\}.
\end{equation}

The approximation power of this class depends jointly on \(\mathcal F\) and
\(\varphi\). Since \(D\) is compact, uniform convergence on compact sets
reduces on \(D\) to ordinary uniform convergence. Hence, if \(\mathcal F\) is
dense in \(C(\mathbb R^n,\mathbb R)\) under the ucc topology and \(\varphi\) is
continuous and injective, then \(\varphi^*\mathcal F\) is dense in
\(C(D,\mathbb R)\) under the uniform norm. This applies, for instance, to
standard universal neural network
architectures~\citep{Leshno1993,NIPS2017_32cbf687,Zhou2020} and to posterior
means of Gaussian processes with universal
kernels~\citep{JMLR:v7:micchelli06a}.

Since Schr\"odinger potentials are identifiable only up to additive constants,
we work with the centered pullback class
\begin{equation}
\label{eq-centered-pullback-class}
\mathsf C_\nu(\varphi^*\mathcal F)
:=
\left\{
a\circ\varphi-\int a\circ\varphi\,d\nu
:\ a\in\mathcal F
\right\},
\end{equation}
where \(\mathsf C_\nu h:=h-\int h\,d\nu\) for \(\nu\)-integrable \(h\).

%% file: sections/method.tex
We now present \emph{Entropic RNOT}, the learning framework analyzed in our
work. The model is obtained by restricting the semidual objective to the
centered pullback class introduced in~\eqref{eq-centered-pullback-class}.

\paragraph{Parameterized Semidual Model.}
Let \(\varphi:K_\nu\to\mathbb R^n\) be the target-side feature map introduced
above. Let \(a_\theta\in\mathcal F\) be a Euclidean parametric model, and define
\[
h_\theta:=a_\theta\circ\varphi\in \varphi^*\mathcal F,
\qquad
g_\theta:=\mathsf C_\nu h_\theta.
\]
We use \(g_\theta\) as the target-side Schr\"odinger potential and train the
model by solving
\begin{equation}
\label{eq:training-objective-general}
\max_\theta \mathcal J_\varepsilon(g_\theta).
\end{equation}
Once \(g_\theta\) is given, the corresponding source-side potential is recovered
through the soft \(c\)-transform,
\[
f_\theta^\varepsilon:=\mathcal T_\nu^\varepsilon g_\theta.
\]

\paragraph{Stochastic Optimization.}
In practice, we optimize \(\mathcal J_\varepsilon(g_\theta)\) using minibatch
approximations. Given samples \(x_1,\dots,x_B\sim\mu\) and
\(y_1,\dots,y_B\sim\nu\), we use the empirical centering
\[
g_\theta(y_j)
=
h_\theta(y_j)-\frac1B\sum_{\ell=1}^B h_\theta(y_\ell).
\]
The companion source-side potential is approximated by
\[
f_\theta^\varepsilon(x_i)
=
-\varepsilon\log\!\left(
\frac1B\sum_{j=1}^B
\exp\!\left(
\frac{g_\theta(y_j)-c(x_i,y_j)}{\varepsilon}
\right)
\right)
\]
 and evaluated numerically with a stable log-sum-exp implementation. This yields the following
minibatch objective, which  we maximize by stochastic gradient ascent:
\[
\widehat{\mathcal J}_\varepsilon(\theta)
=
\frac1B\sum_{j=1}^B g_\theta(y_j)
+
\frac1B\sum_{i=1}^B f_\theta^\varepsilon(x_i).
\]

\paragraph{Transport Objects Induced by Entropic RNOT.}
The learned potential \(g_\theta\) induces the Gibbs plan
\begin{equation}
\label{eq:method-neural-gibbs-plan}
d\pi_\theta^\varepsilon(x,y)
=
\exp\!\left(
\frac{f_\theta^\varepsilon(x)+g_\theta(y)-c(x,y)}{\varepsilon}
\right)\,d\mu(x)\,d\nu(y).
\end{equation}
Equivalently, for each source point \(x\), the model defines the conditional law
\[
\pi_{\theta,x}^\varepsilon(dy)
=
\exp\!\left(
\frac{f_\theta^\varepsilon(x) + g_\theta(y)-c(x,y)}{\varepsilon}
\right)\,d\nu(y),
\qquad
\pi_\theta^\varepsilon(dx,dy)
=
\mu(dx)\,\pi_{\theta,x}^\varepsilon(dy).
\]
Thus, the learned model assigns to each source point a regularized conditional
output law \(\pi_{\theta,x}^\varepsilon\). When \((\mathcal M,g)\) is
Cartan--Hadamard and \(\pi_{\theta,x}^\varepsilon\in\mathcal P_2(\mathcal M)\)
for \(\mu\)-almost every \(x\), we denote the induced barycentric surrogate by
\(\widehat T_\theta^\varepsilon:=T_{\pi_\theta^\varepsilon}\). If
\((\mathcal M,g)\) is complete and stochastically complete, we denote the
heat-smoothed surrogate by \(Q_{\theta,t}^\varepsilon:=Q_{\pi_\theta^\varepsilon,t}\)
for \(t>0\).

%% file: sections/theory.tex
We now show that Entropic RNOT is sufficiently expressive to recover
the entropic coupling and the transport surrogates extracted from its Gibbs
conditionals. To keep the argument concrete, we work in the compact-support
regime, which provides a convenient sufficient setting for dual attainment,
uniform approximation of Schr\"odinger potentials, and stability of the induced
Gibbs plans.
Proofs from this section are given in
Appendix~\ref{app-proof-recovery-entropic-plan},
Appendix~\ref{app:proof-main-fixed-eps-map}, and
Appendix~\ref{app:proof-main-fixed-eps-heat}.

Accordingly, throughout this section, we assume that
\(\mu,\nu\in\mathcal P(\mathcal M)\) have compact supports
\[
K_\mu:=\operatorname{spt}(\mu),
\qquad
K_\nu:=\operatorname{spt}(\nu),
\]
that \(\mathcal F\subset C(\mathbb R^n,\mathbb R)\) is dense under the ucc
topology, and that the feature map
\(\varphi:K_\nu\to\mathbb R^n\)
is continuous and injective. Since \(K_\nu\) is compact, \(\varphi(K_\nu)\) is
compact as well, so ucc-density of \(\mathcal F\) implies uniform density on
\(\varphi(K_\nu)\). Consequently, the centered pullback class
\(\mathsf C_\nu(\varphi^*\mathcal F)\) is dense in the centered subspace of
\(C(K_\nu)\) under the uniform norm (see Proposition~\ref{prop:potential-approximation}).

\paragraph{Recovering the Entropic Coupling.}
Our first result below shows that for each fixed \(\varepsilon>0\), the class
\(\mathsf C_\nu(\varphi^*\mathcal F)\) is rich enough to recover the entropic optimal coupling through
the semidual objective.

\begin{theorem}[Recovery of the Entropic Coupling]
\label{thm:main-fixed-eps-plan}
For fixed \(\varepsilon>0\), there exists a sequence
\((g_m)_{m\in\mathbb N}\subset \mathsf C_\nu(\varphi^*\mathcal F)\) such that
\[
\mathcal J_\varepsilon(g_m)\xrightarrow[m\to\infty]{}\mathrm{OT}_\varepsilon.
\]
Let \(\pi_m^\varepsilon\) denote the Gibbs plan induced by \(g_m\). Then,
as \(m\to\infty\),
\[
\mathrm{KL}(\pi_\varepsilon^\star \,\|\, \pi_m^\varepsilon)\to 0,
\qquad
\|\pi_m^\varepsilon-\pi_\varepsilon^\star\|_{\mathrm{TV}}\to 0,
\qquad
\pi_m^\varepsilon\rightharpoonup \pi_\varepsilon^\star .
\]
\end{theorem}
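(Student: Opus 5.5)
The plan is to approximate the centered optimal target potential uniformly, then translate the resulting gap in the semidual objective into a KL gap between the Gibbs plans. Total variation and weak convergence then follow from Pinsker's inequality.

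\emph{Step 1 (construction).} By Proposition~\ref{prop:dual-attainment-review}, in the compact-support setting we may choose a continuous representative of \(g_\varepsilon^\star\) on \(K_\nu\). We normalize it by \(\int g_\varepsilon^\star\,d\nu=0\); this is possible because potentials are unique only up to constants and \(\mathcal J_\varepsilon\) is invariant under \(g\mapsto g+c\). Proposition~\ref{prop:potential-approximation} then gives \(g_m\in\mathsf C_\nu(\varphi^*\mathcal F)\) with \(\delta_m:=\|g_m-g_\varepsilon^\star\|_{\infty,K_\nu}\to 0\).

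\emph{Step 2 (objective convergence).} The soft \(c\)-transform is \(1\)-Lipschitz in sup norm: if \(g\le g'+\delta\), then \(\mathcal T_\nu^\varepsilon g\ge \mathcal T_\nu^\varepsilon g'-\delta\), and symmetrically. Hence \(\|f_m-f_\varepsilon^\star\|_\infty\le\delta_m\), where \(f_m:=\mathcal T_\nu^\varepsilon g_m\). The normalizing integrals lie in \((0,\infty)\) because \(c\) is bounded on \(K_\mu\times K_\nu\). It follows that \(|\mathcal J_\varepsilon(g_m)-\mathcal J_\varepsilon(g_\varepsilon^\star)|\le 2\delta_m\). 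Since \(\mathcal J_\varepsilon(g_\varepsilon^\star)=\mathrm{OT}_\varepsilon\) by Proposition~\ref{prop:semidual-reduction-review}, we get \(\mathcal J_\varepsilon(g_m)\to\mathrm{OT}_\varepsilon\).

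\emph{Step 3 (KL identity).} Write \(\pi_m^\varepsilon\) for the Gibbs plan of \(g_m\). It has first marginal exactly \(\mu\), by the definition of \(f_m\), and density \(e^{(f_m\oplus g_m-c)/\varepsilon}\) relative to \(\mu\otimes\nu\). Both plans have strictly positive bounded densities with respect to \(\mu\otimes\nu\), so the computation below is legitimate. Taking the ratio of densities and integrating against \(\pi_\varepsilon^\star\), whose marginals are \(\mu\) and \(\nu\), gives
\[
\varepsilon\,\mathrm{KL}(\pi_\varepsilon^\star\,\|\,\pi_m^\varepsilon)
=\int (f_\varepsilon^\star-f_m)\,d\mu+\int (g_\varepsilon^\star-g_m)\,d\nu
=\mathcal J_\varepsilon(g_\varepsilon^\star)-\mathcal J_\varepsilon(g_m).
\]
This yields both \(\mathrm{KL}(\pi_\varepsilon^\star\,\|\,\pi_m^\varepsilon)\le 2\delta_m/\varepsilon\to0\) and, as a byproduct, \(\mathcal J_\varepsilon(g_m)\le\mathrm{OT}_\varepsilon\). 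Alternatively, the sup-norm bound on the log-density gives directly \(\mathrm{KL}\le 2\delta_m/\varepsilon\).

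\emph{Step 4 (other modes of convergence).} Pinsker's inequality gives \(\|\pi_m^\varepsilon-\pi_\varepsilon^\star\|_{\mathrm{TV}}\le\sqrt{2\,\mathrm{KL}}\to0\). Convergence in total variation implies convergence of integrals of bounded measurable functions, and in particular weak convergence.

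\emph{Expected obstacle.} The main delicate point is Step 1. We need a continuous, everywhere-defined representative of \(g_\varepsilon^\star\) on \(K_\nu\), so that uniform density applies, and we need to check that the centering inside \(\mathsf C_\nu\) is compatible with the normalization of \(g_\varepsilon^\star\). Continuity should come from the canonical representative \(g_\varepsilon^\star=\mathcal T_\mu^\varepsilon f_\varepsilon^\star\), which is continuous because \(c\) is uniformly continuous on compacts. Centering is harmless: \(\|\mathsf C_\nu h-g_\varepsilon^\star\|_\infty\le 2\|h-g_\varepsilon^\star\|_\infty\) when \(\int g_\varepsilon^\star\,d\nu=0\).
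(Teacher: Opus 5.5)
Your proposal is correct and follows essentially the same route as the paper: uniform approximation of the normalized continuous potential through the centered pullback class, the $1$-Lipschitz sup-norm stability of the soft $c$-transform, the gap identity $\varepsilon\,\mathrm{KL}(\pi_\varepsilon^\star\,\|\,\pi_m^\varepsilon)=\mathcal J_\varepsilon(g_\varepsilon^\star)-\mathcal J_\varepsilon(g_m)$, and then Pinsker plus the fact that TV convergence implies weak convergence. The differences are cosmetic and do not affect the conclusion: the paper uses that both potentials are centered to get the sharper objective-gap bound $\delta_m$ instead of your $2\delta_m$, and your Pinsker constant $\sqrt{2\,\mathrm{KL}}$ is looser than the $\sqrt{\mathrm{KL}/2}$ valid for the probabilistic TV convention the paper uses.
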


\paragraph{Recovering and Stabilizing Transport Surrogates.}
Beyond recovery of the entropic coupling itself, the same approximation
mechanism also controls the transport surrogates induced by Entropic RNOT and
introduced in Section~\ref{sec:background}. These surrogate results should be understood as
stable consequences of plan recovery. In the Cartan--Hadamard setting, the
entropic barycentric projection is recovered in \(L^2(\mu)\).

\begin{proposition}[Recovery of the Entropic Barycentric Surrogate]
\label{thm:main-fixed-eps-map}
Assume in addition that \((\mathcal M,g)\) is Cartan--Hadamard, and fix
\(\varepsilon>0\). Then there exists a sequence
\((g_m)_{m\in\mathbb N}\subset \mathsf C_\nu(\varphi^*\mathcal F)\) as in
Theorem~\ref{thm:main-fixed-eps-plan}. Let \(\pi_m^\varepsilon\) denote the
Gibbs plan induced by \(g_m\). If
\[
T_\varepsilon:=T_{\pi_\varepsilon^\star},
\qquad
\widehat T_m^\varepsilon:=T_{\pi_m^\varepsilon},
\]
then
\[
\widehat T_m^\varepsilon \to T_\varepsilon
\qquad\text{in }L^2(\mu).
\]
Hence, \(\widehat T_m^\varepsilon\to T_\varepsilon\) also in \(L^1(\mu)\).
\end{proposition}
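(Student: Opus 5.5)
We take the sequence \((g_m)\) from Theorem~\ref{thm:main-fixed-eps-plan}. We then show pointwise (\(\mu\)-a.e.) convergence of the barycentres of the conditional laws, and upgrade this to \(L^2(\mu)\) convergence by dominated convergence, using compactness of \(K_\nu\).

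\textbf{Step 1: strong convergence of the conditionals.} Write \(\pi^\star_{\varepsilon,x}(dy)=e^{(f^\star(x)+g^\star(y)-c(x,y))/\varepsilon}\nu(dy)\) and \(\pi^\varepsilon_{m,x}(dy)=e^{(f_m(x)+g_m(y)-c(x,y))/\varepsilon}\nu(dy)\), with \(f_m=\mathcal T_\nu^\varepsilon g_m\). TV convergence of the plans from Theorem~\ref{thm:main-fixed-eps-plan} gives \(\int\|\pi^\varepsilon_{m,x}-\pi^\star_{\varepsilon,x}\|_{\mathrm{TV}}\,\mu(dx)\to0\), because the first marginals coincide and the TV distance of the joint laws equals the \(\mu\)-average of the conditional TV distances. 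Passing to a subsequence gives \(\|\pi^\varepsilon_{m,x}-\pi^\star_{\varepsilon,x}\|_{\mathrm{TV}}\to0\) for \(\mu\)-a.e.\ \(x\).

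Alternatively, and more cleanly, the sequence \(g_m\) constructed in the theorem presumably converges uniformly on \(K_\nu\) to \(g^\star_\varepsilon\) (via Proposition~\ref{prop:potential-approximation}). In that case \(f_m\to f^\star\) uniformly on \(K_\mu\), since the soft \(c\)-transform is \(1\)-Lipschitz in sup-norm. Then the densities converge uniformly in \((x,y)\in K_\mu\times K_\nu\), which gives conditional TV convergence for every \(x\) and removes the need for a subsequence. I would use this route if the theorem's proof supplies uniform convergence; otherwise I would use the subsequence argument and finish with the standard sub-subsequence trick.

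\textbf{Step 2: continuity of the barycentre.} On a Cartan--Hadamard manifold, \(z\mapsto F_\rho(z):=\tfrac12\int d(z,y)^2\rho(dy)\) is \(1\)-strongly geodesically convex, so \(\operatorname{bar}(\rho)\) is unique. Moreover, for \(\rho\) supported in \(K_\nu\), the barycentre lies in the closed geodesic convex hull of \(K_\nu\), which is contained in a closed ball \(\bar B\) of radius \(R\) and is compact by Hopf--Rinow. Strong convexity gives
\[
\tfrac12 d(\operatorname{bar}\rho,\operatorname{bar}\rho')^2\le F_\rho(\operatorname{bar}\rho')-F_\rho(\operatorname{bar}\rho).
\]
Adding the symmetric inequality yields
\[
d(\operatorname{bar}\rho,\operatorname{bar}\rho')^2\le (F_\rho-F_{\rho'})(\operatorname{bar}\rho')-(F_\rho-F_{\rho'})(\operatorname{bar}\rho)\le 2\sup_{z\in\bar B}|F_\rho(z)-F_{\rho'}(z)|\le C_R\|\rho-\rho'\|_{\mathrm{TV}},
\]
since \(\tfrac12 d(z,y)^2\le C\) on \(\bar B\times K_\nu\). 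This gives the Hölder estimate \(d(\widehat T^\varepsilon_m(x),T_\varepsilon(x))^2\le C\|\pi^\varepsilon_{m,x}-\pi^\star_{\varepsilon,x}\|_{\mathrm{TV}}\).

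\textbf{Step 3: integrate.} Integrating the Hölder estimate against \(\mu\) and using Step 1 gives
\[
\int d(\widehat T^\varepsilon_m,T_\varepsilon)^2\,d\mu\le C\int\|\pi^\varepsilon_{m,x}-\pi^\star_{\varepsilon,x}\|_{\mathrm{TV}}\,\mu(dx)\le 2C\|\pi^\varepsilon_m-\pi^\star_\varepsilon\|_{\mathrm{TV}}\to0.
\]
This bound holds directly, with no subsequence needed, which is \(L^2(\mu)\) convergence. The \(L^1(\mu)\) convergence then follows from Jensen's (or Cauchy--Schwarz) inequality since \(\mu\) is a probability measure.

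\textbf{Main obstacle.} The technical point needing most care is Step 2: verifying the strong-convexity inequality for the barycentric functional on a Cartan--Hadamard space, and confirming that the barycentres stay in a fixed compact set. The latter follows from the projection onto closed convex sets being \(1\)-Lipschitz in nonpositive curvature. Measurability of \(x\mapsto\operatorname{bar}(\pi_x)\) should also be checked; it follows from the continuity established in Step 2 together with measurability of the disintegration.
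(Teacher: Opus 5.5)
Your proof is correct, but it reaches the pointwise estimate through a different key lemma than the paper. Both arguments take the sequence from Theorem~\ref{thm:main-fixed-eps-plan} and use the fiberwise identity $\|\pi-\widetilde\pi\|_{\mathrm{TV}}=\int\|\pi_x-\widetilde\pi_x\|_{\mathrm{TV}}\,\mu(dx)$, so, as you note, neither a subsequence nor uniform convergence of $g_m$ is needed.

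\emph{The paper's route.} It invokes Sturm's $W_1$-contraction of barycenters in global NPC spaces, $d(\operatorname{bar}\pi_x,\operatorname{bar}\widetilde\pi_x)\le W_1(\pi_x,\widetilde\pi_x)\le D_\nu\|\pi_x-\widetilde\pi_x\|_{\mathrm{TV}}$. This gives a linear $L^1(\mu)$ bound. The $L^2$ bound then follows from $d(T_\pi(x),T_{\widetilde\pi}(x))\le 2D_\nu$ and $a^2\le 2D_\nu a$.

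\emph{Your route.} You use only the variance inequality, applied at both minimizers and added. This is the standard argmin-stability argument for a $1$-strongly convex objective under a uniformly small perturbation. It is more elementary, avoids the contraction theorem, and carries over to any setting where the barycentric functional is uniformly strongly convex. It gives $\int d(\widehat T_m^\varepsilon,T_\varepsilon)^2\,d\mu=O(D_\nu^2\|\pi_m^\varepsilon-\pi_\varepsilon^\star\|_{\mathrm{TV}})$ in one step, the same order as the paper. However, the pointwise dependence is only H\"older-$\tfrac12$ rather than Lipschitz, so your $L^1$ rate via Cauchy--Schwarz is $O(\|\pi_m^\varepsilon-\pi_\varepsilon^\star\|_{\mathrm{TV}}^{1/2})$ instead of linear. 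For the confinement step, the variance inequality at a base point $y_0\in K_\nu$ gives $d(\operatorname{bar}\rho,y_0)\le D_\nu$ more directly than projections.

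\emph{Measurability needs one fix.} TV-continuity of $\rho\mapsto\operatorname{bar}\rho$ does not by itself combine with measurability of the disintegration. The reason is that $x\mapsto\pi_x$ is only known to be measurable for the weak Borel $\sigma$-algebra on $\mathcal P(K_\nu)$, which is in general strictly smaller than the TV Borel $\sigma$-algebra. Your own bound repairs this. The family $\{\tfrac12 d(z,\cdot)^2\}_{z\in\bar B}$ is uniformly Lipschitz on $K_\nu$, so $d(\operatorname{bar}\rho,\operatorname{bar}\rho')^2\le 2\sup_{\bar B}|F_\rho-F_{\rho'}|$ also yields $W_1$-H\"older, hence weak, continuity of the barycenter. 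This is exactly the role Sturm's theorem plays in the paper's measurability lemma. Alternatively, for Gibbs kernels with continuous potentials, $x\mapsto\pi_x$ is TV-continuous on $K_\mu$, so $x\mapsto T(x)$ is continuous.
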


On complete stochastically complete manifolds, heat smoothing provides a
canonical Riemannian way to turn possibly atomic conditional target laws, such
as those obtained from finite samples or finite target supports of a learned
plan, into absolutely continuous conditional distributions. This is useful when
one wants to work with a density-based transport surrogate rather than only a
discrete conditional law. At the level of the induced smoothed laws, the
heat-smoothed surrogate is stable at every fixed heat time \(t>0\), and its
population-level smoothing bias vanishes as \(t\downarrow0\). For any probability measure \(\pi\) on \(\mathcal M\times\mathcal M\) with
first marginal \(\mu\), written as
\(\pi(dx,dy)=\mu(dx)\pi_x(dy)\), we define for the corresponding heat-smoothed joint measure:
\[
\Pi_{\pi,t}(dx,dz):=\mu(dx)\,Q_{\pi,t}(x,dz)
\]

\begin{proposition}[Stability of the Heat-Smoothed Surrogate]
\label{prop:main-fixed-eps-heat}
Assume in addition that \((\mathcal M,g)\) is complete and stochastically
complete, and fix \(\varepsilon>0\). Let
\((g_m)_{m\in\mathbb N}\subset\mathsf C_\nu(\varphi^*\mathcal F)\) be the
approximating sequence from Theorem~\ref{thm:main-fixed-eps-plan}, with induced
Gibbs plans \(\pi_m^\varepsilon\). For \(t>0\), define the heat-smoothed
joint measures
\[
\Pi_{m,t}^\varepsilon:=\Pi_{\pi_m^\varepsilon,t},
\qquad
\Pi_{\varepsilon,t}^\star:=\Pi_{\pi_\varepsilon^\star,t}.
\]
Then \(Q_{\pi_m^\varepsilon,t}(x,\cdot)\) and
\(Q_{\pi_\varepsilon^\star,t}(x,\cdot)\) are absolutely continuous with respect
to \(\mathrm{vol}_g\) for \(\mu\)-almost every \(x\), and
\[
\|\Pi_{m,t}^\varepsilon-\Pi_{\varepsilon,t}^\star\|_{\mathrm{TV}}
\le
\|\pi_m^\varepsilon-\pi_\varepsilon^\star\|_{\mathrm{TV}}
\xrightarrow[m\to\infty]{}0.
\]
Consequently, for every fixed \(t>0\),
\[
\Pi_{m,t}^\varepsilon \rightharpoonup \Pi_{\varepsilon,t}^\star
\qquad (m\to\infty).
\]
Moreover,
\[
\Pi_{\varepsilon,t}^\star \rightharpoonup \pi_\varepsilon^\star
\qquad (t\downarrow0),
\]
and therefore, for any sequence \(t_m\downarrow0\),
\[
\Pi_{m,t_m}^\varepsilon \rightharpoonup \pi_\varepsilon^\star .
\]
\end{proposition}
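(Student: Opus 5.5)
The plan is to view heat smoothing as a Markov kernel acting on the second coordinate, and then use the standard fact that Markov kernels contract total variation. Define
\[
H_t(x,y;dx',dz):=\delta_x(dx')\,p_t(y,z)\,\mathrm{vol}_g(dz).
\]
By stochastic completeness, \(\int p_t(y,z)\,\mathrm{vol}_g(dz)=1\) for every \(y\), so \(H_t\) is a Markov kernel on \(\mathcal M\times\mathcal M\). Fubini then gives \(\Pi_{\pi,t}=\pi H_t\) for any \(\pi\) with first marginal \(\mu\). Absolute continuity is also immediate from Fubini. For \(\mu\)-a.e.\ \(x\), the conditional \(\pi_x\) is a probability measure. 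Hence \(z\mapsto\int p_t(y,z)\,\pi_x(dy)\) is a nonnegative measurable function with total integral \(1\), and so \(Q_{\pi,t}(x,\cdot)\ll\mathrm{vol}_g\).

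\textbf{TV contraction.} Write the signed measure \(\sigma=\pi_m^\varepsilon-\pi_\varepsilon^\star\), which has Jordan decomposition \(\sigma^+-\sigma^-\). Then
\[
|\sigma H_t|(A)\le(\sigma^++\sigma^-)H_t(A),
\]
and taking total mass gives \(\|\sigma H_t\|_{\mathrm{TV}}\le\|\sigma\|_{\mathrm{TV}}\). This holds under either normalization convention, provided the same one is used on both sides. Both plans have first marginal \(\mu\): for \(\pi_\varepsilon^\star\) this is because it is a coupling, and for \(\pi_m^\varepsilon\) it is because \(f_m=\mathcal T_\nu^\varepsilon g_m\) normalizes each conditional. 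The linearity of \(\pi\mapsto\Pi_{\pi,t}\) is therefore legitimate. The right-hand side tends to \(0\) by Theorem~\ref{thm:main-fixed-eps-plan}. Weak convergence for fixed \(t\) follows because TV convergence implies convergence of integrals against bounded continuous functions.

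\textbf{Limit \(t\downarrow0\).} This is the step I expect to be the main obstacle, because it needs a property of the heat kernel that is not purely formal: \(\int p_t(y,z)\,\psi(z)\,\mathrm{vol}_g(dz)\to\psi(y)\) pointwise for every bounded continuous \(\psi\) on a complete manifold. This is the weak convergence \(p_t(y,\cdot)\mathrm{vol}_g\rightharpoonup\delta_y\). I would argue it in two steps. For \(\psi\in C_c^\infty\), one has \(P_t\psi-\psi=\int_0^t P_s\Delta\psi\,ds\), which is bounded by \(t\|\Delta\psi\|_\infty\). Tightness then comes from stochastic completeness together with this convergence applied to a compactly supported cutoff equal to \(1\) near \(y\): the mass of \(p_t(y,\cdot)\) outside a neighborhood of \(y\) tends to \(0\). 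Alternatively, one can cite the standard fact that Brownian motion has continuous paths and, by stochastic completeness, no explosion.

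Given this, for bounded continuous \(F\) on \(\mathcal M\times\mathcal M\),
\[
\int F\,d\Pi_{\varepsilon,t}^\star=\int\!\!\int\Bigl(\int F(x,z)\,p_t(y,z)\,\mathrm{vol}_g(dz)\Bigr)\pi_\varepsilon^\star(dx,dy).
\]
For each fixed \(x\), \(F(x,\cdot)\) is bounded and continuous, so the inner integral converges to \(F(x,y)\). The inner integral is also bounded by \(\|F\|_\infty\), so dominated convergence gives \(\Pi_{\varepsilon,t}^\star\rightharpoonup\pi_\varepsilon^\star\).

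\textbf{Diagonal statement.} For \(t_m\downarrow0\), split
\[
\Bigl|\int F\,d\Pi_{m,t_m}^\varepsilon-\int F\,d\pi_\varepsilon^\star\Bigr|\le\|F\|_\infty\|\pi_m^\varepsilon-\pi_\varepsilon^\star\|_{\mathrm{TV}}+\Bigl|\int F\,d\Pi_{\varepsilon,t_m}^\star-\int F\,d\pi_\varepsilon^\star\Bigr|.
\]
The first term uses the uniform-in-\(t\) TV bound from the contraction step. Both terms tend to \(0\) by the two previous steps.
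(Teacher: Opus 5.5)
Your proposal is correct and follows the paper's proof essentially step for step: heat smoothing as a conservative Markov kernel on the second variable, then TV contraction (you use the Jordan decomposition, the paper uses $H_C-\tfrac12$ and the variational formula), then pointwise small-time convergence on $C_b(\mathcal M)$ extended from $C_c^\infty$ by a cutoff and stochastic completeness (you derive the $C_c^\infty$ case from the Duhamel bound $|P_t\psi-\psi|\le t\|\Delta\psi\|_\infty$, where the paper cites Grigor'yan), and finally dominated convergence and the triangle-inequality diagonal argument. One small slip: under the paper's probabilistic TV convention, the first term of your diagonal estimate should be $2\|F\|_\infty\|\pi_m^\varepsilon-\pi_\varepsilon^\star\|_{\mathrm{TV}}$ rather than $\|F\|_\infty\|\pi_m^\varepsilon-\pi_\varepsilon^\star\|_{\mathrm{TV}}$, which does not affect the conclusion.
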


In particular, Proposition~\ref{prop:main-fixed-eps-heat} applies to complete
Riemannian manifolds with Ricci curvature bounded from below; see
Appendix~\ref{app:heat-smoothed-stochastic-completeness} for further details.
This includes compact Riemannian manifolds, Euclidean spaces, hyperbolic
spaces, and products such as
\(\mathrm{SE}(3)\simeq \mathrm{SO}(3)\times\mathbb R^3\) equipped with the
product metric used in our experiments. The same appendix also explains why the
criterion applies to the SPD cone endowed with its standard affine-invariant
Riemannian metric.

%% file: sections/experiments.tex
We evaluate Entropic RNOT in three complementary settings. First, we test transport fidelity on controlled intrinsic-geometry benchmarks, measuring how accurately the learned plan and map recover a discrete manifold Sinkhorn reference. Second, we study computational scaling in time, memory, and inference throughput as the support size increases. Third, we demonstrate amortized out-of-sample inference on a real $\mathrm{SE}(3)$ task: crystal-free refinement of protein--ligand docking poses from CrossDocked2020~\cite{Francoeur2020-mc}. Full implementation details, dataset construction, and supplementary qualitative examples are deferred to Appendix~\ref{app:implementation-details}.

\begin{figure}[t]
    \centering
    \includegraphics[width=\textwidth]{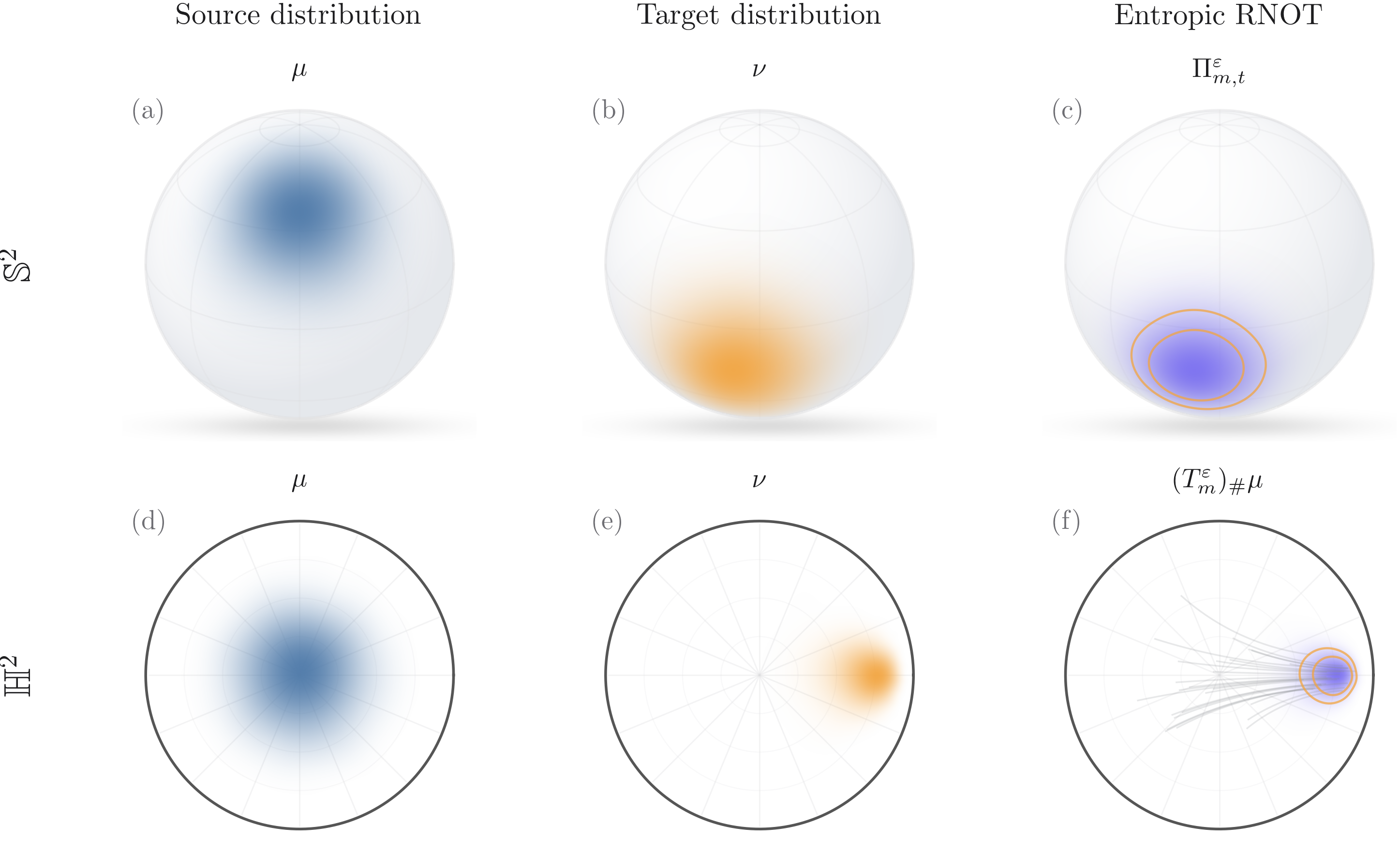}
    \caption{Qualitative transport on $\mathbb{S}^2$ (top) and $\mathbb{H}^2$ (bottom). Each row shows the source distribution $\mu$, the target distribution $\nu$, and the corresponding method output. On $\mathbb{S}^2$, the right panel shows the heat-smoothed output distribution $\hat{\nu}_{\mathrm{heat}}$. On $\mathbb{H}^2$, the right panel shows the pushforward $(\hat T_{\mathrm{bar}})_{\#}\mu$ induced by the barycentric transport map. Orange contours indicate the target distribution, and sparse geodesic curves are shown only in the $\mathbb{H}^2$ output panel to visualize representative barycentric map displacements. In both settings, the method output aligns closely with the target.}
    \label{fig:benchmark_transport}
\end{figure}

\subsection{Intrinsic-Geometry Benchmarks}
\label{sec:exp-synth-geometry}

We first evaluate Entropic RNOT on controlled synthetic benchmarks across five
common geometries: positive-curvature manifolds ($\mathbb{S}^2$, $\mathrm{SO}(3)$), non-positively curved Cartan--Hadamard manifolds ($\mathrm{SPD}(3)$, $\mathbb{H}^2$), and a structured product manifold ($\mathrm{SE}(3)$). Source and target distributions are wrapped normals, with centers chosen to induce realistic geodesic separations in regimes where ambient approximations introduce substantial distortion. The full construction is described in Appendix~\ref{app:implementation-details}.

We compare against two established baselines: ambient Euclidean OT, which ignores the manifold structure entirely, and tangent-space OT, which linearizes the geometry around a single reference point. A discrete manifold Sinkhorn solution, computed directly on the sampled support, serves as the numerical reference. We report plan-level discrepancies from this reference, using KL and conditional $W_1$, together with map-level errors, using ambient $L^2$ and endpoint geodesic error.

Table~\ref{tab:synthetic-benchmarks} summarizes the results; full metrics are reported in Table~\ref{tab:synthetic-manifold-benchmarks}. Our method consistently recovers the reference plan more accurately than all baselines, with the largest gains on $\mathrm{SPD}(3)$, $\mathrm{SE}(3)$, and $\mathbb{H}^2$, where intrinsic geometry plays the strongest role. Appendix~\ref{tab:chnot-vs-rnot-s2} shows that Entropic RNOT also matches non-entropic RNOT in approximation quality on $\mathbb{S}^2$ while being substantially faster and more memory efficient.

\begin{table}[t]
\centering
\small
\caption{Intrinsic-geometry transport benchmarks. Plan KL and endpoint geodesic error are reported relative to a discrete manifold Sinkhorn reference. Lower is better.}
\label{tab:synthetic-benchmarks}
\setlength{\tabcolsep}{4pt}
\begin{tabular}{lccccc}
\toprule
Method & $\mathbb{S}^2$ & $\mathrm{SO}(3)$ & $\mathrm{SPD}(3)$ & $\mathrm{SE}(3)$ & $\mathbb{H}^2$ \\
\midrule
\multicolumn{6}{l}{\textit{Plan KL} $\downarrow$} \\[2pt]
Ambient Euclidean & 0.70 & 1.45 & 1.85 & 1.72 & 0.96 \\
Tangent-space     & 0.48 & 0.15 & 1.23 & 1.32 & 0.10 \\
Entropic RNOT              & \textbf{0.05} & \textbf{0.07} & \textbf{0.01} & \textbf{0.06} & \textbf{0.01} \\
\midrule
\multicolumn{6}{l}{\textit{Endpoint error} $\downarrow$} \\[2pt]
Ambient Euclidean & 0.21 & 0.78 & 0.61 & 0.64 & 0.24 \\
Tangent-space     & 0.21 & 0.25 & 0.43 & 0.51 & 0.11 \\
Entropic RNOT             & \textbf{0.08} & \textbf{0.24} & \textbf{0.04} & \textbf{0.12} & \textbf{0.04} \\
\bottomrule
\end{tabular}
\end{table}

\subsection{Scalability}
\label{sec:exp-scaling}

A primary motivation for our semidual formulation is computational. Discrete manifold Sinkhorn forms and requires a $N \times N$ geodesic cost matrix, incurring $O(N^2)$ memory and super-linear solve time. Our method replaces this with minibatch optimization of a neural parametric potential and batched inference at test time, both independent of the total support size. We compare wall-clock training time, peak GPU memory, and inference throughput as $N$ grows from 128 to 32,768 across all five geometries; the full description is given in Appendix~\ref{app:implementation-details}.

Figure~\ref{fig:scaling} demonstrates the expected computational benefits. Neural training time and memory remain constant in $N$ (${\approx}24$s and ${\approx}11$MB on $\mathbb{S}^2$), driven only by the batch size. Sinkhorn solve time benefits from GPU parallelism\cite{Cuturi2022-ms} and grows sub-quadratically in practice, but its $O(N^2)$ memory footprint is the binding constraint, reaching 34 GB at $N{=}32{,}768$ and exceeding the 36 GB device budget on several manifolds. Inference throughput of the learned potential scales linearly with $N$ (up to a few${\times}10^6$ samples/s), reflecting GPU parallelism over a fixed-cost forward pass. These throughput rates make amortized inference practical: a single trained potential can be evaluated on new source samples at negligible marginal cost, without resolving a discrete OT problem for each query. These results show that our method yields a reusable transport model whose cost is decoupled from support size.

\begin{figure*}[t]
\centering
\includegraphics[width=\textwidth]{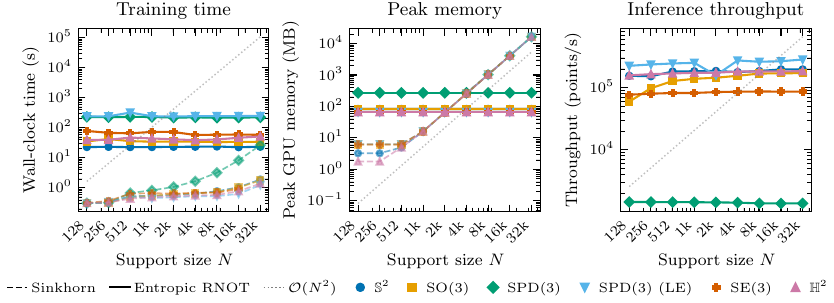}
\caption{Scalability with increasing support size. Left: training wall-clock
time. Center: peak GPU memory. Right: inference throughput. Entropic RNOT
scales more favorably than discrete manifold Sinkhorn, especially in the
large-support regime.}
\label{fig:scaling}
\end{figure*}

\subsection{Real-World Pose Refinement on \texorpdfstring{\(\mathrm{SE}(3)\)}{SE(3)} for Protein--Ligand Docking}
\label{sec:exp-real-se3}

\begin{table}[t]
\centering
\small
\caption{Post-docking pose refinement on $\mathrm{SE}(3)$ (CrossDocked2020, 29 held-out complexes). Per-complex top-1 metrics with 95\% bootstrap CIs. Crystal pose used for evaluation only.}
\label{tab:se3-refinement}
\begin{tabular}{lcccc}
\toprule
Method & RMSD (\AA) $\downarrow$ & Median (\AA) $\downarrow$ & @2\AA\ $\uparrow$ & @5\AA\ $\uparrow$ \\
\midrule
No refinement & 11.24$_{\scriptstyle[8.37,14.14]}$ & 10.39$_{\scriptstyle[6.95,12.39]}$ & 10.3\%$_{\scriptstyle[1.6,20.7]}$ & 24.1\%$_{\scriptstyle[10.3,43.2]}$ \\
GNINA --minimize & 11.03$_{\scriptstyle[8.22,14.89]}$ & 8.68$_{\scriptstyle[4.51,15.27]}$ & 0.0\%$_{\scriptstyle[0.0,0.0]}$ & 38.9\%$_{\scriptstyle[22.2,64.0]}$ \\
Sinkhorn SE(3) & 16.04$_{\scriptstyle[12.15,20.30]}$ & 11.91$_{\scriptstyle[4.29,26.28]}$ & 17.2\%$_{\scriptstyle[6.9,32.8]}$ & 34.5\%$_{\scriptstyle[13.6,50.1]}$ \\
Entropic RNOT & \textbf{3.47$_{\scriptstyle[1.26,6.58]}$} & \textbf{1.42$_{\scriptstyle[1.30,1.87]}$} & \textbf{75.9\%$_{\scriptstyle[60.3,89.7]}$} & \textbf{93.1\%$_{\scriptstyle[86.2,100.0]}$} \\
\bottomrule
\end{tabular}
\end{table}

We evaluate Entropic RNOT on protein--ligand docking poses derived from
CrossDocked2020~\citep{Francoeur2020-mc}, a benchmark of 3{,}765 complexes
across 1{,}302 pocket-similarity clusters. Our goal is to study crystal-free refinement of docking pose ensembles in a rigid-pose setting. We learn an Entropic RNOT refinement map on $\mathrm{SE}(3)$ that moves geometric outliers toward the docking engine's own top-ranked binding basin, thereby acting as a docking-pose refinement or denoising procedure using only quantities available at inference time. Crystallographic poses are never used during training or inference, and are reserved exclusively for held-out evaluation. This setting should be understood as docking-pose refinement rather than full de novo docking or general flexible pose prediction. 

For each complex, we generate a docking-pose ensemble with GNINA~\citep{McNutt2025-np} and convert poses to rigid-body transforms $g_{k,m}\in\mathrm{SE}(3)$ by alignment to a canonical ligand conformer. Source and target pose sets are defined using only the docking engine's own top-ranked pose, and train/test complexes are split by pocket-similarity cluster. We train Entropic RNOT on pooled poses and extract deterministic refinement maps
by heat-smoothed mode finding. Preprocessing choices, thresholds, and hyperparameters are reported in Appendix~\ref{app:app-real-world-experiment}.

Table~\ref{tab:se3-refinement} reports per-complex top-1 metrics on 29 held-out test complexes, evaluated against the crystal pose. Entropic RNOT reduces mean top-1 RMSD from 11.24~\AA\ to 3.47~\AA\ and improves the 2~\AA\ success rate from 10.3\% to 75.9\%; the median top-1 RMSD is 1.42~\AA, indicating near-native refinement for most complexes. A per-complex discrete Sinkhorn baseline on $\mathrm{SE}(3)$, which solves entropic OT separately using each test complex's own source and target poses, performs poorly because the per-complex target sets are too small to identify a stable transport plan. This supports the need for amortized cross-complex learning. GNINA's built-in energy minimization~\citep{McNutt2025-np} yields negligible improvement, with mean RMSD remaining 11.03~\AA, suggesting that local scoring-surface refinement does not provide the global pose redistribution captured by manifold-aware optimal transport. Although this benchmark is restricted to near-rigid poses, we observe similar or better performance on the full unfiltered docking ensembles (Table~\ref{tab:se3-refinement_norigid}).

\begin{figure}
    \centering
    \includegraphics[width=0.9\linewidth]{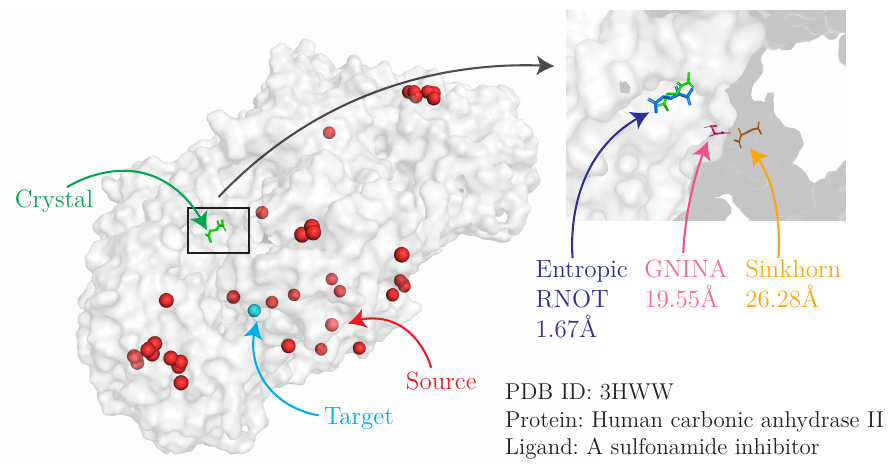}
    \caption{Post-docking pose refinement on complex 3HWW (human carbonic anhydrase II). \textbf{(a)} Ligand centroids from 40 GNINA docked poses: source (red, 39 poses) and target (cyan, 1 pose), with crystal ligand (green). \textbf{(b)} Pocket zoom: the Entropic RNOT refined pose (blue, 1.67~\AA) overlaps the crystal, while GNINA (pink, 19.6~\AA) and per-complex Sinkhorn $\mathrm{SE}(3)$ (orange, 26.3~\AA) land far from the binding site.}
    \label{fig:schematic_dock}
\end{figure}

%% file: sections/discussion.tex
We have introduced a manifold-aware neural framework for entropic optimal transport based on a semidual formulation and pullback parameterizations of Schr\"odinger potentials. Our method learns a target-side potential, recovers the induced Gibbs coupling intrinsically on the manifold, and extracts transport surrogates from its conditional laws: barycentric projections in the
Cartan--Hadamard setting and heat-smoothed conditional laws on both geodesically complete and
stochastically complete manifolds. For fixed \(\varepsilon>0\), our theory proves the ability to recover the entropic coupling, as well as \(L^2\)-recovery of barycentric surrogates when barycenters are uniquely defined, and stability with vanishing population-level bias for heat-smoothed surrogates. Empirically, the framework performs well across synthetic manifold benchmarks, scales better than
discrete manifold Sinkhorn, and is shown to be effective in practice for real \(\mathrm{SE}(3)\) protein--ligand pose refinement.

At positive temperature, however, the canonical transport object is a \emph{coupling}, not a deterministic map. Barycentric projections and heat-smoothed modes should therefore be viewed as summaries of the learned
conditional law, useful when point-valued outputs are required, rather than as replacements for the entropic coupling. In the docking experiment, this means
that our method is a refinement or denoising procedure for an existing ensemble of docked poses, not an end-to-end docking or generative pose-prediction model. Specifically, it requires candidate poses from an external docking tool and refines them toward a target basin defined by docking-generated geometry and scores. Notably, this target construction does not require crystallographic supervision.

Several limitations remain. The theory treats fixed \(\varepsilon>0\), compact supports, and approximation assumptions on the feature map and Euclidean
hypothesis class; it does not address the vanishing-regularization regime. Barycentric map guarantees require the Cartan--Hadamard setting, while outside
it barycenters may be nonunique or unstable. Computationally, performance depends on efficient geodesic-distance evaluation, stable log-sum-exp computations, and expressive features. In docking, the current unconditional model ignores receptor pocket context and the sequence and represents poses only as rigid motions on \(\mathrm{SE}(3)\), omitting torsional flexibility. Pocket-conditioned transport, product-manifold pose representations with torsions, and broader evaluation across docking benchmarks are natural directions for future work.

%% file: sections/appendix.tex
\section{Review on Entropic Riemannian Optimal Transport}
\label{app:review-eot}
\input{sections/review-riemannian-OT}
\clearpage
\newpage

\section{Review on Geometric Deep Learning}
\label{app:feature-map-review}
\input{sections/review-geometric-dl}
\clearpage
\newpage


\section{Implementation Details}
\label{app:implementation-details}
\input{sections/implementation-details}

\clearpage
\newpage

\section{Algorithms}
\label{app:algorithms}
\input{sections/algorithms}
\clearpage
\newpage

\section{Additional Experimental Results}
\label{app:additional-experimental-results}
\input{sections/app-additional-experimental-results}
\clearpage
\newpage

\section{Proofs}
\input{sections/proofs}

%% file: sections/review-riemannian-OT.tex
Throughout, \((\mathcal M,g)\) denotes a complete, possibly noncompact,
\(p\)-dimensional Riemannian manifold with geodesic distance \(d\). We write
\(\mathcal P(\mathcal M)\) for the Borel probability measures on \(\mathcal M\)
and \(\mathrm{vol}_{\mathcal M}\) for the Riemannian volume measure. Let
\(\mu,\nu\in\mathcal P(\mathcal M)\) be Borel probability measures and define
the quadratic geodesic cost
\[
c(x,y):=\frac12 d(x,y)^2,
\qquad x,y\in\mathcal M.
\]
For the entropic duality results recalled below, the standing assumption is
\[
c\in L^1(\mu\otimes\nu).
\]
This is the integrability condition used in the general theory of entropic
optimal transport; see \cite[Theorem~3.2]{nutz2022entropicot}. In the
compact-support setting, this condition is automatic. Indeed, if
\[
K_\mu:=\operatorname{spt}(\mu),
\qquad
K_\nu:=\operatorname{spt}(\nu)
\]
are compact, then \(K_\mu\times K_\nu\) is compact and \(c\) is continuous,
hence bounded, on \(K_\mu\times K_\nu\).

Whenever compact supports are assumed, every coupling
\(\pi\in\Pi(\mu,\nu)\) is supported on \(K_\mu\times K_\nu\), so the transport
problem may be localized to this compact product set. Since the geodesic
distance \(d\) is continuous on \(\mathcal M\times\mathcal M\), the quadratic
cost \(c(x,y)=\frac12 d(x,y)^2\) is continuous. Hence, by the Heine--Cantor
theorem, \(c\) is bounded and uniformly continuous on \(K_\mu\times K_\nu\).
Boundedness gives \(c\in L^1(\mu\otimes\nu)\), while uniform continuity is the
regularity assumption used below to obtain bounded uniformly continuous
Schr\"odinger potentials.

\subsection{Entropic primal, dual, and Schr\"odinger potentials}

Fix \(\varepsilon>0\). The entropically regularized transport problem is
\begin{equation}
\label{eq:eot-primal-review}
\mathrm{OT}_\varepsilon
:=
\inf_{\pi\in \Pi(\mu,\nu)}
\left\{
\int_{\mathcal M\times\mathcal M} c(x,y)\,d\pi(x,y)
+
\varepsilon\,\mathrm{KL}(\pi\mid \mu\otimes \nu)
\right\}.
\end{equation}
If \(\mu\) and \(\nu\) have compact supports \(K_\mu\) and \(K_\nu\), then this
is equivalently
\begin{equation}
\label{eq:eot-compact-primal-review}
\mathrm{OT}_\varepsilon
=
\inf_{\pi\in \Pi(\mu,\nu)}
\left\{
\int_{K_\mu\times K_\nu} c(x,y)\,d\pi(x,y)
+
\varepsilon\,\mathrm{KL}(\pi\mid \mu\otimes \nu)
\right\},
\end{equation}
because every coupling is concentrated on \(K_\mu\times K_\nu\).

For measurable \(g:\mathcal M\to\mathbb R\), define the soft \(c\)-transform
\begin{equation}
\label{eq:soft-c-transform-nu-review}
(\mathcal T_\nu^\varepsilon g)(x)
:=
-\varepsilon\log\left(
\int_{\mathcal M}
\exp\!\left(\frac{g(y)-c(x,y)}{\varepsilon}\right)\,d\nu(y)
\right),
\qquad x\in \mathcal M.
\end{equation}
Similarly, for measurable \(f:\mathcal M\to\mathbb R\), define
\begin{equation}
\label{eq:soft-c-transform-mu-review}
(\mathcal T_\mu^\varepsilon f)(y)
:=
-\varepsilon\log\left(
\int_{\mathcal M}
\exp\!\left(\frac{f(x)-c(x,y)}{\varepsilon}\right)\,d\mu(x)
\right),
\qquad y\in \mathcal M.
\end{equation}
These transforms characterize the optimal Schr\"odinger potentials through the
Schr\"odinger system
\[
f_\varepsilon^\star=\mathcal T_\nu^\varepsilon g_\varepsilon^\star,
\qquad
g_\varepsilon^\star=\mathcal T_\mu^\varepsilon f_\varepsilon^\star,
\]
stated precisely in Proposition~\ref{prop:dual-attainment-review}; see also
\cite[Section~4.1]{nutz2022entropicot}. When \(\mu\) and \(\nu\) are compactly
supported, only the restrictions of these functions to \(K_\mu\) and \(K_\nu\)
are relevant. In that case, the transforms can be written equivalently as
integrals over \(K_\nu\) and \(K_\mu\), respectively.

We use the standard pointwise representatives of the Schr\"odinger potentials.
The dual maximizers are initially defined only up to \(\mu\)- and
\(\nu\)-almost-sure equivalence. However, the Schr\"odinger system provides
canonical pointwise versions: each potential can be redefined by the
right-hand side of its soft \(c\)-transform identity, after which the system
holds everywhere on the relevant supports; see
\cite[Section~4.1]{nutz2022entropicot}. This pointwise choice is important
below because our approximation arguments take place in \(C(K_\nu)\) with the
uniform norm, rather than only in an almost-sure equivalence class.

In the compact-support setting, these pointwise representatives are regular.
Since \(c\) is bounded and uniformly continuous on \(K_\mu\times K_\nu\), the
regularity estimates for entropic OT potentials imply that the
\(\varepsilon\)-scaled Schr\"odinger potentials may be chosen bounded and
uniformly continuous on \(K_\mu\) and \(K_\nu\); see
\cite[Lemmas~4.9 and~4.11, Remark~4.12]{nutz2022entropicot}. In particular, the
normalized target-side potential used below may be chosen in \(C(K_\nu)\).

The next statement recalls the standard entropic OT duality, the associated
Schr\"odinger system, and the Gibbs form of the optimal coupling.

\begin{proposition}[Entropic duality, Schr\"odinger system, and Gibbs form]
\label{prop:dual-attainment-review}
Assume that \(c\in L^1(\mu\otimes\nu)\). Then for every \(\varepsilon>0\), the
problem \eqref{eq:eot-primal-review} admits a unique minimizer
\(\pi_\varepsilon^\star\in\Pi(\mu,\nu)\). Moreover, there exist measurable
functions
\[
f_\varepsilon^\star:\mathcal M\to\mathbb R,
\qquad
g_\varepsilon^\star:\mathcal M\to\mathbb R,
\]
unique up to the transformation
\[
(f_\varepsilon^\star,g_\varepsilon^\star)
\mapsto
(f_\varepsilon^\star+a,g_\varepsilon^\star-a),
\qquad a\in\mathbb R,
\]
such that
\begin{equation}
\label{eq:schroedinger-system-review}
f_\varepsilon^\star=\mathcal T_\nu^\varepsilon g_\varepsilon^\star
\quad \mu\text{-a.s.},
\qquad
g_\varepsilon^\star=\mathcal T_\mu^\varepsilon f_\varepsilon^\star
\quad \nu\text{-a.s.},
\end{equation}
and
\begin{equation}
\label{eq:gibbs-density-review}
\frac{d\pi_\varepsilon^\star}{d(\mu\otimes \nu)}(x,y)
=
\exp\!\left(
\frac{f_\varepsilon^\star(x)+g_\varepsilon^\star(y)-c(x,y)}{\varepsilon}
\right)
\qquad
(\mu\otimes \nu)\text{-a.s.}
\end{equation}
Equivalently,
\begin{equation}
\label{eq:dual-eot-review}
\mathrm{OT}_\varepsilon
=
\sup_{f,g}
\left\{
\int f\,d\mu+\int g\,d\nu
-
\varepsilon
\iint
\exp\!\left(\frac{f(x)+g(y)-c(x,y)}{\varepsilon}\right)
\,d\mu(x)d\nu(y)
+\varepsilon
\right\},
\end{equation}
where the supremum is taken over admissible measurable pairs and is attained at
\((f_\varepsilon^\star,g_\varepsilon^\star)\).

If, in addition, \(\mu\) and \(\nu\) have compact supports \(K_\mu\) and
\(K_\nu\), then the optimal potentials admit bounded uniformly continuous
representatives on \(K_\mu\) and \(K_\nu\). In particular, after fixing the
additive constant by a normalization such as
\[
\int_{K_\nu} g_\varepsilon^\star\,d\nu=0,
\]
the normalized target-side potential may be chosen in \(C(K_\nu)\).
\end{proposition}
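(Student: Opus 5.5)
The plan is to assemble the statement from the standard theory of entropic optimal transport, following \cite{nutz2022entropicot}, and to prove directly only the elementary pieces. There are four steps: existence and uniqueness of the primal minimizer, a weak-duality inequality with an explicit equality case, existence of a product-form density (which gives strong duality and attainment), and finally the regularity of the potentials in the compact case.

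\emph{Primal problem.} The product coupling \(\mu\otimes\nu\) is admissible and has finite value, because \(c\in L^1(\mu\otimes\nu)\) and \(\mathrm{KL}(\mu\otimes\nu\mid\mu\otimes\nu)=0\). So \(\mathrm{OT}_\varepsilon<\infty\).
\begin{itemize}
\item Since \(\mu\) and \(\nu\) are tight, \(\Pi(\mu,\nu)\) is weakly compact.
\item \(\mathrm{KL}(\cdot\mid\mu\otimes\nu)\) is weakly lower semicontinuous and strictly convex.
\item The map \(\pi\mapsto\int c\,d\pi\) is linear and lower semicontinuous, since \(c\ge 0\) is continuous.
\end{itemize}
Together these give existence and uniqueness of \(\pi_\varepsilon^\star\). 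Any minimizer is absolutely continuous with respect to \(\mu\otimes\nu\), since otherwise the KL term is infinite.

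\emph{Weak duality.} Let \(p=d\pi/d(\mu\otimes\nu)\) and set \(b=(f(x)+g(y)-c(x,y))/\varepsilon\) for an admissible pair \((f,g)\). From \(\sup_{p\ge0}\{pb-p\log p+p\}=e^{b}\) we get the pointwise bound \(p\log p\ge pb+p-e^{b}\), with equality if and only if \(p=e^{b}\). Integrating against \(\mu\otimes\nu\) and using the marginal constraints \(\int f(x)\,d\pi=\int f\,d\mu\) and \(\int g(y)\,d\pi=\int g\,d\nu\) yields
\[
\int c\,d\pi+\varepsilon\,\mathrm{KL}(\pi\mid\mu\otimes\nu)\;\ge\;\int f\,d\mu+\int g\,d\nu-\varepsilon\iint e^{b}\,d\mu\,d\nu+\varepsilon .
\]
This is weak duality for \eqref{eq:dual-eot-review}. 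Equality holds exactly when \(p=e^{b}\) almost surely.

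\emph{Strong duality and the Schrödinger system.} By the equality case, strong duality and attainment follow once we show that \(d\pi_\varepsilon^\star/d(\mu\otimes\nu)\) has the product form \(e^{(f(x)+g(y)-c(x,y))/\varepsilon}\). This is the main obstacle. In the general \(L^1\) setting I would invoke the Csiszár/Rüschendorf characterization of \(I\)-projections onto \(\Pi(\mu,\nu)\), as recorded in \cite[Theorem~4.2]{nutz2022entropicot}, rather than reprove it. In the compact case a self-contained alternative is available. There \(e^{-c/\varepsilon}\) is bounded above and below by positive constants on \(K_\mu\times K_\nu\). The Sinkhorn map \(g\mapsto\mathcal T_\mu^\varepsilon\mathcal T_\nu^\varepsilon g\) is then a strict contraction in the Hilbert projective metric (Birkhoff), or alternatively one can apply Schauder's theorem on an equicontinuous, normalized family; either route yields a fixed point.

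Given the product form, the Schrödinger system \eqref{eq:schroedinger-system-review} follows by integrating the density in \(y\) and in \(x\): the marginal constraints force \(\int e^{b}\,d\nu=1\) \(\mu\)-a.s. and \(\int e^{b}\,d\mu=1\) \(\nu\)-a.s. For uniqueness, two admissible pairs producing the same density satisfy \(f_1(x)+g_1(y)=f_2(x)+g_2(y)\) for \((\mu\otimes\nu)\)-a.e. \((x,y)\). By Fubini, \(f_1-f_2\) and \(g_2-g_1\) are then a.s. equal to a common constant.

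\emph{Compact regularity.} Redefine the potentials everywhere by the right-hand sides of the Schrödinger system. The bounds \(0\le c\le\max_{K_\mu\times K_\nu}c\) together with the normalization \(\int g\,d\nu=0\) bound \(f\) and \(g\) uniformly; this comes from comparing the log-sum-exp integrands with constants. For uniform continuity, let \(\omega\) be a modulus of continuity for \(c\) on \(K_\mu\times K_\nu\). Then \(|\mathcal T_\nu^\varepsilon g(x)-\mathcal T_\nu^\varepsilon g(x')|\le\sup_y|c(x,y)-c(x',y)|\le\omega(d(x,x'))\), and the same estimate holds for \(\mathcal T_\mu^\varepsilon\). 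Hence the normalized \(g_\varepsilon^\star\) lies in \(C(K_\nu)\), in agreement with \cite[Lemmas~4.9,~4.11]{nutz2022entropicot}.
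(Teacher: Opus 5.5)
Your proposal is correct, and it is organized quite differently from the paper's argument. The paper's proof works entirely by citation. It rewrites the objective as $\varepsilon\,\mathrm{KL}(\pi\mid R_\varepsilon)-\varepsilon\log Z_\varepsilon$ with the Schr\"odinger reference measure $R_\varepsilon\propto e^{-c/\varepsilon}\,\mu\otimes\nu$. It then imports existence, uniqueness, attainment, the Schr\"odinger system and the Gibbs form in one step from \cite[Theorem~4.7]{nutz2022entropicot}, and the compact-case regularity from \cite[Lemmas~4.9 and~4.11, Remark~4.12]{nutz2022entropicot}.

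You instead prove the elementary layers yourself:
\begin{itemize}
\item the direct method for the primal problem;
\item the pointwise bound $p\log p\ge pb+p-e^{b}$, which gives weak duality together with its exact equality case;
\item Fubini for uniqueness of the potentials up to constants;
\item log-sum-exp comparison for the bounds and for a modulus of continuity inherited from $c$.
\end{itemize}
This isolates the single nontrivial input, namely that $\pi_\varepsilon^\star$ has a product-form density. You cite that fact in the general $L^1$ case and prove it by a Birkhoff--Hilbert-metric or Schauder fixed point in the compact case.

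What each route buys:
\begin{itemize}
\item The paper's route is shorter and covers the general $L^1$ setting at once. That includes the integrability $f_\varepsilon^\star\in L^1(\mu)$, $g_\varepsilon^\star\in L^1(\nu)$, which your equality-case argument also needs and must take from the cited characterization.
\item Your route is essentially self-contained in exactly the compact regime used in Section~\ref{sec-theory}. It gives explicit bounds such as $\operatorname{osc} g_\varepsilon^\star\le\max_{K_\mu\times K_\nu}c$ and the continuity modulus $\omega$. It relies on the same comparison trick as Lemma~\ref{lem:supnorm-stability}.
\end{itemize}

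One small step should be added to the fixed-point route. A fixed point in the Hilbert metric, or of the centered map under Schauder, only gives $\mathcal T_\mu^\varepsilon\mathcal T_\nu^\varepsilon g=g+\lambda$ for some constant $\lambda$. To get $\lambda=0$, consider the plan with density $e^{(\mathcal T_\nu^\varepsilon g(x)+g(y)-c(x,y))/\varepsilon}$ with respect to $\mu\otimes\nu$. Its first marginal is $\mu$, so it has mass $1$. Its second-marginal density with respect to $\nu$ is the constant $e^{-\lambda/\varepsilon}$, so that mass is also $e^{-\lambda/\varepsilon}$, which forces $\lambda=0$.
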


\begin{proof}
Define the static Schr\"odinger reference measure
\[
R_\varepsilon(dx,dy)
:=
Z_\varepsilon^{-1}
\exp\!\left(-\frac{c(x,y)}{\varepsilon}\right)\mu(dx)\nu(dy),
\qquad
Z_\varepsilon
:=
\iint
\exp\!\left(-\frac{c(x,y)}{\varepsilon}\right)
\,d\mu(x)d\nu(y).
\]
Then, for every \(\pi\in\Pi(\mu,\nu)\),
\[
\int c\,d\pi+\varepsilon\,\mathrm{KL}(\pi\mid\mu\otimes\nu)
=
\varepsilon\,\mathrm{KL}(\pi\mid R_\varepsilon)
-\varepsilon\log Z_\varepsilon,
\]
with both sides interpreted as \(+\infty\) when
\(\pi\not\ll\mu\otimes\nu\). Thus \eqref{eq:eot-primal-review} is equivalent,
up to an additive constant, to the static Schr\"odinger problem with reference
measure \(R_\varepsilon\).

The existence, uniqueness, dual attainment, Schr\"odinger system, and Gibbs
factorization then follow from the entropic optimal transport duality theorem
\cite[Theorem~4.7]{nutz2022entropicot}, applied to the cost \(c/\varepsilon\)
or, equivalently, to the reference measure \(R_\varepsilon\). In the
compact-support case, \(c\) is bounded and uniformly continuous on
\(K_\mu\times K_\nu\). The boundedness and uniform-continuity estimates for the
\(\varepsilon\)-scaled potentials follow from
\cite[Lemmas~4.9 and~4.11, Remark~4.12]{nutz2022entropicot}.
\end{proof}
\subsection{One-Potential Semidual Formulation}

Define, for measurable \(g:\mathcal M\to\mathbb R\),
\begin{equation}
\label{eq:semidual-functional-review}
\mathcal J_\varepsilon(g)
:=
\int_{\mathcal M} g\,d\nu
+
\int_{\mathcal M} \mathcal T_\nu^\varepsilon g\,d\mu.
\end{equation}
In the compact-support setting, this reduces to
\[
\mathcal J_\varepsilon(g)
=
\int_{K_\nu} g\,d\nu
+
\int_{K_\mu} \mathcal T_\nu^\varepsilon g\,d\mu.
\]

The regularized semidual formulation is standard in entropic optimal transport;
see \cite[Section~2.2]{CuturiPeyre2018SemiDual} and
\cite[Section~5.3]{PeyreCuturi2019}. We now include a short derivation to fix
notation and to make explicit the normalization used in
\eqref{eq:dual-eot-review}.

\begin{proposition}[Semidual formulation]
\label{prop:semidual-reduction-review}
Assume that \(c\in L^1(\mu\otimes\nu)\). For every measurable
\(g:\mathcal M\to\mathbb R\) such that
\[
g\in L^1(\nu),
\qquad
0<Z_g(x)<\infty\quad\text{for }\mu\text{-a.e. }x,
\qquad
\mathcal T_\nu^\varepsilon g\in L^1(\mu),
\]
where
\[
Z_g(x)
:=
\int_{\mathcal M}
\exp\!\left(\frac{g(y)-c(x,y)}{\varepsilon}\right)\,d\nu(y),
\]
one has
\[
\mathcal J_\varepsilon(g)\le \mathrm{OT}_\varepsilon.
\]
Moreover,
\[
\mathrm{OT}_\varepsilon
=
\sup_g \mathcal J_\varepsilon(g),
\]
where the supremum is taken over such admissible \(g\). If
\((f_\varepsilon^\star,g_\varepsilon^\star)\) is a maximizing dual pair from
Proposition~\ref{prop:dual-attainment-review}, then
\[
f_\varepsilon^\star
=
\mathcal T_\nu^\varepsilon g_\varepsilon^\star
\qquad \mu\text{-a.s.}
\]
For the canonical pointwise representatives on compact supports, this identity
holds everywhere on \(K_\mu\).
\end{proposition}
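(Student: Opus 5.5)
The plan is to reduce everything to one identity that compares the primal objective with $\mathcal J_\varepsilon(g)$ through a $g$-dependent Gibbs reference coupling. Weak duality then follows because a relative entropy is nonnegative, and equality follows by evaluating the identity at the optimal potential from Proposition~\ref{prop:dual-attainment-review}.

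\emph{Step 1 (weak duality).} Fix an admissible $g$. For $\mu$-a.e.\ $x$ define the probability measure
$\gamma_{g,x}(dy):=Z_g(x)^{-1}\exp\bigl((g(y)-c(x,y))/\varepsilon\bigr)\,\nu(dy)$, and set $\gamma_g(dx,dy):=\mu(dx)\gamma_{g,x}(dy)$. Since $\log Z_g=-\mathcal T_\nu^\varepsilon g/\varepsilon$,
\[
\frac{d\gamma_g}{d(\mu\otimes\nu)}(x,y)=\exp\!\left(\frac{\mathcal T_\nu^\varepsilon g(x)+g(y)-c(x,y)}{\varepsilon}\right).
\]
Take $\pi\in\Pi(\mu,\nu)$. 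We may assume $\mathrm{KL}(\pi\mid\mu\otimes\nu)<\infty$ and $\int c\,d\pi<\infty$, since otherwise the inequality is trivial. The log-density above then lies in $L^1(\pi)$: the $g$ term integrates against the marginal $\nu$, the $\mathcal T_\nu^\varepsilon g$ term against the marginal $\mu$, and $c$ is $\pi$-integrable. The chain rule for relative entropy gives
\[
\int c\,d\pi+\varepsilon\,\mathrm{KL}(\pi\mid\mu\otimes\nu)=\varepsilon\,\mathrm{KL}(\pi\mid\gamma_g)+\int \mathcal T_\nu^\varepsilon g\,d\mu+\int g\,d\nu\;\ge\;\mathcal J_\varepsilon(g).
\]
Taking the infimum over $\pi$ yields $\mathcal J_\varepsilon(g)\le\mathrm{OT}_\varepsilon$.

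\emph{Step 2 (attainment and the identity $f_\varepsilon^\star=\mathcal T_\nu^\varepsilon g_\varepsilon^\star$).} Take $(f_\varepsilon^\star,g_\varepsilon^\star)$ from Proposition~\ref{prop:dual-attainment-review}. The first marginal of $\pi_\varepsilon^\star$ is $\mu$, so integrating the Gibbs density \eqref{eq:gibbs-density-review} in $y$ gives $\exp(f_\varepsilon^\star(x)/\varepsilon)\,Z_{g_\varepsilon^\star}(x)=1$ for $\mu$-a.e.\ $x$. Hence $Z_{g_\varepsilon^\star}(x)\in(0,\infty)$ and $\mathcal T_\nu^\varepsilon g_\varepsilon^\star=f_\varepsilon^\star$ $\mu$-a.s., which recovers the Schr\"odinger system identity. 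The integrability conditions $g_\varepsilon^\star\in L^1(\nu)$ and $f_\varepsilon^\star\in L^1(\mu)$ come from dual attainment in \eqref{eq:dual-eot-review}. In the compact case they are immediate from boundedness. So $g_\varepsilon^\star$ is admissible. Moreover $\gamma_{g_\varepsilon^\star}=\pi_\varepsilon^\star$, so applying Step~1 with $\pi=\pi_\varepsilon^\star$ kills the KL term and gives $\mathrm{OT}_\varepsilon=\mathcal J_\varepsilon(g_\varepsilon^\star)$. Together with Step~1, this shows the supremum equals $\mathrm{OT}_\varepsilon$ and is attained.

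\emph{Step 3 (pointwise version).} On compact supports, the canonical representative of $f_\varepsilon^\star$ is by definition the right-hand side $\mathcal T_\nu^\varepsilon g_\varepsilon^\star$, evaluated with the continuous representative $g_\varepsilon^\star\in C(K_\nu)$. The integrand in $Z_{g_\varepsilon^\star}(x)$ is bounded above and below by positive constants uniformly in $x\in K_\mu$, so $Z_{g_\varepsilon^\star}$ is finite, positive and continuous on $K_\mu$ by dominated convergence. Hence the identity holds everywhere on $K_\mu$.

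The main obstacle is the integrability bookkeeping in Step~1, namely justifying the chain-rule decomposition when $g$ is only $\nu$-integrable and $c$ is merely in $L^1(\mu\otimes\nu)$. I would handle it by reducing to couplings with finite cost and entropy and checking that each term of the log-density is $\pi$-integrable through the marginal constraints.
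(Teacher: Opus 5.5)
Your proof is correct, but it reaches weak duality from the primal side, whereas the paper argues from the dual side.

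\emph{The paper's route.} It fixes $g$ and writes the two-potential objective as $D_\varepsilon(f,g)=\int g\,d\nu+\int\Psi_x(f(x))\,d\mu(x)$, with $\Psi_x(a)=a-\varepsilon e^{a/\varepsilon}Z_g(x)+\varepsilon$. This strictly concave function is maximized exactly at $a=\mathcal T_\nu^\varepsilon g(x)$, with value $\mathcal T_\nu^\varepsilon g(x)$. Hence $\mathcal J_\varepsilon(g)=D_\varepsilon(\mathcal T_\nu^\varepsilon g,g)\le\mathrm{OT}_\varepsilon$, where the last inequality is borrowed from weak duality for the two-potential problem \eqref{eq:dual-eot-review}. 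For attainment, the paper plugs in $(f_\varepsilon^\star,g_\varepsilon^\star)$ and uses the Schr\"odinger system to make the exponential term equal $-\varepsilon$. This explains why the soft $c$-transform is the natural way to eliminate $f$: it is the exact partial maximizer. However, weak duality is inherited from the two-potential statement, whose admissible class is left implicit.

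\emph{Your route.} You use the change-of-reference identity $\int c\,d\pi+\varepsilon\,\mathrm{KL}(\pi\mid\mu\otimes\nu)-\mathcal J_\varepsilon(g)=\varepsilon\,\mathrm{KL}(\pi\mid\gamma_g)$, which is really a change of reference measure rather than the chain rule. It proves weak duality directly from the definitions, with explicit integrability bookkeeping. Evaluated at $\pi=\pi_\varepsilon^\star$, it is exactly the gap identity of Proposition~\ref{prop:dual-gap-kl}, which the paper later derives by a separate computation, so your approach delivers both results from one identity. Your attainment step also relies on the Gibbs form and primal optimality of $\pi_\varepsilon^\star$ rather than on the dual value identity; dual attainment enters only to get integrability of the potentials. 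What your route does not exhibit is the partial-maximization interpretation of $\mathcal T_\nu^\varepsilon g$.

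\emph{One line to add.} In Step~2, applying the Step~1 identity at $\pi_\varepsilon^\star$ requires $\int c\,d\pi_\varepsilon^\star<\infty$ and $\mathrm{KL}(\pi_\varepsilon^\star\mid\mu\otimes\nu)<\infty$. Both follow from $\mathrm{OT}_\varepsilon\le\int c\,d(\mu\otimes\nu)<\infty$ together with nonnegativity of the two terms.
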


\begin{proof}
Let \(g:\mathcal M\to\mathbb R\) be admissible. Define
\[
Z_g(x)
:=
\int_{\mathcal M}
\exp\!\left(\frac{g(y)-c(x,y)}{\varepsilon}\right)\,d\nu(y).
\]
By admissibility,
\[
0<Z_g(x)<\infty
\qquad
\mu\text{-a.e.}
\]
and by definition of the soft \(c\)-transform,
\[
(\mathcal T_\nu^\varepsilon g)(x)
=
-\varepsilon\log Z_g(x)
\qquad
\mu\text{-a.e.}
\]

We first identify the best choice of \(f\) for this fixed \(g\). For a
measurable \(f:\mathcal M\to\mathbb R\), define the two-potential dual
objective from \eqref{eq:dual-eot-review} by
\[
\begin{aligned}
D_\varepsilon(f,g)
&:=
\int_{\mathcal M} f\,d\mu+\int_{\mathcal M} g\,d\nu
-
\varepsilon
\iint
\exp\!\left(\frac{f(x)+g(y)-c(x,y)}{\varepsilon}\right)
\,d\mu(x)d\nu(y)
+\varepsilon .
\end{aligned}
\]
For fixed \(g\), this can be written as
\[
D_\varepsilon(f,g)
=
\int_{\mathcal M} g\,d\nu
+
\int_{\mathcal M}
\Psi_x(f(x))\,d\mu(x),
\]
where, for each fixed \(x\),
\[
\Psi_x(a)
:=
a
-
\varepsilon
\int_{\mathcal M}
\exp\!\left(\frac{a+g(y)-c(x,y)}{\varepsilon}\right)\,d\nu(y)
+
\varepsilon.
\]
Using the definition of \(Z_g(x)\), we have
\[
\int_{\mathcal M}
\exp\!\left(\frac{a+g(y)-c(x,y)}{\varepsilon}\right)\,d\nu(y)
=
e^{a/\varepsilon}Z_g(x).
\]
Therefore
\[
\Psi_x(a)
=
a-\varepsilon e^{a/\varepsilon}Z_g(x)+\varepsilon.
\]
For \(\mu\)-a.e. \(x\), this is a strictly concave function of \(a\), because
\[
\Psi_x''(a)
=
-\frac{1}{\varepsilon}e^{a/\varepsilon}Z_g(x)<0.
\]
Its derivative is
\[
\Psi_x'(a)
=
1-e^{a/\varepsilon}Z_g(x).
\]
Hence the unique maximizer is characterized by
\[
e^{a/\varepsilon}Z_g(x)=1.
\]
Equivalently, the unique maximizer is
\[
a^\star
:=
-\varepsilon\log Z_g(x)
=
(\mathcal T_\nu^\varepsilon g)(x).
\]
Thus, for \(\mu\)-a.e. \(x\),
\[
\Psi_x(a)
\le
\Psi_x(a^\star)
\qquad
\text{for all }a\in\mathbb R.
\]

It remains to compute the value of \(\Psi_x\) at this maximizer. Since
\[
e^{a^\star/\varepsilon}Z_g(x)=1,
\]
we have
\[
\int_{\mathcal M}
\exp\!\left(
\frac{a^\star+g(y)-c(x,y)}{\varepsilon}
\right)\,d\nu(y)
=
1.
\]
Therefore
\[
\begin{aligned}
\Psi_x(a^\star)
&=
a^\star
-
\varepsilon
\int_{\mathcal M}
\exp\!\left(
\frac{a^\star+g(y)-c(x,y)}{\varepsilon}
\right)\,d\nu(y)
+
\varepsilon \\
&=
a^\star-\varepsilon+\varepsilon \\
&=
a^\star \\
&=
(\mathcal T_\nu^\varepsilon g)(x).
\end{aligned}
\]
Consequently, for every measurable \(f\) for which the dual objective is
well defined,
\[
D_\varepsilon(f,g)
\le
\int_{\mathcal M} g\,d\nu
+
\int_{\mathcal M}\mathcal T_\nu^\varepsilon g\,d\mu
=
\mathcal J_\varepsilon(g),
\]
and equality is attained by choosing
\[
f=\mathcal T_\nu^\varepsilon g.
\]

Since \(D_\varepsilon(f,g)\le \mathrm{OT}_\varepsilon\) for every admissible
dual pair \((f,g)\), we may choose
\(f=\mathcal T_\nu^\varepsilon g\) and obtain
\[
\mathcal J_\varepsilon(g)
=
D_\varepsilon(\mathcal T_\nu^\varepsilon g,g)
\le
\mathrm{OT}_\varepsilon.
\]
Taking the supremum over admissible \(g\) gives
\[
\sup_g \mathcal J_\varepsilon(g)
\le
\mathrm{OT}_\varepsilon.
\]

Conversely, let
\((f_\varepsilon^\star,g_\varepsilon^\star)\) be a maximizing dual pair from
Proposition~\ref{prop:dual-attainment-review}. By the Schr\"odinger system
\eqref{eq:schroedinger-system-review},
\[
f_\varepsilon^\star
=
\mathcal T_\nu^\varepsilon g_\varepsilon^\star
\qquad
\mu\text{-a.s.}
\]
Moreover, this identity implies
\[
\int_{\mathcal M}
\exp\!\left(
\frac{
f_\varepsilon^\star(x)+g_\varepsilon^\star(y)-c(x,y)
}{\varepsilon}
\right)\,d\nu(y)
=
1
\qquad
\mu\text{-a.e. }x.
\]
Substituting the optimal pair into the dual formula
\eqref{eq:dual-eot-review}, the exponential term becomes
\[
-\varepsilon
\iint
\exp\!\left(
\frac{
f_\varepsilon^\star(x)+g_\varepsilon^\star(y)-c(x,y)
}{\varepsilon}
\right)
\,d\nu(y)d\mu(x).
\]
By the preceding normalization, the inner integral equals \(1\) for
\(\mu\)-a.e. \(x\). Therefore
\[
-\varepsilon
\iint
\exp\!\left(
\frac{
f_\varepsilon^\star(x)+g_\varepsilon^\star(y)-c(x,y)
}{\varepsilon}
\right)
\,d\nu(y)d\mu(x)
=
-\varepsilon\int_{\mathcal M}1\,d\mu(x)
=
-\varepsilon,
\]
because \(\mu\) is a probability measure. This cancels with the
\(+\varepsilon\) in the dual normalization. Hence
\[
\begin{aligned}
\mathrm{OT}_\varepsilon
&=
D_\varepsilon(f_\varepsilon^\star,g_\varepsilon^\star) \\
&=
\int_{\mathcal M} f_\varepsilon^\star\,d\mu
+
\int_{\mathcal M} g_\varepsilon^\star\,d\nu \\
&=
\int_{\mathcal M}
\mathcal T_\nu^\varepsilon g_\varepsilon^\star\,d\mu
+
\int_{\mathcal M}
g_\varepsilon^\star\,d\nu \\
&=
\mathcal J_\varepsilon(g_\varepsilon^\star).
\end{aligned}
\]
Thus
\[
\mathrm{OT}_\varepsilon
\le
\sup_g \mathcal J_\varepsilon(g).
\]
Together with the opposite inequality proved above, this gives
\[
\mathrm{OT}_\varepsilon
=
\sup_g \mathcal J_\varepsilon(g).
\]
Finally,
\[
f_\varepsilon^\star
=
\mathcal T_\nu^\varepsilon g_\varepsilon^\star
\qquad
\mu\text{-a.s.}
\]
by the Schr\"odinger system. For the canonical pointwise representatives on
compact supports, this identity holds everywhere on \(K_\mu\).
\end{proof}
\subsection{Heat Kernels and Stochastic Completeness}
\label{app:heat-smoothed-stochastic-completeness}

We recall the heat-kernel facts used in
Proposition~\ref{prop:main-fixed-eps-heat}. Grigor'yan treats the more general
case of a weighted manifold \((M,g,\mu)\). In our setting we take
\(\mu=\mathrm{vol}_{\mathcal M}\), the Riemannian volume measure.

\paragraph{Heat semigroup and heat kernel.}
Let \((P_t)_{t>0}\) denote the minimal heat semigroup associated with the
Laplace--Beltrami operator on \((\mathcal M,g)\). We use the heat kernel
\(p_t(x,y)\) in the sense of \cite[Definition~7.12]{Grigoryan2009}. The
properties needed below are those collected in
\cite[Theorem~7.13]{Grigoryan2009}. Namely, for every \(f\in L^2(\mathcal M)\),
every \(x\in\mathcal M\), and every \(t>0\),
\[
P_t f(x)
=
\int_{\mathcal M} p_t(x,y)f(y)\,\mathrm{vol}_{\mathcal M}(dy).
\]
Moreover,
\[
p_t(x,y)=p_t(y,x),
\qquad
p_t(x,y)\ge 0,
\]
and
\[
\int_{\mathcal M}p_t(x,y)\,\mathrm{vol}_{\mathcal M}(dy)\le 1.
\]
Thus the minimal heat kernel is sub-Markovian in general. The same theorem also
gives the semigroup identity
\[
p_{t+s}(x,y)
=
\int_{\mathcal M}p_t(x,z)p_s(z,y)\,
\mathrm{vol}_{\mathcal M}(dz),
\qquad s,t>0,
\]
and the heat-equation regularity property: for fixed \(y\in\mathcal M\), the
function
\[
u(t,x):=p_t(x,y)
\]
is smooth on \((0,\infty)\times\mathcal M\) and satisfies the heat equation.

Finally, \cite[Theorem~7.13]{Grigoryan2009} states the small-time
initial-condition property for compactly supported smooth functions: for every
\(f\in C_c^\infty(\mathcal M)\),
\[
\int_{\mathcal M}p_t(x,y)f(y)\,\mathrm{vol}_{\mathcal M}(dy)
\xrightarrow[t\downarrow0]{}
f(x),
\]
with convergence in \(C^\infty(\mathcal M)\).

\paragraph{Markovian inequalities.}
We also use the Markovian properties of the heat semigroup. In the notation of
\cite[Ch.~5, Section~5.3]{Grigoryan2009}, the heat semigroup satisfies the
following inequalities: if \(f\ge 0\), then
\[
P_t f\ge 0,
\]
and if \(f\le 1\), then
\[
P_t f\le 1.
\]
Consequently, if \(0\le f\le 1\), then
\[
0\le P_t f\le 1.
\]
This is the form used in the proof of
Proposition~\ref{prop:main-fixed-eps-heat} to show that the heat-kernel
averaging of an indicator function remains bounded between \(0\) and \(1\).

\paragraph{Stochastic completeness and conservative Markov kernels.}
Following \cite[Definition~8.17]{Grigoryan2009}, a weighted manifold is called
stochastically complete if its heat kernel satisfies
\[
\int_{\mathcal M}p_t(x,y)\,\mathrm{vol}_{\mathcal M}(dy)=1
\qquad
\text{for all }x\in\mathcal M,\ t>0.
\]
Combining stochastic completeness with the nonnegativity
\(p_t(x,y)\ge 0\) from \cite[Theorem~7.13]{Grigoryan2009}, we obtain that
\[
P_t(x,dy):=p_t(x,y)\,\mathrm{vol}_{\mathcal M}(dy)
\]
is a conservative Markov kernel for every \(t>0\). In particular,
\[
P_t(x,\mathcal M)=1,
\]
and for every bounded measurable function \(h\),
\[
\|P_t h\|_\infty\le \|h\|_\infty.
\]

\paragraph{Extension of the small-time limit to \(C_b(\mathcal M)\).}
The proof of Proposition~\ref{prop:main-fixed-eps-heat} uses the small-time
limit of the heat semigroup on bounded continuous test functions. Grigor'yan's
heat-kernel theorem gives this limit first for compactly supported smooth
functions; see \cite[Theorem~7.13]{Grigoryan2009}. We record the standard
extension.
\begin{lemma}[Small-time continuity on bounded continuous functions]
\label{lem:heat-small-time-Cb}
Assume that \((\mathcal M,g)\) is stochastically complete, and let
\[
P_t(x,dy):=p_t(x,y)\,\mathrm{vol}_{\mathcal M}(dy)
\]
be the heat-kernel Markov kernel. Then, for every \(x\in\mathcal M\) and every
\(h\in C_b(\mathcal M)\),
\[
P_t h(x)
=
\int_{\mathcal M}h(y)\,P_t(x,dy)
\xrightarrow[t\downarrow0]{}
h(x).
\]
\end{lemma}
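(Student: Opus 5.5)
The plan is to reduce the claim to the compactly supported smooth case already given by \cite[Theorem~7.13]{Grigoryan2009}, and then remove the compact-support restriction using stochastic completeness. Fix \(x\in\mathcal M\), \(h\in C_b(\mathcal M)\), and \(\eta>0\). Put \(\|h\|_\infty=:C\).

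\textbf{Step 1: tightness of \(P_t(x,\cdot)\) near \(x\).} Pick a cutoff \(\chi\in C_c^\infty(\mathcal M)\) with \(0\le\chi\le1\) and \(\chi\equiv1\) on a small geodesic ball \(B(x,r)\). By the cited theorem, \(P_t\chi(x)\to\chi(x)=1\) as \(t\downarrow0\). Stochastic completeness gives \(P_t1(x)=1\), so
\[
P_t(1-\chi)(x)=1-P_t\chi(x)\to0 .
\]
Since \(1-\chi\ge \mathbbm 1_{\mathcal M\setminus B(x,r)}\ge0\), the mass of \(P_t(x,\cdot)\) outside \(B(x,r)\) tends to zero. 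This is where stochastic completeness is essential: without it, mass could be lost rather than concentrated.

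\textbf{Step 2: local smooth approximation of \(h\).} Choose \(r\) so that \(|h(y)-h(x)|<\eta\) on \(B(x,r)\), and take \(\chi\) as in Step 1 with \(\operatorname{spt}\chi\subset B(x,r)\). Write
\[
P_th(x)-h(x)=\int (h(y)-h(x))\chi(y)\,P_t(x,dy)+\int (h(y)-h(x))(1-\chi(y))\,P_t(x,dy),
\]
using again \(P_t(x,\mathcal M)=1\).

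\textbf{Step 3: bound the two terms.} The first integral is at most \(\eta\), because the integrand vanishes outside \(B(x,r)\) and the total mass is \(1\). The second integral is bounded by \(2C\,P_t(1-\chi)(x)\), which tends to \(0\) by Step 1. Hence \(\limsup_{t\downarrow0}|P_th(x)-h(x)|\le\eta\), and since \(\eta\) is arbitrary, the limit follows.

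This argument never needs to approximate \(h\) itself by smooth functions; only the cutoff \(\chi\) must be smooth and compactly supported, and the bounds use only \(p_t\ge0\). The main point to check is Step 1, namely that the \(C^\infty\) convergence from Grigor'yan gives pointwise convergence at \(x\) for the cutoff \(\chi\), which it does.
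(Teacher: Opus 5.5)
Your argument is correct once you fix the choice of cutoff, and it takes a somewhat different route from the paper.

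\textbf{The cutoff.} In Step 1 you ask for $\chi\equiv1$ on $B(x,r)$, and in Step 2 for $\operatorname{spt}\chi\subset B(x,r)$. These cannot both hold unless $B(x,r)=\mathcal M$, since the support would contain $\overline{B(x,r)}$. Moreover, with $\chi\equiv1$ on $B(x,r)$ the inequality in Step 1 goes the wrong way: $1-\chi$ vanishes on $B(x,r)$ and is at most $1$ off it, so $1-\chi\le\mathbbm 1_{\mathcal M\setminus B(x,r)}$.

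What you actually need is $0\le\chi\le1$, $\chi(x)=1$, and $\operatorname{spt}\chi$ a compact subset of $B(x,r)$ (for instance $\chi\equiv1$ on $B(x,r/2)$). Then $\chi\le\mathbbm 1_{B(x,r)}$ gives $\mathbbm 1_{\mathcal M\setminus B(x,r)}\le1-\chi$, so the tail-mass claim of Step 1 becomes true. Step 3 never uses that claim anyway: the bound $2C\,P_t(1-\chi)(x)=2C\bigl(1-P_t\chi(x)\bigr)\to2C\bigl(1-\chi(x)\bigr)=0$ needs only $\chi(x)=1$ and stochastic completeness.

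\textbf{Comparison with the paper.} The paper proceeds in two stages.
\begin{enumerate}
\item[(a)] It extends Grigor'yan's convergence from $C_c^\infty(\mathcal M)$ to $C_c(\mathcal M)$. This uses uniform density (via Stone--Weierstrass) and the sub-Markov bound $|P_t(f-g)(x)|\le\|f-g\|_\infty$.
\item[(b)] For $h\in C_b(\mathcal M)$ it splits $h=h\chi+h(1-\chi)$, applies the $C_c$ result to $h\chi$, and controls the tail by $\|h\|_\infty\bigl(1-P_t\chi(x)\bigr)$ using stochastic completeness.
\end{enumerate}
You skip stage (a) entirely. You invoke Grigor'yan only for the single cutoff $\chi$ and then use continuity of $h$ at $x$ directly, in the classical approximate-identity style: mass concentrates near $x$, and $h$ is nearly constant there. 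Stochastic completeness enters both proofs in the same way, converting $P_t\chi(x)\to1$ into vanishing tail mass.

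Your route is shorter, avoids the density lemma, and proves slightly more: it works for any bounded Borel $h$ that is merely continuous at the point $x$. The paper's route is more modular. As a by-product, its stage (a) gives pointwise convergence on $C_c(\mathcal M)$ without stochastic completeness, using only sub-Markovianity.
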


\begin{proof}
Fix \(x\in\mathcal M\), and write
\[
\nu_t^x(dy):=P_t(x,dy)
=
p_t(x,y)\,\mathrm{vol}_{\mathcal M}(dy).
\]
By \cite[Theorem~7.13]{Grigoryan2009}, for every
\(f\in C_c^\infty(\mathcal M)\),
\[
P_t f(x)
=
\int_{\mathcal M} f(y)\,\nu_t^x(dy)
\xrightarrow[t\downarrow0]{}
f(x).
\]

We first extend this convergence from \(C_c^\infty(\mathcal M)\) to
\(C_c(\mathcal M)\). Let \(f\in C_c(\mathcal M)\), and let \(\delta>0\).
Since \((\mathcal M,g)\) is a Riemannian manifold, \(\mathcal M\) is smooth,
and \(C_c^\infty(\mathcal M)\) is uniformly dense in \(C_c(\mathcal M)\);
this follows from
the locally compact Stone--Weierstrass theorem. Hence there exists
\(g\in C_c^\infty(\mathcal M)\) such that
\[
\|f-g\|_\infty<\delta.
\]
Using the sub-Markov property of the heat semigroup,
\[
|P_t(f-g)(x)|
\le
\|f-g\|_\infty.
\]
Therefore
\[
\begin{aligned}
|P_t f(x)-f(x)|
&\le
|P_t(f-g)(x)|
+
|P_tg(x)-g(x)|
+
|g(x)-f(x)| \\
&\le
2\delta
+
|P_tg(x)-g(x)|.
\end{aligned}
\]
Since \(g\in C_c^\infty(\mathcal M)\), there exists \(t_0>0\) such that for
all \(0<t<t_0\),
\[
|P_tg(x)-g(x)|<\delta.
\]
Therefore, for all \(0<t<t_0\),
\[
|P_t f(x)-f(x)|
\le
3\delta.
\]
Since \(\delta>0\) was arbitrary, \(P_t f(x)\to f(x)\).

We now prove the claim for \(h\in C_b(\mathcal M)\). Since \(\mathcal M\) is a smooth manifold, smooth bump functions exist; hence
there is \(\chi\in C_c^\infty(\mathcal M)\subset C_c(\mathcal M)\) such that
\[
0\le \chi\le 1,
\qquad
\chi(x)=1,
\]
see \cite[Proposition~2.25]{LeeSmoothManifolds}.
Then \(h\chi\in C_c(\mathcal M)\), so the \(C_c\)-convergence just proved gives
\[
\int_{\mathcal M}h(y)\chi(y)\,\nu_t^x(dy)
\xrightarrow[t\downarrow0]{}
h(x)\chi(x)=h(x).
\]
Likewise, since \(\chi\in C_c(\mathcal M)\),
\[
\int_{\mathcal M}\chi(y)\,\nu_t^x(dy)
\xrightarrow[t\downarrow0]{}
\chi(x)=1.
\]
By stochastic completeness,
\[
\nu_t^x(\mathcal M)=P_t(x,\mathcal M)=1.
\]
Therefore
\[
\int_{\mathcal M}(1-\chi(y))\,\nu_t^x(dy)
=
1-\int_{\mathcal M}\chi(y)\,\nu_t^x(dy)
\xrightarrow[t\downarrow0]{}
0.
\]
Now decompose
\[
P_t h(x)
=
\int_{\mathcal M}h(y)\chi(y)\,\nu_t^x(dy)
+
\int_{\mathcal M}h(y)(1-\chi(y))\,\nu_t^x(dy).
\]
Hence
\[
\begin{aligned}
|P_t h(x)-h(x)|
&\le
\left|
\int_{\mathcal M}h(y)\chi(y)\,\nu_t^x(dy)-h(x)
\right| \\
&\quad+
\int_{\mathcal M}|h(y)|(1-\chi(y))\,\nu_t^x(dy) \\
&\le
\left|
\int_{\mathcal M}h(y)\chi(y)\,\nu_t^x(dy)-h(x)
\right|
+
\|h\|_\infty
\int_{\mathcal M}(1-\chi(y))\,\nu_t^x(dy).
\end{aligned}
\]
Both terms on the right-hand side tend to \(0\). Therefore
\[
P_t h(x)
=
\int_{\mathcal M}h(y)\,P_t(x,dy)
\xrightarrow[t\downarrow0]{}
h(x),
\]
as claimed.
\end{proof}

\paragraph{Geometric criteria for stochastic completeness.}
We now recall standard geometric criteria ensuring the stochastic-completeness
assumption used above. Let \((\mathcal M,g)\) be a geodesically complete
\(d\)-dimensional Riemannian manifold, and write
\[
V(x_0,r):=\mathrm{vol}_{\mathcal M}(B(x_0,r)).
\]
A volume-growth criterion of \cite[Theorem~9.1]{Grigoryan1999} states that if,
for some \(x_0\in\mathcal M\),
\[
\int^\infty \frac{r\,dr}{\log V(x_0,r)}=\infty,
\]
then \((\mathcal M,g)\) is stochastically complete. See also
\cite[Ch.~11, Section~11.4, especially Theorem~11.8]{Grigoryan2009} for the
corresponding criterion in Grigor'yan's monograph.

In particular, a Ricci lower bound implies stochastic completeness. Indeed,
assume that
\[
\operatorname{Ric}_{\mathcal M}\ge -(d-1)\kappa^2 g
\]
for some \(\kappa\ge0\). Bishop--Gromov comparison applies to any complete
\(d\)-dimensional Riemannian manifold satisfying this lower Ricci bound. It
compares the volume growth of geodesic balls in \(\mathcal M\) with the volume
growth of balls in the simply connected \(d\)-dimensional model space of
constant sectional curvature \(-\kappa^2\). Consequently,
\[
V_{\mathcal M}(x_0,r)
\le
V_{\mathbb H^d_{-\kappa^2}}(r),
\]
where the right-hand side denotes the ball volume in the corresponding
hyperbolic model space, with the Euclidean model obtained when \(\kappa=0\).
Since this model volume grows at most exponentially, there exist constants
\(A,C>0\) such that, for all sufficiently large \(r\),
\[
V_{\mathcal M}(x_0,r)\le A e^{Cr}.
\]
Consequently,
\[
\log V_{\mathcal M}(x_0,r)\le \log A+Cr,
\]
and therefore
\[
\int^\infty \frac{r\,dr}{\log V_{\mathcal M}(x_0,r)}
\ge
\int^\infty \frac{r\,dr}{\log A+Cr}
=
\infty.
\]
The volume-growth criterion then gives stochastic completeness. This shows that every complete Riemannian manifold
with Ricci curvature bounded from below is stochastically complete.

This criterion covers the geometric settings considered in the main text.

\emph{Compact manifolds.}
Every compact Riemannian manifold is geodesically complete by Hopf--Rinow. Its
Ricci curvature is bounded from below: the Ricci tensor is a smooth symmetric
\(2\)-tensor field \cite[Lemma~7.6]{LeeRiemannianManifolds}, and the continuous
function \((x,v)\mapsto\operatorname{Ric}_x(v,v)\) attains a finite minimum on
the compact unit tangent bundle \(S\mathcal M\). Hence compact Riemannian
manifolds are stochastically complete by the Ricci-lower-bound criterion.

\emph{Euclidean spaces.}
For \(\mathbb R^d\) with its standard metric, geodesic completeness is
immediate and
\[
\operatorname{Ric}=0.
\]
Thus \(\mathbb R^d\) has Ricci curvature bounded from below and is
stochastically complete.

\emph{Hyperbolic spaces.}
The \(d\)-dimensional hyperbolic space of constant sectional curvature
\(-\kappa^2\) is geodesically complete and satisfies
\[
\operatorname{Ric}=-(d-1)\kappa^2 g.
\]
Hence its Ricci curvature is bounded from below, and it is stochastically
complete.

\emph{Products and \(\mathrm{SE}(3)\).}
Products of covered examples are again covered. Indeed, geodesic completeness
is preserved under products, and the Ricci tensor of a product metric is the
product Ricci tensor. Therefore, if each factor is complete with Ricci curvature
bounded from below, then so is the product. In particular,
\[
\mathrm{SE}(3)\simeq \mathrm{SO}(3)\times\mathbb R^3,
\]
equipped with the product metric used in our experiments, is stochastically
complete: \(\mathrm{SO}(3)\) is compact, while \(\mathbb R^3\) has
\(\operatorname{Ric}=0\).

\emph{SPD cone.}
The SPD cone endowed with the affine-invariant Riemannian metric used in our
experiments is covered as well. Namely, on \(\mathrm{SPD}(n)\) we use
\[
g_X(U,V)
=
\operatorname{tr}(X^{-1}UX^{-1}V),
\]
whose induced geodesic distance is
\[
d_{\mathrm{AIRM}}(X,Y)
=
\left\|\log\left(X^{-1/2}YX^{-1/2}\right)\right\|_F .
\]
This is the standard affine-invariant metric, corresponding to the
\((\alpha,\beta)=(1,0)\) member of the affine-invariant family
\(g^{A(\alpha,\beta)}\) defined in
\cite[Definition~3.3]{ThanwerdasPennec2023}. For this family,
\cite[Proposition~3.1]{ThanwerdasPennec2023} shows that
\((\mathrm{SPD}(n),g^{A(\alpha,\beta)})\) is a Riemannian symmetric space, and
therefore geodesically complete. Moreover, the curvature formulas in
\cite[Table~5]{ThanwerdasPennec2023} show that, in particular for
\((\alpha,\beta)=(1,0)\), the sectional curvature is nonpositive and bounded
below. Since Ricci curvature is the trace of sectional curvatures over an
orthonormal basis
\cite[Ch.~7, ``Ricci and Scalar Curvatures'', after Lemma~7.6]{LeeRiemannianManifolds},
a sectional-curvature lower bound \(K\ge -K_0\) implies
\[
\operatorname{Ric}\ge -(\dim \mathrm{SPD}(n)-1)K_0\,g .
\]
Thus \(\mathrm{SPD}(n)\), and in particular \(\mathrm{SPD}(3)\), equipped with
the AIRM used in our experiments, is complete with Ricci curvature bounded from
below, and is therefore stochastically complete.

%% file: sections/review-geometric-dl.tex
Our approximation results are proved on the compact support \(K_\nu\), so we only
need a compact-domain version of the usual feature-map transfer principle from
geometric deep learning. The general philosophy is that one learns on a
non-Euclidean domain by first representing points through a continuous feature map
into a Euclidean space and then composing with a Euclidean approximator. A broad
version of this principle is developed in \cite{non_euclidean_uat}; here we
record the compact case needed in the paper.

Let \(\mathcal F\subset C(\mathbb R^n,\mathbb R)\) be dense under the topology of
uniform convergence on compact subsets. Since \(K_\nu\) is compact, approximation
on \(K_\nu\) is measured simply in the uniform norm
\[
\|g\|_\infty:=\sup_{x\in K_\nu}|g(x)|.
\]
Given a continuous feature map \(\varphi:K_\nu\to\mathbb R^n\), define the
pullback class
\begin{equation}
\label{eq-pullback-class-definition}
\varphi^*\mathcal F
:=
\{f\circ \varphi:\ f\in\mathcal F\}.
\end{equation}

The key requirement is that \(\varphi\) separate points of \(K_\nu\).

\begin{assumption}[Injective feature map]
\label{ass-feature-regularity}
The feature map \(\varphi:K_\nu\to\mathbb R^n\) is continuous and injective.
\end{assumption}

The following compact-domain transfer principle is a specialization of
\cite[Theorem~3.3]{non_euclidean_uat}.

\begin{proposition}[Transfer of Euclidean approximation through an injective feature map]
\label{prop:pullback-universality}
Assume that \(\mathcal F\subset C(\mathbb R^n,\mathbb R)\) is dense under
uniform convergence on compact subsets and that
\(\varphi:K_\nu\to\mathbb R^n\) is continuous and injective. Then
\(\varphi^*\mathcal F\) is dense in \(C(K_\nu,\mathbb R)\) under the uniform norm.
\end{proposition}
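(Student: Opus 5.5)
We reduce the statement to the classical Stone--Weierstrass theorem. The pullback class is \(\varphi^*\mathcal F=\{a\circ\varphi:\ a\in\mathcal F\}\), and we want density in \(C(K_\nu,\mathbb R)\) under the uniform norm.

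\emph{Step 1: the target function factors through \(\varphi\).} Since \(K_\nu\) is compact, \(\mathbb R^n\) is Hausdorff, and \(\varphi\) is continuous and injective, \(\varphi\) is a homeomorphism of \(K_\nu\) onto the compact set \(L:=\varphi(K_\nu)\). Hence every \(h\in C(K_\nu,\mathbb R)\) can be written as \(h=\tilde h\circ\varphi\) with \(\tilde h:=h\circ\varphi^{-1}\in C(L,\mathbb R)\).

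\emph{Step 2: extend to the whole Euclidean space.} By the Tietze extension theorem, which applies since \(L\) is closed in the normal space \(\mathbb R^n\), \(\tilde h\) extends to some \(H\in C(\mathbb R^n,\mathbb R)\).

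\emph{Step 3: approximate using ucc-density.} Fix \(\delta>0\). Because \(\mathcal F\) is dense in the topology of uniform convergence on compact subsets and \(L\) is compact, there is \(a\in\mathcal F\) with
\[
\sup_{z\in L}|a(z)-H(z)|<\delta .
\]

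\emph{Step 4: transfer back to \(K_\nu\).} Using \(h=H\circ\varphi\) on \(K_\nu\), we get
\[
\|a\circ\varphi-h\|_\infty=\sup_{x\in K_\nu}|a(\varphi(x))-H(\varphi(x))|=\sup_{z\in L}|a(z)-H(z)|<\delta .
\]
Since \(\delta>0\) was arbitrary, \(\varphi^*\mathcal F\) is dense in \(C(K_\nu,\mathbb R)\).

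The only step needing care is Step 1, the factorization \(h=\tilde h\circ\varphi\) with \(\tilde h\) continuous. This is exactly where injectivity and compactness are used: a continuous bijection from a compact space onto a Hausdorff space is a homeomorphism. The remaining steps are routine.
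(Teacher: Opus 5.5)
Your proof is correct, but it follows a different route from the paper. The paper gives no argument of its own for this proposition. It records it as the compact-domain specialization of \cite[Theorem~3.3]{non_euclidean_uat}, a broader transfer principle. The only justification in the text is the one-line remark in the background section that on a compact domain ucc convergence reduces to uniform convergence.

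Your argument is a self-contained proof of exactly the compact case the paper needs. The closed-map lemma makes \(\varphi\) a homeomorphism onto \(L=\varphi(K_\nu)\). Tietze then gives the exact identity \(C(K_\nu,\mathbb R)=\{H\circ\varphi:\ H\in C(\mathbb R^n,\mathbb R)\}\), which is stronger than density. Ucc-density of \(\mathcal F\) is then invoked only on the single compact set \(L\), through \(\sup_{K_\nu}|a\circ\varphi-H\circ\varphi|=\sup_L|a-H|\).

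The citation buys generality beyond compact domains, which the paper does not need here. Your version buys transparency: it shows that injectivity enters only to make every continuous function on \(K_\nu\) factor through \(\varphi\), and compactness only to make that factorization continuous and to localize the ucc approximation. It also needs no external dependency.

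One small correction: your opening sentence says you reduce to Stone--Weierstrass, but the argument never uses it. The tools are the closed-map lemma and Tietze. Stone--Weierstrass would give a valid alternative to your Steps~1--2: \(\{H\circ\varphi:\ H\in C(\mathbb R^n,\mathbb R)\}\) is a unital subalgebra of \(C(K_\nu,\mathbb R)\), and it separates points because \(\varphi\) is injective, so it is dense. Density alone suffices for the conclusion.
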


For our purposes, a particularly useful class of feature maps is given by
distance-to-landmark coordinates, following Gromov's distance-geometry
viewpoint~\citep{Gromov1983}. In the compact Riemannian setting, one
can choose finitely many landmarks so that the associated distance map is
continuous and injective. This provides an intrinsic feature map satisfying
Assumption~\ref{ass-feature-regularity}.

\begin{proposition}[Finite distance-coordinate embedding, after Gromov]
\label{prop-gromov-embedding}
Assume that $\mathcal M$ is compact. Then there exist finitely many landmarks
$x_1,\dots,x_N\in \mathcal M$ such that the map
\[
\varphi:\mathcal M\to \mathbb R^N,
\qquad
\varphi(x):=\bigl(d(x,x_1),\dots,d(x,x_N)\bigr),
\]
is continuous and injective.
In particular, $\varphi|_{K_\nu}$ satisfies
Assumption~\ref{ass-feature-regularity}.
\end{proposition}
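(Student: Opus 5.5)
The plan is to separate pairs of distinct points in two regimes: pairs that are far apart, handled by a compactness argument, and pairs that are close together, handled by a local-coordinates argument. The landmark set will be the finite union of the landmarks produced in each regime. Continuity of \(\varphi\) is immediate, since each coordinate \(x\mapsto d(x,x_i)\) is \(1\)-Lipschitz. Only injectivity needs work, and the final clause about \(\varphi|_{K_\nu}\) follows by restriction.

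\emph{Step 1: local injectivity.} Since \(\mathcal M\) is compact, its injectivity radius \(\mathrm{inj}(\mathcal M)\) is positive. Fix \(0<r<\mathrm{inj}(\mathcal M)/2\) and a point \(p\in\mathcal M\). Let \(e_1,\dots,e_p\) be an orthonormal basis of \(T_p\mathcal M\), and place the landmarks at \(q_i:=\exp_p(-r e_i)\).

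For \(x\) in the ball \(B(p,r/2)\), the distance \(d(x,q_i)\) is less than \(\mathrm{inj}(\mathcal M)\), and \(x\neq q_i\). Hence each function \(d(\cdot,q_i)\) is smooth near \(p\), and its gradient at \(p\) is the unit vector pointing away from \(q_i\) along the geodesic, namely \(e_i\). The map
\[
\psi_p(x):=\bigl(d(x,q_1),\dots,d(x,q_p)\bigr)
\]
therefore has invertible differential at \(p\). By the inverse function theorem, \(\psi_p\) is injective on some open neighbourhood \(U_p\) of \(p\).

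\emph{Step 2: reducing close pairs to Step 1.} Cover \(\mathcal M\) by finitely many of these neighbourhoods, \(U_{p_1},\dots,U_{p_k}\), and let \(\delta>0\) be a Lebesgue number of this cover. If \(x\neq y\) and \(d(x,y)<\delta\), then \(\{x,y\}\) lies in some \(U_{p_j}\). The landmarks attached to \(p_j\) then already separate \(x\) and \(y\).

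\emph{Step 3: far pairs by compactness.} Consider the set
\[
F:=\{(x,y):d(x,y)\ge\delta\}\subset\mathcal M\times\mathcal M,
\]
which is compact. For each \((x,y)\in F\), the landmark \(z=x\) separates the pair, because
\[
d(x,x)=0<\delta\le d(y,x).
\]
By continuity of \(d\), the condition \(d(x',z)\neq d(y',z)\) persists on an open neighbourhood of \((x,y)\) in \(\mathcal M\times\mathcal M\). Compactness of \(F\) then yields finitely many points \(z_1,\dots,z_\ell\) such that every pair in \(F\) is separated by some \(z_i\).

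\emph{Conclusion.} Take as landmarks the union of the \(p\) landmarks attached to each of \(p_1,\dots,p_k\) together with \(z_1,\dots,z_\ell\). By Steps 2 and 3, every pair of distinct points is separated by at least one coordinate, so \(\varphi\) is injective. Restricting to \(K_\nu\) gives Assumption~\ref{ass-feature-regularity}.

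The main obstacle is Step 1. It requires smoothness of \(d(\cdot,q)\) within the injectivity radius away from \(q\) (so that \(x\) must avoid both \(q\) and its cut locus), together with the gradient computation from the Gauss lemma. The choice \(r<\mathrm{inj}(\mathcal M)/2\) and the restriction to the ball \(B(p,r/2)\) are meant to guarantee this; if the bookkeeping proves delicate, a smaller radius can be used, since only some positive neighbourhood \(U_p\) is needed.
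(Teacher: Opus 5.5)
Your proof is correct. The paper does not prove this proposition: it states it and attributes it to Gromov's distance-geometry viewpoint. There is therefore no in-paper route to match, and your argument supplies what the paper omits.

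The cited viewpoint is the Kuratowski-type map \(x\mapsto d(x,\cdot)\), which is trivially injective when \emph{every} point is a landmark. The real content of the proposition is finiteness, and your decomposition isolates the two mechanisms this needs:
\begin{itemize}
\item Near pairs are separated because the gradients \(\nabla d(\cdot,q_i)(p)=e_i\) span \(T_p\mathcal M\). This uses the Gauss lemma and the inverse function theorem, and a Lebesgue number transfers it to all pairs at distance \(<\delta\).
\item Far pairs are separated by compactness of \(F\).
\end{itemize}

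Your bookkeeping in Step 1 is sound. For \(x\in B(p,r/2)\) one has \(r/2<d(x,q_i)<3r/2<\mathrm{inj}(\mathcal M)\), so \(x\) avoids both \(q_i\) and its cut locus. Hence \(d(\cdot,q_i)\) is smooth there.

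Some optional refinements:
\begin{itemize}
\item Step 3 can be made explicit: any finite \((\delta/2)\)-net \(\{z_i\}\) of \(\mathcal M\) works. If \(d(x,y)\ge\delta\) and \(d(x,z_i)<\delta/2\), then \(d(y,z_i)>\delta/2>d(x,z_i)\). This connects the far-pair part to the farthest-point-sampled landmarks used in the experiments. Your near-pair step is where landmark placement actually matters, since it needs distance gradients spanning each tangent space.
\item Since \(\mathcal M\) is compact, the continuous injective \(\varphi\) is automatically a homeomorphism onto its image. It is not smooth in general, because each coordinate \(d(\cdot,z_i)\) fails to be differentiable at \(z_i\). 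The proposition only asks for continuity and injectivity, so this is no obstacle.
\item You use \(p\) both for the dimension and for the base point.
\item You tacitly assume \(\mathcal M\) is connected, so that \(d\) is finite. Both this and the notation clash are harmless but worth making explicit.
\end{itemize}
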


In the Cartan--Hadamard case there is an even simpler canonical choice of feature
map. For any base point \(z\in\mathcal M\), the Cartan--Hadamard theorem implies
that the exponential map
\[
\exp_z:T_z\mathcal M \to \mathcal M
\]
is a global diffeomorphism. Hence its inverse
\[
\mathrm{Log}_z:\mathcal M\to T_z\mathcal M\simeq \mathbb R^p
\]
is globally defined and continuous, and therefore
\(\varphi:=\mathrm{Log}_z|_{K_\nu}\) automatically satisfies
Assumption~\ref{ass-feature-regularity}. Thus, on Cartan--Hadamard manifolds,
Euclidean universal approximation transfers directly through this global chart.

%% file: sections/implementation-details.tex
Unless otherwise specified, all experiments use the intrinsic quadratic
transport cost
\[
c(x,y)=\frac12 d(x,y)^2,
\]
where \(d\) is the Riemannian geodesic distance on the corresponding
manifold. Entropic RNOT parameterizes the target-side Schr\"odinger
potential with per-batch gauge centering and a two-hidden-layer MLP of width
\(256\), SiLU activations, Kaiming-normal hidden-layer initialization, and a
small-initialized scalar output layer. Manifold inputs are embedded either
through landmark/Gromov-distance features, used for
\(\mathbb S^2\), \(\mathrm{SO}(3)\), and \(\mathrm{SE}(3)\), or through
Riemannian logarithmic coordinates, used for \(\mathrm{SPD}(3)\) and
\(\mathbb H^2\). Layer normalization is applied after landmark-based
embeddings and disabled after logarithmic embeddings.

The empirical semidual objective is optimized by minibatch stochastic
gradient ascent using Adam with learning rate \(10^{-3}\), batch size \(256\),
and cosine learning-rate decay. Unless otherwise stated, the entropic
regularization is set by the median-cost heuristic
\[
\varepsilon = 0.05 \times \operatorname{median}(C),
\]
with the same value matched across Entropic RNOT and Sinkhorn-based methods. When a
deterministic transport summary is required, we extract it from the entropic
conditional distribution by heat-smoothed mode finding; the heat time is
specified separately for each experiment. All experiments are implemented in
JAX~0.6.2 with \texttt{float32} precision enabled and run on a single NVIDIA
RTX A6000 GPU with \(36\)\,GB of memory. Sinkhorn references and baselines
use OTT-JAX~\citep{Cuturi2022-ms}. Scaling results are averaged over three
random seeds, with timing measured after explicit JIT warm-up.

\subsection{Barycentric and heat-smoothed map extraction}
\label{app:barycentric-heat-extraction}

Given a trained Entropic RNOT target-side Schr\"odinger potential \(g_\theta\), a source
point \(x\), and a discrete target support \(\{y_j\}_{j=1}^M\), we first form
the entropic conditional weights
\[
w_j(x)
=
\frac{
\beta_j
\exp\!\left((g_\theta(y_j)-c(x,y_j))/\varepsilon\right)
}{
\sum_{k=1}^M
\beta_k
\exp\!\left((g_\theta(y_k)-c(x,y_k))/\varepsilon\right)
},
\]
where \(\beta_j\) denotes the target support mass. For uniform empirical
targets, \(\beta_j=1/M\), and this factor cancels. These weights define the
discrete conditional law
\[
\pi_\theta(\cdot\mid x)
=
\sum_{j=1}^M w_j(x)\,\delta_{y_j}.
\]

We consider two intrinsic deterministic summaries of this conditional law.
The first is the Riemannian barycentric projection, defined as the Fr\'echet
mean
\[
T_{\mathrm{bar}}(x)
\in
\operatorname*{argmin}_{z\in\mathcal M}
\frac12 \sum_{j=1}^M w_j(x)\, d(z,y_j)^2 .
\]
We approximate this minimizer by Karcher iterations. Starting from the target
atom with largest conditional weight,
\(z_0=y_{\arg\max_j w_j(x)}\), we repeat
\[
v_\ell
=
\sum_{j=1}^M w_j(x)\,\mathrm{Log}_{z_\ell}(y_j),
\qquad
z_{\ell+1}
=
\mathrm{Exp}_{z_\ell}\!\left(\eta_{\mathrm{bar}} v_\ell\right),
\]
followed by projection back to the manifold representation when required.
In our implementation we use \(32\) iterations with
\(\eta_{\mathrm{bar}}=0.5\). On Cartan--Hadamard manifolds, where squared
distance is globally convex along geodesics, this is gradient descent on the
Fr\'echet objective. 

The second summary is the heat-smoothed mode used in our main evaluations.
Ideally, for each conditional law one may smooth the weighted target atoms
by the manifold heat kernel,
\[
q_t(z)
=
\sum_{j=1}^M w_j(x)\,p_t(y_j,z),
\]
where \(p_t\) is the heat kernel on \(\mathcal M\). Since closed-form heat
kernels are not available uniformly across the manifolds considered here, we
use a Varadhan-type geodesic heat-kernel surrogate based on the leading
short-time logarithmic asymptotic
\[
\log p_t(y_j,z)
\approx
-\frac{d(y_j,z)^2}{4t}
=
-\frac{c(y_j,z)}{2t}.
\]
This yields the intrinsic smoothed objective
\[
\ell_t(z)
=
\log \sum_{j=1}^M
w_j(x)
\exp\!\left(-\frac{c(z,y_j)}{2t}\right),
\]
which corresponds to smoothing by a geodesic Gaussian kernel. This surrogate
keeps the smoothing intrinsic while avoiding manifold-specific heat-kernel
normalization factors, curvature corrections, and volume-distortion terms.

We maximize \(\ell_t\) by Riemannian gradient ascent. At the current iterate
\(z_\ell\), define the heat-reweighted responsibilities
\[
\alpha_j(z_\ell)
=
\frac{
w_j(x)\exp\!\left(-c(z_\ell,y_j)/(2t)\right)
}{
\sum_{k=1}^M
w_k(x)\exp\!\left(-c(z_\ell,y_k)/(2t)\right)
}.
\]
The ascent direction is the corresponding weighted log-map average,
\[
v_\ell
=
\sum_{j=1}^M \alpha_j(z_\ell)\,\mathrm{Log}_{z_\ell}(y_j),
\]
and we update
\[
z_{\ell+1}
=
\mathrm{Exp}_{z_\ell}\!\left(\eta_{\mathrm{heat}} v_\ell\right),
\]
again projecting back to the manifold representation when necessary. The
constant factor \(1/(2t)\) in the exact gradient of the surrogate objective
is absorbed into the step size. We use \(32\) ascent steps with
\(\eta_{\mathrm{heat}}=0.5\). To reduce sensitivity to initialization, the
mode finder is run from multiple intrinsic initializers built from the
conditional weights, including the heaviest target atom, and the candidate
with largest final value of \(\ell_t\) is selected.

For a discrete reference or baseline plan \(P\in\mathbb R_+^{N\times M}\), the
same extraction procedure is applied row-wise after normalizing each row:
\[
w_j^{(i)}
=
\frac{P_{ij}}{\sum_{k=1}^M P_{ik}}.
\]
Thus learned plans, Sinkhorn reference plans, and baseline plans are all
converted into deterministic summaries using the same intrinsic procedure.
For memory efficiency, the row-wise heat-smoothed extraction is evaluated in
chunks.
\subsection{Intrinsic-geometry benchmarks}
\label{app:impl-q1}

This subsection provides the implementation details for the synthetic
intrinsic-geometry benchmarks from
Section~\ref{sec:exp-synth-geometry}. We describe the manifold models, the
construction of source and target distributions, the numerical reference
solutions, the baselines, the neural semidual model, and the evaluation
protocol.

\paragraph{Manifold models and costs.}
We consider six manifold configurations spanning five manifold families.

\emph{Sphere $\mathbb S^2$.}
Points are represented as unit vectors in $\mathbb R^3$, with geodesic
distance
$$
d_{\mathbb S^2}(x,y)=\arccos(\langle x,y\rangle),
$$
computed via the numerically stable $\mathrm{atan2}$ formulation.

\emph{Rotation group $\mathrm{SO}(3)$.}
Rotations are represented as unit quaternions $q\in\mathbb R^4$ with the
antipodal identification $q\sim -q$ handled by sign normalization ($q_0\ge 0$).
The intrinsic distance is
$$
d_{\mathrm{SO}(3)}(q_1,q_2)
= 2\arccos\bigl(\min(|\langle q_1,q_2\rangle|,1))\bigr),
$$
computed via the same numerically stable $\mathrm{atan2}$ formulation as for $\mathbb{S}^2$, with antipodal identification handled by sign flipping when $\langle q_1,q_2\rangle<0$.

\emph{Rigid motions $\mathrm{SE}(3)$.}
Elements are represented as $(q,t)\in\mathbb R^7$ where $q$ is a unit
quaternion and $t\in\mathbb R^3$ is a translation. We use the weighted
product metric
$$
d_{\mathrm{SE}(3)}^2\bigl((q_1,t_1),(q_2,t_2)\bigr)
= \alpha^2\, d_{\mathrm{SO}(3)}(q_1,q_2)^2 + \|t_1 - t_2\|^2,
$$
with coupling weight $\alpha=2.0$ in the synthetic benchmark.

\emph{SPD manifold $\mathrm{SPD}(3)$, affine-invariant.}
We equip $\mathrm{SPD}(3)$ with the affine-invariant Riemannian metric
(AIRM), for which
$$
d_{\mathrm{AIRM}}(X,Y)
= \bigl\| \log\bigl(X^{-1/2}YX^{-1/2}\bigr) \bigr\|_F .
$$

\emph{SPD manifold $\mathrm{SPD}(3)$, log-Euclidean.}
We additionally evaluate on $\mathrm{SPD}(3)$ equipped with the log-Euclidean
metric
$$
d_{\mathrm{LE}}(X,Y) = \| \log X - \log Y \|_F,
$$
using the same underlying SPD support as the AIRM row. This produces a
distinct reference plan and tests the method's ability to respect a
different Riemannian structure on the same ambient space.

\emph{Hyperbolic plane $\mathbb H^2$.}
We use the Lorentz (hyperboloid) model
$$
\mathbb H^2 = \{x\in\mathbb R^3 : -x_0^2+x_1^2+x_2^2 = -1,\; x_0>0\},
$$
with geodesic distance
$$
d_{\mathbb H^2}(x,y)
= \operatorname{arcosh}\bigl(-\langle x,y\rangle_{\mathcal M}\bigr),
$$
where $\langle\cdot,\cdot\rangle_{\mathcal M}$ is the Minkowski inner product.

\paragraph{Synthetic source and target distributions.}
For each manifold, source and target distributions are constructed as
wrapped normal distributions: samples are drawn from a Gaussian in the
tangent space at a prescribed center $p\in M$ and mapped to the manifold
via the Riemannian exponential map,
$$
v \sim \mathcal N(0,\sigma^2 I_{T_pM}),
\qquad
x = \exp_p(v).
$$
The target distribution is obtained by shifting the center to a
geodesically distant point, chosen to stress curvature effects and
expose the failure of Euclidean approximations.

The specific benchmark instances are:

\begin{itemize}
\item \emph{$\mathbb S^2$.}
Source centered at the north pole $(0,0,1)$ with tangent scale $\sigma=0.7$.
Target centered at $(-0.5,0,-0.866)$, approximately $150^\circ$ from the
north pole, with $\sigma=0.7$. Both distributions wrap substantially around
the sphere, making tangent-space linearization inaccurate.

\item \emph{$\mathrm{SO}(3)$.}
Source centered at the identity quaternion. Target obtained by applying the
exponential map to a tangent vector of norm $2.5$\,rad ($\approx 143^\circ$
rotation), with tangent scale $\sigma=0.8$. This places the target near the
antipodal region where curvature distortion is maximal.

\item \emph{$\mathrm{SPD}(3)$.}
Source centered at $\mathrm{diag}(4,1,0.25)$ (condition number~16) with
$\sigma=0.5$. Target centered at a $45^\circ$ rotation of
$\mathrm{diag}(0.25,1,4)$ in the 1--3 eigenplane, yielding center
$\begin{psmallmatrix}2.125 & 0 & -1.875\\ 0 & 1 & 0\\ -1.875 & 0 & 2.125\end{psmallmatrix}$,
with $\sigma=0.5$. The non-commuting eigenbases ensure that AIRM and
log-Euclidean geodesics genuinely diverge.

\item \emph{$\mathrm{SPD}(3)(\mathrm{LE})$.}
Same source and target support as the AIRM row, but with the log-Euclidean
cost replacing the affine-invariant cost. The discrete Sinkhorn reference
is recomputed under the LE metric.

\item \emph{$\mathrm{SE}(3)$.}
Source uniform on $\mathrm{SO}(3)\times[-4,4]^3$ (Haar rotation, uniform
translation). Target a $60^\circ$ rotation about the $z$-axis paired with
translation $(1.0,0.5,-0.5)$, with $\sigma_{\mathrm{rot}}=0.3$ and
$\sigma_{\mathrm{trans}}=0.5$ truncated to $[-4,4]^3$. The weighted product
metric ($\alpha=2$) costs rotation $4\times$ translation, so the geodesic
cost is dominated by the $\mathrm{SO}(3)$ component where tangent-space
linearization is most distortive.

\item \emph{$\mathbb H^2$.}
Source centered at the origin of the hyperboloid. Target at
$\exp_o(2.0\cdot e_1)$, a geodesic distance of $\approx 2.0$ from the
origin, with $\sigma=0.5$.
\end{itemize}

\paragraph{Discrete reference entropic plan.}
For each benchmark, we compute a discrete reference solution on a shared
evaluation support of $N=200$ source and $M=200$ target samples drawn from
the respective distributions. The pairwise geodesic cost matrix
\(C_{ij}=\tfrac12 d(x_i,y_j)^2\) is formed using the intrinsic distance of
each manifold, and the entropic OT problem is solved with the Sinkhorn
algorithm in the log domain via OTT-JAX~\citep{Cuturi2022-ms}, using \(200\)
iterations. The regularization parameter \(\varepsilon\) is set by the
median-cost heuristic described above, with the median computed over
\(256\times 256\) sampled cost pairs.

When a transport map is needed from the reference plan, we extract it using
the common heat-smoothed mode-finding procedure with
\(t_{\mathrm{heat}}=100\varepsilon\).

\paragraph{Baselines.}

\emph{Ambient Euclidean.}
Points are treated as vectors in their ambient representation
($\mathbb R^3$ for $\mathbb S^2$ and $\mathbb H^2$; $\mathbb R^4$ for
$\mathrm{SO}(3)$; $\mathbb R^7$ for $\mathrm{SE}(3)$;
$\mathbb R^{3\times 3}$ vectorized for $\mathrm{SPD}(3)$). Entropic OT is
solved with squared Euclidean cost $c(x,y)=\tfrac12\|x-y\|^2$ using the
same Sinkhorn solver and matched $\varepsilon$.

\emph{Tangent-space.}
A reference point is computed as the intrinsic Fr\'echet mean of the source
samples (50 gradient steps, step size 0.5). All source and target points
are mapped to the tangent space at this reference via the Riemannian
logarithm, Euclidean entropic OT is solved with squared Frobenius cost in
that tangent space, and the transport summary is mapped back via the
exponential map. This baseline is limited to a single chart.

Both baselines use 200 Sinkhorn iterations with the same $\varepsilon$ as
Entropic RNOT and the discrete reference.

\paragraph{Entropic RNOT model.}
Entropic RNOT uses the common gauge-centered target-side
Schr\"odinger-potential parameterization described above. For the synthetic
benchmarks, the manifold-specific input representation is chosen as follows:
\begin{itemize}
\item \textbf{Landmark/Gromov-distance embeddings.}
For compact manifolds \((\mathbb S^2,\mathrm{SO}(3))\) and for the product
manifold \(\mathrm{SE}(3)\), we represent each input by its geodesic
distances to \(256\) landmarks. The landmarks are selected by farthest-point
sampling from a pool of \(4{,}096\) candidates drawn equally from the source
and target distributions.

\item \textbf{Logarithmic embeddings.}
For non-compact manifolds \((\mathrm{SPD}(3)\) with both AIRM and
log-Euclidean metrics, and \(\mathbb H^2)\), we use Riemannian logarithmic
coordinates at the manifold origin.
\end{itemize}
\paragraph{Optimization.}
For the synthetic benchmarks, training uses \(3{,}000\) stochastic-gradient
iterations. The regularization parameter, optimizer, batch size, learning-rate
schedule, and numerical precision follow the common setup above.

\paragraph{Transport map extraction.}
Given a trained Entropic RNOT potential $g_\theta$, we evaluate the Gibbs conditional
$$
\pi_\theta(dy\mid x)
\propto
\exp\!\left(\frac{g_\theta(y)-c(x,y)}{\varepsilon}\right) \nu(dy)
$$
through normalized weights on the shared target support. The transport
map is extracted via \emph{heat-smoothed mode finding}: for each source
point $x_i$, we find the mode of the heat-kernel--smoothed conditional
density.
$$
q_t(z) = \sum_j w_j\, p_t(y_j, z),
$$
where $w_j$ are the conditional weights and $p_t$ is the manifold heat
kernel at time $t_{\mathrm{heat}}=100\varepsilon$. The mode is found by
multi-start Riemannian gradient ascent ($32$ steps, step size $0.5$), with candidates initialized from the target atom with largest conditional
weight.

\paragraph{Metric computation.}
All metrics are evaluated on the shared \(200\times 200\) support. Let
\(\widehat\pi\) denote the learned or baseline plan and let \(\pi^\star\)
denote the discrete manifold Sinkhorn reference plan on that support,
computed using the intrinsic quadratic cost. To compare deterministic
transport summaries, we apply the same heat-smoothed mode extractor to every
plan. Namely, for each source point \(x_i\), we form the conditional target
weights
\[
\widehat w_j^{(i)}
=
\frac{\widehat\pi_{ij}}{\sum_k \widehat\pi_{ik}},
\qquad
w_j^{\star,(i)}
=
\frac{\pi^\star_{ij}}{\sum_k \pi^\star_{ik}},
\]
and define \(\widehat T(x_i)\) and \(T^\star(x_i)\) as the corresponding
maximizers of the heat-smoothed target density.  Thus $T^\star$ is the heat-smoothed mode of the discrete Sinkhorn reference plan, rather than an externally provided Monge map.

\emph{Plan KL.}
$\mathrm{KL}(\widehat\pi \,\|\, \pi^\star)
= \sum_{i,j} \widehat\pi_{ij} \log(\widehat\pi_{ij}/\pi^\star_{ij})$,
with a numerical floor of $10^{-30}$.

\emph{Conditional $W_1$.}
$$
\mathrm{cW}_1(\widehat\pi,\pi^\star)
= \frac{1}{N}\sum_{i=1}^N
W_1\!\bigl(\widehat\pi(\cdot\mid x_i), \pi^\star(\cdot\mid x_i)\bigr),
$$
where the inner $W_1$ is computed on the target manifold using the
intrinsic distance. This conditional metric measures how accurately
each method recovers the target conditional transport structure source
point by source point, and does not allow errors to be hidden by
rearranging mass across nearby source locations.

\emph{Map $L^2$.}
$\sqrt{(1/N)\sum_i d\bigl(\widehat T(x_i), T^\star(x_i)\bigr)^{2}}$, the RMS
geodesic error of the heat-smoothed transport summaries.

\emph{Endpoint error.}
$(1/N)\sum_i d\bigl(\widehat T(x_i), T^\star(x_i)\bigr)$, the mean geodesic
error of the transport summaries.

\paragraph{Hardware and software.}
The software stack, numerical precision, GPU, and Sinkhorn implementation are
as described in the common implementation details above.

\paragraph{Default hyperparameters.}
Unless otherwise specified, the default choices across all synthetic
benchmarks are summarized below.

\begin{center}
\small
\begin{tabular}{ll}
\toprule
Network depth & 2 \\
Hidden width & 256 \\
Batch size & 256 \\
Learning rate & $10^{-3}$ \\
Regularization $\varepsilon$ & $0.05 \times \mathrm{median}(C)$ \\
Training iterations & 3{,}000 \\
Evaluation support size & 200 \\
\bottomrule
\end{tabular}
\end{center}

\subsection{Scalability and GPU efficiency}
\label{app:impl-q2}

To examine scaling behavior, we use the same six manifold configurations and
neural architecture as Appendix~\ref{app:impl-q1}, varying the support size
\(N_x=N_y=N\in\{128,256,\dots,32{,}768\}\) over powers of two. For each
\(N\), fresh source and target samples are drawn, and timing is averaged over
three seeds. We compare Entropic RNOT against discrete manifold
Sinkhorn and Euclidean Sinkhorn. All methods use the matched median-cost regularization described in the
common implementation details.

Timing excludes
one-time JIT compilation via explicit warm-up passes. Peak memory is
recorded via JAX's \texttt{peak\_bytes\_in\_use} statistic; runs exceeding
the device budget are marked as out-of-memory.

\paragraph{Measurement protocol.}
All timings are preceded by an explicit warm-up pass to exclude JAX JIT
compilation from the timed block, and wrapped in
\texttt{block\_until\_ready} to force synchronization. Because JAX's
\texttt{peak\_bytes\_in\_use} counter is monotonic per-process, we launch
one subprocess per $(\text{manifold}, \text{method}, N)$ cell via an
orchestrator script; each subprocess runs all $3$ seeds, exits, and writes
its partial result to disk. Within a subprocess, throughput is measured
only on the last seed so that the $\mathcal{O}(N \cdot B_y)$ transient cost
matrices allocated during inference do not contaminate the training-peak
snapshots of earlier seeds. Training-time peak memory reported in the
figures is the mean over $3$ uncontaminated seeds.

\paragraph{Throughput metrics.}
\emph{Potential throughput} times the batched forward pass
$x \mapsto g_\theta(x)$ on $N$ source samples. \emph{Transport throughput}
additionally includes forming the neural entropic plan against a target
batch of $B_y = 1{,}024$ samples and running heat-smoothed mode-finding
at $t_{\mathrm{heat}} = 100\varepsilon$. \emph{Chunked transport
throughput} processes source points in Python-level chunks of $256$,
yielding $\mathcal{O}(1)$-in-$N$ inference memory
($\mathcal{O}(256 \cdot K \cdot D)$ peak by construction) at a modest throughput
cost. All throughput figures are averaged over $10$ timed calls following
$3$ warm-up calls.

\paragraph{Sinkhorn baselines.}
Both Sinkhorn baselines form the full $N \times N$ cost matrix and run
$200$ log-domain iterations via OTT-JAX~\citep{Cuturi2022-ms}; peak memory
is recorded after \texttt{block\_until\_ready}, and runs that raise
out-of-memory errors are marked as infeasible at that $N$. The Euclidean
variant differs from the manifold variant only in using squared Euclidean
cost in ambient coordinates, which isolates the contribution of intrinsic
distance evaluation to the manifold Sinkhorn runtime.

\subsection{Real-World Pose Refinement on \texorpdfstring{\(\mathrm{SE}(3)\)}{SE(3)} for Protein--Ligand Docking}
\label{app:app-real-world-experiment}

\paragraph{Motivation.}
A central computational problem in structure-based drug design is to refine candidate protein--ligand binding poses generated by docking \cite{Trott2010-qp}. Given a receptor pocket and a ligand, docking software produces a ranked list of candidate poses, each specifying a position and orientation of the ligand within the pocket. These candidates are often noisy: high-ranked poses are not always correct, and low-ranked poses are not always wrong. Experimentally determining the true binding geometry requires X-ray crystallography or cryo-EM, which are expensive and slow.

In this work, we study a more specific problem than full de novo pose prediction: crystal-free refinement of docking pose ensembles using only information available at docking time. Concretely, we treat the candidate poses produced by a docking engine as empirical distributions on $\mathrm{SE}(3)$, and learn an Entropic RNOT refinement map that moves geometric outliers toward the docking engine's own top-ranked binding basin. Accordingly, our method should be interpreted as a docking-pose refinement or denoising procedure, not as a direct predictor of the crystal pose. The crystallographic pose $X_k^\star$ is used only for held-out evaluation, never for training or inference.

\paragraph{Dataset.}
We use CrossDocked2020~\citep{Francoeur2020-mc}, a benchmark of protein--ligand complexes organized by pocket-similarity clusters. Let $\mathcal{K}$ denote the set of complexes and let $\mathcal{C}$ denote the set of pocket-similarity clusters, where each complex $k \in \mathcal{K}$ is assigned to a cluster $c(k) \in \mathcal{C}$. Two complexes in the same cluster share structurally related binding-site geometry. The dataset contains $|\mathcal{K}| = 3{,}765$ complexes across $|\mathcal{C}| = 1{,}302$ clusters. For each complex $k$, the available inputs are:
\begin{itemize}
    \item receptor pocket heavy atoms $\mathcal{P}_k = \{p_{k,b}\}_{b=1}^{B_k} \subset \mathbb{R}^3$, where $B_k$ is the number of non-hydrogen atoms in the binding site, the local protein environment that the ligand binds to;
    \item a canonical ligand conformer $L_k = \{\ell_{k,a}\}_{a=1}^{A_k} \subset \mathbb{R}^3$, where $A_k$ is the number of ligand heavy atoms. This is a single known 3D geometry of the drug-like small molecule, used as the reference shape for rigid-body alignment;
    \item a crystallographic ligand pose $X_k^\star = \{x_{k,a}^\star\}_{a=1}^{A_k} \subset \mathbb{R}^3$, the experimentally determined binding geometry. We reserve the crystallographic ligand pose \emph{for held-out evaluation only}, and it is s not used in training or pretraining.
\end{itemize}

\paragraph{Receptor-defined local frame.}
For each complex $k$, we define a deterministic orthonormal frame from receptor pocket heavy atoms $\mathcal{P}_k = \{p_{k,b}\}_{b=1}^{B_k} \subset \mathbb{R}^3$ alone. The origin is the pocket centroid $c_k = B_k^{-1} \sum_b p_{k,b}$. We select the three residue centroids $r_{k,1}, r_{k,2}, r_{k,3}$ nearest to $c_k$ and apply Gram--Schmidt orthonormalization to obtain $Q_k = [e_{k,1}\ e_{k,2}\ e_{k,3}] \in \mathrm{SO}(3)$, with a sign convention that makes the frame deterministic. All ligand and pose coordinates are transformed into this frame via $\bar{x} = Q_k^\top(x - c_k)$, so that the subsequent $\mathrm{SE}(3)$ elements are defined relative to receptor geometry only. No ligand or crystal information is used.
\paragraph{Docking and rigid-body alignment.}
For each complex $k$, we generate $M = 40$ candidate docked poses using GNINA~\citep{McNutt2025-np} with the Vinardo~\citep{Quiroga2016-je} scoring function. GNINA takes as input the receptor pocket structure $\mathcal{P}_k$ and the canonical ligand conformer $L_k$. The docking search box is centered at the pocket centroid $c_k$ with side lengths equal to the coordinate-wise pocket extent plus a margin of $\delta_{\mathrm{box}} = 4$~\AA, defining a large search region from receptor geometry alone. We disable GNINA's CNN-based rescoring (\texttt{-{}-cnn\_scoring none}) to avoid data leakage from models trained on PDBbind crystal poses. This yields posed ligand coordinates $\{X_{k,m}\}_{m=1}^{M}$ with $X_{k,m} = \{x_{k,m,a}\}_{a=1}^{A_k} \subset \mathbb{R}^3$ and associated docking scores $\{s_{k,m}\}_{m=1}^{M}$. Note that both the canonical conformer $L_k$ (the input geometry given to the docking tool) and the docked poses $X_{k,m}$ (the output of the docking tool) are available at inference time; neither requires the crystallographic pose $X_k^\star$. All coordinates are expressed in the receptor-defined local frame.

Each docked pose is converted to a rigid-body transform by Kabsch alignment to the canonical conformer:
\[
(R_{k,m}, t_{k,m}) = \operatorname*{argmin}_{R \in \mathrm{SO}(3),\, t \in \mathbb{R}^3} \frac{1}{A_k} \sum_{a=1}^{A_k} \|R\,\ell_{k,a} + t - x_{k,m,a}\|_2^2,
\]
yielding a pose $g_{k,m} = (R_{k,m}, t_{k,m}) \in \mathrm{SE}(3)$ for each candidate. This representation is exact when the docked pose differs from the canonical conformer by rigid motion only. For flexible ligands, internal torsion changes during docking introduce error that rigid motion cannot capture. We quantify this by the rigid-fit residual
\[
r_{k,m} = \left(\frac{1}{A_k} \sum_{a=1}^{A_k} \|R_{k,m}\,\ell_{k,a} + t_{k,m} - x_{k,m,a}\|_2^2\right)^{1/2}.
\]
We discard poses with $r_{k,m} > \tau_{\mathrm{rigid}} = 2.5$~\AA, retaining 155 complexes and 5{,}061 poses. After the source/target split and cluster-based train/test partition, this yields 117 training complexes (3{,}567 source, 559 target) across 75 clusters and 29 test complexes (814 source, 110 target) across 18 held-out clusters.

Since our pose representation models each docked ligand as a rigid transform of a fixed conformer, the near-rigid subset provides the most faithful setting for evaluating the method under its intended geometric assumptions. We therefore use this subset as our primary benchmark. At the same time, to test robustness beyond this idealised regime, we also evaluate on the full unfiltered docking ensembles. Performance remains comparable and is in several metrics even stronger without rigid filtering, suggesting that the method is reasonably robust to violations of the rigid-pose assumption. See Table~\ref{tab:se3-refinement_norigid}. The consistency between the restricted and unfiltered evaluations also suggests that the conclusions drawn from the near-rigid benchmark are not merely an artifact of reduced sample size.

\paragraph{Source and target distributions.}
For each complex $k$, let $g_{k,1}^*$ denote the top-ranked pose by Vinardo score. We partition the retained poses into target and source sets by geometric proximity to the docking tool's highest-confidence prediction:
\[
I_k^{\mathrm{tgt}} = \{m : d_{\mathrm{SE}(3)}(g_{k,m}, g_{k,1}^*) \leq \delta_{\mathrm{prox}}\}, \qquad I_k^{\mathrm{src}} = \{m : d_{\mathrm{SE}(3)}(g_{k,m}, g_{k,1}^*) > \delta_{\mathrm{prox}}\},
\]

with $\delta_{\mathrm{prox}} = 5.0$~\AA, the standard binding-mode tolerance used in docking evaluation~\citep{McNutt2025-np, Corso2022-si}. Target poses are those that lie within the prescribed $\mathrm{SE}(3)$
neighborhood of the top-ranked prediction; source poses are geometric outliers. Complexes that yield empty source or target sets are discarded. This construction uses only the docking scores and the $\mathrm{SE}(3)$ metric; no crystallographic information is consulted. This construction makes the learning problem entirely crystal-free: the transport targets are defined solely from docking-generated poses and their geometry, without reference to the experimental structure. At the same time, the learned refinement is toward the docking engine's self-consistent top-ranked basin rather than directly toward the crystallographic pose.

The source and target empirical measures are
\[
\mu_k = \frac{1}{|I_k^{\mathrm{src}}|} \sum_{m \in I_k^{\mathrm{src}}} \delta_{g_{k,m}}, \qquad \nu_k = \frac{1}{|I_k^{\mathrm{tgt}}|} \sum_{m \in I_k^{\mathrm{tgt}}} \delta_{g_{k,m}}.
\]

\paragraph{Training and testing split.}
We split by pocket-similarity cluster to prevent structural leakage: if two complexes share a similar binding site, they are assigned to the same partition. With a 80/20 split, this yields 75 training clusters (117 complexes, 3{,}567 source and 559 target poses) and 18 held-out test clusters (29 complexes, 814 source and 110 target poses). No cluster appears in both partitions.

\paragraph{Empirical $\mathrm{SE}(3)$ metric.}
We equip $\mathrm{SE}(3)$ with the product metric $d_{\mathrm{SE}(3)}(g,h)^2 = \alpha^2 d_{\mathrm{SO}(3)}(R_g, R_h)^2 + \|t_g - t_h\|_2^2$. The weight $\alpha$ is set to the median radius of gyration of training ligands, $\alpha = \tilde{r}_{\mathrm{gyr}} = 2.44$~\AA\ (computed from 117 training complexes), so that a rotation of $\theta$ rad contributes approximately $r_{\mathrm{gyr}} \cdot \theta$~\AA\ to the distance, making rotational and translational components commensurate with heavy-atom RMSD. All training inputs are derived from the docking pipeline described above.

\paragraph{Training.}
Source and target poses are pooled across training complexes into empirical
distributions \(\mu\) and \(\nu\) on \(\mathrm{SE}(3)\). The entropic
regularization is set to
\(\varepsilon = 0.05 \times \mathrm{median}(C) = 16.19\), where \(C\) is the
pairwise \(\mathrm{SE}(3)\) cost matrix on a calibration subsample. The Entropic RNOT semidual potential \(\psi_\theta\) uses the common landmark-distance
embedding and MLP architecture. Training follows the common optimizer setup,
but uses \(5{,}000\) steps for this experiment, taking approximately
\(14\) seconds on one GPU.

\paragraph{Evaluation.}
Only at evaluation do we use the crystallographic pose $X_k^\star$. For each test complex $k$, we compute the crystal rigid-body transform $g_k^\star = (R_k^\star, t_k^\star) \in \mathrm{SE}(3)$ by the same Kabsch alignment to the canonical conformer in the receptor-defined local frame. For each source pose $g \in \mathrm{spt}(\mu_k)$, we compute the refined pose from the learned Entropic RNOT conditional using
heat-smoothed mode finding with \(t=\varepsilon\). At $t = \varepsilon$, the heat-smoothed mode concentrates on the locally dominant cluster of the entropic conditional rather than averaging across geometrically distant modes. This is particularly important on $\mathrm{SE}(3)$, where the conditional $\pi(\cdot \mid g)$ can be multimodal: a source pose far from the target region may have non-negligible weight on multiple distinct binding modes, and the Fréchet mean of these modes (recovered at large $t$) may lie in a geometrically invalid region of pose space between them.
 
Per-complex top-1 metrics (best transported pose per complex) are reported to enable comparison with other methods. All metrics are averaged over the 29 held-out test complexes (814 source poses, 18 clusters) and and bootstrapping with $100$ resamples is used for $95\%$ confidence intervals.

As a baseline, we also run GNINA's built-in local energy minimization (\texttt{-{}-minimize}) on the same test source poses using the Vinardo scoring function, which refines each pose by optimization on the docking energy surface. This baseline uses the same information available at inference time (receptor, ligand, and docking scores) but operates independently on each pose without any learned transport.

As a further baseline, we run per-complex discrete Sinkhorn directly on $\mathrm{SE}(3)$: for each test complex, we form the intrinsic geodesic cost matrix, solve the entropic OT problem with matched $\varepsilon$ ($200$ log-domain iterations via OTT-JAX), and extract the transport map by the same heat-smoothed mode-finding used for Entropic RNOT. This isolates the benefit of amortized cross-complex learning: with only a handful of target poses per complex, the per-complex discrete plan is severely under-determined, and the baseline's performance quantifies how much worse one does by solving each complex in isolation rather than pooling across the training distribution. This is visualized in Figure  \ref{fig:schematic_dock}

%% file: sections/algorithms.tex
\begin{algorithm}[htbp]
\caption{Training Entropic RNOT via the neural semidual objective}
\label{alg:neural-semidual-eot}
\begin{algorithmic}[1]
\STATE \textbf{Input:} samples from \(\mu\) and \(\nu\), regularization \(\varepsilon>0\), feature map \(\varphi\), neural potential \(a_\theta\)
\FOR{each training iteration}
    \STATE Sample minibatches \(\{x_i\}_{i=1}^B\sim\mu\) and \(\{y_j\}_{j=1}^B\sim\nu\)
    \STATE Evaluate and center the target-side potential
    \[
    g_\theta(y_j)
    =
    a_\theta(y_j)-\frac1B\sum_{\ell=1}^B a_\theta(y_\ell)
    \]
    \STATE Compute the empirical soft \(c\)-transform
    \[
    f_\theta^\varepsilon(x_i)
    =
    -\varepsilon\log\!\left(
    \frac1B\sum_{j=1}^B
    \exp\!\left(
    \frac{g_\theta(y_j)-c(x_i,y_j)}{\varepsilon}
    \right)
    \right)
    \]
    \STATE Form \(\widehat{\mathcal J}_\varepsilon(\theta)\) and take a gradient ascent step
\ENDFOR
\STATE \textbf{Output:} Entropic RNOT potential \(g_\theta\)
\end{algorithmic}
\end{algorithm}

\begin{algorithm}[h!]
\caption{Intrinsic barycentric map induced by Entropic RNOT}
\label{alg:intrinsic-entropic-barycentric-map-compact}
\begin{algorithmic}[1]
\STATE \textbf{Input:} source point \(x\), target support \(\{y_j\}_{j=1}^K\sim\nu\), regularization \(\varepsilon>0\), neural potential \(\psi_\theta\)
\STATE Evaluate the target-side potential \(\psi_\theta(y_j)\)
\STATE Compute the entropic conditional weights
\[
w_j(x)
=
\frac{
\exp\!\left(\frac{\psi_\theta(y_j)-c(x,y_j)}{\varepsilon}\right)
}{
\sum_{k=1}^K
\exp\!\left(\frac{\psi_\theta(y_k)-c(x,y_k)}{\varepsilon}\right)
}
\]
\STATE Initialize \(z_0\) from the heaviest atom \(y_{j^\star}\), where
\[
j^\star\in\arg\max_{j=1,\dots,K} w_j(x)
\]
\FOR{\(\ell=0,\dots,T-1\)}
    \STATE Compute the intrinsic barycenter direction
    \[
    v_\ell
    =
    \sum_{j=1}^K w_j(x)\mathrm{Log}_{z_\ell}(y_j)
    \]
    \STATE Update
    \[
    z_{\ell+1}
    =
    \mathrm{Exp}_{z_\ell}(\eta\, v_\ell)
    \]
\ENDFOR
\STATE \textbf{Output:} intrinsic entropic barycentric projection \(T_\varepsilon(x)=z_T\)
\end{algorithmic}
\end{algorithm}

\begin{algorithm}[htbp]
\caption{Heat-smoothed transport extraction from an Entropic RNOT plan}
\label{alg:heat-smoothed-transport}
\begin{algorithmic}[1]
\STATE \textbf{Input:} transport plan \(P\in\mathbb{R}_{+}^{N\times M}\), target support \(\{y_j\}_{j=1}^M\subset\mathcal M\), diffusion time \(t>0\), manifold operations \(\mathrm{Log},\mathrm{Exp}\), initializer set \(\{z_0^{(m)}\}_{m=1}^L\)
\FOR{each source index \(i=1,\dots,N\)}
    \STATE Normalize the \(i\)-th plan row into conditional weights
    \[
    w_j^{(i)}=\frac{P_{ij}}{\sum_{k=1}^M P_{ik}},\qquad j=1,\dots,M
    \]
    \FOR{each initialization \(m=1,\dots,L\)}
        \STATE Set \(z^{(m)}_0=z_0^{(m)}\in\mathcal M\)
        \FOR{\(\ell=0,\dots,T-1\)}
            \STATE Compute the heat-kernel reweighted conditional coefficients
            \[
            \alpha_{j,\ell}^{(m)}
            =
            \frac{
            w_j^{(i)}
            \exp\!\bigl(-c(z_\ell^{(m)},y_j)/(2t)\bigr)
            }{
            \sum_{k=1}^M
            w_k^{(i)}
            \exp\!\bigl(-c(z_\ell^{(m)},y_k)/(2t)\bigr)
            }
            \]
            \STATE Form the intrinsic ascent direction
            \[
            v_\ell^{(m)}
            =
            \sum_{j=1}^M
            \alpha_{j,\ell}^{(m)}
            \mathrm{Log}_{z_\ell^{(m)}}(y_j)
            \]
            \STATE Take a Riemannian gradient ascent step
            \[
            z_{\ell+1}^{(m)}
            =
            \mathrm{Exp}_{z_\ell^{(m)}}\!\bigl(\eta\, v_\ell^{(m)}\bigr)
            \]
        \ENDFOR
        \STATE Evaluate the heat-smoothed log-density
        \[
        \log q_t\!\bigl(z_T^{(m)}\bigr)
        =
        \log\!\left(
        \sum_{j=1}^M
        w_j^{(i)}\,p_t(y_j,z_T^{(m)})
        \right)
        \approx
        \log\!\left(
        \sum_{j=1}^M
        w_j^{(i)}
        \exp\!\bigl(-c(y_j,z_T^{(m)})/(2t)\bigr)
        \right)
        \]
    \ENDFOR
    \STATE Select the best mode across initializations
    \[
    T(x_i)=z_T^{(m^\star)},
    \qquad
    m^\star\in\arg\max_{m=1,\dots,L}\log q_t\!\bigl(z_T^{(m)}\bigr)
    \]
\ENDFOR
\STATE \textbf{Output:} transported points \(\{T(x_i)\}_{i=1}^N\subset\mathcal M\)
\end{algorithmic}
\end{algorithm}

\begin{algorithm}[htbp]
\caption{Heat-smoothed Gibbs mode extraction on a manifold}
\label{alg:heat-smoothed-mode}
\begin{algorithmic}[1]
\STATE \textbf{Input:} conditional weights \(\{w_j\}_{j=1}^M\), target support \(\{y_j\}_{j=1}^M\subset\mathcal M\), diffusion time \(t>0\), step size \(\eta\), initializers \(\{z_0^{(m)}\}_{m=1}^L\)
\FOR{each initialization \(m=1,\dots,L\)}
    \STATE Set \(z_0^{(m)}\in\mathcal M\)
    \FOR{\(\ell=0,\dots,T-1\)}
        \STATE Compute heat-smoothed weights
        \[
        \alpha_{j,\ell}^{(m)}
        =
        \frac{
        w_j \exp\!\bigl(-c(z_\ell^{(m)},y_j)/(2t)\bigr)
        }{
        \sum_{k=1}^M
        w_k \exp\!\bigl(-c(z_\ell^{(m)},y_k)/(2t)\bigr)
        }
        \]
        \STATE Form the intrinsic ascent direction
        \[
        v_\ell^{(m)}
        =
        \sum_{j=1}^M \alpha_{j,\ell}^{(m)}\mathrm{Log}_{z_\ell^{(m)}}(y_j)
        \]
        \STATE Update by exponential map
        \[
        z_{\ell+1}^{(m)}
        =
        \mathrm{Exp}_{z_\ell^{(m)}}\!\bigl(\eta\, v_\ell^{(m)}\bigr)
        \]
    \ENDFOR
    \STATE Compute the final objective
    \[
    \log q_t\!\bigl(z_T^{(m)}\bigr)
    =
    \log\!\left(\sum_{j=1}^M w_j\,p_t(y_j,z_T^{(m)})\right)
    \]
\ENDFOR
\STATE Return the maximizer
\[
z^\star \in \arg\max_{m=1,\dots,L}\log q_t\!\bigl(z_T^{(m)}\bigr)
\]
\STATE \textbf{Output:} heat-smoothed mode \(z^\star\)
\end{algorithmic}
\end{algorithm}

%% file: sections/app-additional-experimental-results.tex
\subsubsection*{Additional synthetic results}

Table~\ref{tab:synthetic-manifold-benchmarks} expands Table~\ref{tab:synthetic-benchmarks} with two additional metrics — conditional $W_1$ ($\mathrm{cW}_1$) at the plan level and intrinsic RMS geodesic error ($\mathrm{Map~}L^2$) at the map level. We also include one additional manifold, $\mathrm{SPD}(3)$~(LE), which evaluates the same SPD support under the log-Euclidean cost instead of the affine-invariant Riemannian metric.

\begin{table*}[h!]
\centering
\small
\caption{Synthetic transport benchmarks across manifold geometries. Lower is
better for all metrics. Plan errors are computed relative to a discrete
manifold Sinkhorn reference on a sampled support.}
\label{tab:synthetic-manifold-benchmarks}
\begin{tabular}{llcccc}
\toprule
Manifold & Method & Plan KL $\downarrow$ & $\mathrm{cW}_1$ $\downarrow$ & Map $L^2$ $\downarrow$ & Endpoint error $\downarrow$ \\
\midrule
\multirow{3}{*}{$\mathbb{S}^2$}
& Ambient Euclidean   & 0.7037 & 0.2269 & 0.3168 & 0.2141 \\
& Tangent-space       & 0.4780 & 0.2133 & 0.3089 & 0.2090 \\
& Entropic RNOT                & \textbf{0.0461} & \textbf{0.0762} & \textbf{0.0943} & \textbf{0.0752} \\
\midrule
\multirow{3}{*}{$\mathrm{SO}(3)$}
& Ambient Euclidean   & 1.4480 & 0.6807 & 0.8684 & 0.7792 \\
& Tangent-space       & 0.1514 & 0.2012 & 0.3486 & 0.2523 \\
& Entropic RNOT                & \textbf{0.0660} & \textbf{0.1825} & \textbf{0.2514} & \textbf{0.2434} \\
\midrule
\multirow{3}{*}{$\mathrm{SPD}(3)$}
& Ambient Euclidean   & 1.8471 & 0.8733 & 2.6820 & 0.6101 \\
& Tangent-space       & 1.2300 & 0.7044 & 1.2423 & 0.4293 \\
& Entropic RNOT                & \textbf{0.0085} & \textbf{0.0657} & \textbf{0.0942} & \textbf{0.0419} \\
\midrule
\multirow{3}{*}{$\mathrm{SPD}(3)$ (LE)}
& Ambient Euclidean   & 1.6509 & 0.7671 & 2.0240 & 0.5291 \\
& Tangent-space       & 1.5816 & 0.7466 & 1.3729 & 0.4831 \\
& Entropic RNOT                & \textbf{0.0268} & \textbf{0.1063} & \textbf{0.3016} & \textbf{0.0771} \\
\midrule
\multirow{3}{*}{$\mathrm{SE}(3)$}
& Ambient Euclidean   & 1.7234 & 0.8250 & 0.2285 & 0.6432 \\
& Tangent-space       & 1.3202 & 0.6674 & 0.2117 & 0.5050 \\
& Entropic RNOT                & \textbf{0.0553} & \textbf{0.1628} & \textbf{0.0787} & \textbf{0.1200} \\
\midrule
\multirow{3}{*}{$\mathbb{H}^2$}
& Ambient Euclidean   & 0.9612 & 0.3126 & 2.1070 & 0.2448 \\
& Tangent-space       & 0.0995 & 0.1166 & 0.6823 & 0.1063 \\
& Entropic RNOT                & \textbf{0.0095} & \textbf{0.0471} & \textbf{0.1427} & \textbf{0.0416} \\
\bottomrule
\end{tabular}
\end{table*}

\subsubsection*{Additional pose refinement results}

Table~\ref{tab:se3-refinement_norigid} extends Table~\ref{tab:se3-refinement} to the full unfiltered CrossDocked2020 ensemble, dropping the $2.5$\,\AA{} rigid-fit residual cutoff used in the main table to restrict evaluation to near-rigid poses. The same per-complex top-1 metrics are reported on the broader set, which includes poses exhibiting non-trivial conformational deviation from rigid alignment.

\begin{table}[h!]
\centering
\small
\caption{Post-docking pose refinement on $\mathrm{SE}(3)$ (CrossDocked2020, 54 held-out clusters). Per-complex top-1 metrics with 95\% bootstrap CIs. Crystal pose used for evaluation only. No rigid filter used.}
\label{tab:se3-refinement_norigid}
\begin{tabular}{lcccc}
\toprule
Method & RMSD (\AA) $\downarrow$ & Median (\AA) $\downarrow$ & @2\AA\ $\uparrow$ & @5\AA\ $\uparrow$ \\
\midrule
No refinement & 9.14$_{\scriptstyle[8.07,10.25]}$ & 9.10$_{\scriptstyle[7.64,9.90]}$ & 9.3\%$_{\scriptstyle[2.7,18.5]}$ & 20.4\%$_{\scriptstyle[11.1,28.7]}$ \\
GNINA --minimize & 9.17$_{\scriptstyle[7.77,11.02]}$ & 8.92$_{\scriptstyle[6.39,9.85]}$ & 0.0\%$_{\scriptstyle[0.0,0.0]}$ & 18.8\%$_{\scriptstyle[6.2,34.4]}$ \\
Sinkhorn SE(3) & 10.08$_{\scriptstyle[7.13,13.23]}$ & 4.03$_{\scriptstyle[2.10,7.43]}$ & 29.6\%$_{\scriptstyle[21.2,40.7]}$ & 53.7\%$_{\scriptstyle[40.7,63.0]}$ \\
Entropic RNOT & \textbf{1.68$_{\scriptstyle[1.43,2.00]}$} & \textbf{1.57$_{\scriptstyle[1.33,1.82]}$} & \textbf{72.2\%$_{\scriptstyle[60.1,83.3]}$} & \textbf{98.1\%$_{\scriptstyle[93.5,100.0]}$} \\
\bottomrule
\end{tabular}
\end{table}

\subsubsection*{Entropic versus non-entropic RNOT}

Table~\ref{tab:chnot-vs-rnot-s2} compares Entropic RNOT with the
non-entropic RNOT model of~\cite{micheli2026riemannianneuraloptimaltransport},
which requires an inner solve. Under matched architecture and batch size, the
two methods reach comparable measure approximation on $\mathbb{S}^2$:
pushforward Wasserstein distances to $\nu$ are similar, while Entropic RNOT is
substantially faster and more memory efficient. Training time is shown in
Figure~\ref{fig:chnot-vs-rnot-s2}.

\begin{table}[t]
\centering
\small
\caption{Entropic RNOT versus non-entropic RNOT on $\mathbb{S}^2$,
mean $\pm$ std over 5 training seeds. Quality is the geodesic $W_p$
between $T_\#\mu$ and a held-out target sample; cost is training
wall-clock and peak GPU memory.}
\label{tab:chnot-vs-rnot-s2}
\begin{tabular}{lcc}
\toprule
Metric & Entropic RNOT (entropic) & RNOT (non-entropic) \\
\midrule
\multicolumn{3}{l}{\emph{Quality}} \\
$W_1(T_\#\mu, \nu)$ $\downarrow$ & 0.1614 $\pm$ 0.0091 & 0.1850 $\pm$ 0.0068 \\
$W_2(T_\#\mu, \nu)$ $\downarrow$ & 0.1988 $\pm$ 0.0123 & 0.2263 $\pm$ 0.0083 \\
\midrule
\multicolumn{3}{l}{\emph{Cost} ($N_{\text{steps}}$ = 3000 / 200, batch = 256)} \\
Train wall-clock (s) $\downarrow$ & 24.2 $\pm$ 0.6 & 82.3 $\pm$ 0.1 \\
Peak memory (MB) $\downarrow$ & 88 & 201 \\
\bottomrule
\end{tabular}
\end{table}

\begin{figure}[ht]
\centering
\includegraphics[width=\textwidth]{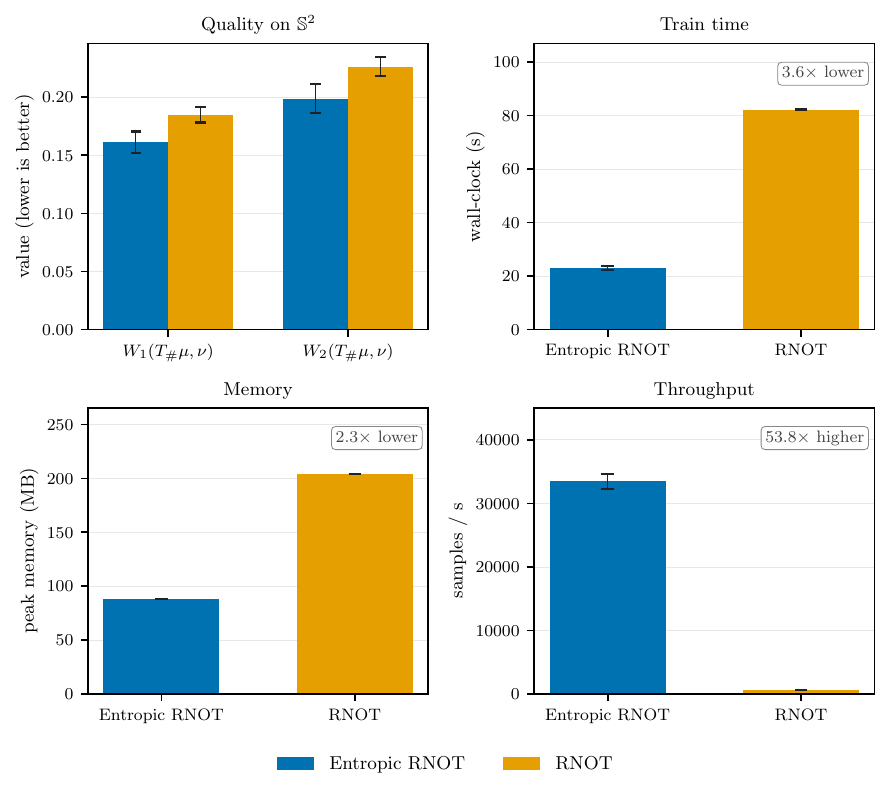}
\caption{Entropic RNOT versus non-entropic RNOT on $\mathbb{S}^2$, mean
$\pm$ std over training seeds. \emph{Leftmost}: pushforward Wasserstein
distances $W_1, W_2$ between $T_\#\mu$ and a held-out target sample of
$\nu$. \emph{Right three panels}: training wall-clock, peak GPU memory, and
training throughput under matched architecture and batch size; annotations
report the relative ratio of non-entropic RNOT versus Entropic RNOT.}

\label{fig:chnot-vs-rnot-s2}
\end{figure}

\subsubsection*{Heat-smoothed transport map and barycentric-projection comparison}

Table~\ref{tab:extractor-ablation-ch} compares two transport extractors on the Cartan--Hadamard manifolds ($\mathrm{SPD}(3)$, $\mathrm{SPD}(3)$~(LE), and $\mathbb{H}^2$): the heat-smoothed mode and the Riemannian barycentric projection, which Proposition~\ref{thm:main-fixed-eps-map} shows makes the recovery guarantee well-defined on these manifolds. We use a large heat time $t$; in this regime, the heat-smoothed mode-finding
iteration approaches the Riemannian barycenter. The two extractors agree to within $10^{-3}$ on every row. We therefore use barycentric projection for $\mathrm{SPD}(3)$, $\mathrm{SPD}(3)$~(LE), and $\mathbb{H}^2$, and the heat-smoothed mode for the compact and mixed-compact manifolds, where the Fr\'echet mean may fail to be unique.

\begin{table}[ht]
\centering
\small
\caption{Effect of transport extractors on Cartan--Hadamard manifolds}
\label{tab:extractor-ablation-ch}
\setlength{\tabcolsep}{4pt}
\begin{tabular}{lcccc}
\toprule
              & \multicolumn{2}{c}{Heat-smoothed mode} & \multicolumn{2}{c}{Riemannian barycenter} \\
\cmidrule(lr){2-3} \cmidrule(lr){4-5}
Method        & Map $L^2$ $\downarrow$ & Endpt err $\downarrow$ & Map $L^2$ $\downarrow$ & Endpt err $\downarrow$ \\
\midrule
\multicolumn{5}{l}{\emph{$\mathrm{SPD}(3)$}} \\
\quad Ambient Euclidean  & 0.6806 & 0.6101 & 0.6802 & 0.6098 \\
\quad Tangent-space      & 0.4688 & 0.4293 & 0.4676 & 0.4286 \\
\quad Entropic RNOT               & 0.0439 & 0.0419 & 0.0438 & 0.0419 \\
\midrule
\multicolumn{5}{l}{\emph{$\mathrm{SPD}(3)$ (LE)}} \\
\quad Ambient Euclidean  & 0.5988 & 0.5291 & 0.5983 & 0.5288 \\
\quad Tangent-space      & 0.5323 & 0.4831 & 0.5318 & 0.4828 \\
\quad Ours               & 0.0859 & 0.0771 & 0.0857 & 0.0770 \\
\midrule
\multicolumn{5}{l}{\emph{$\mathbb{H}^2$}} \\
\quad Ambient Euclidean  & 0.3612 & 0.2448 & 0.3606 & 0.2447 \\
\quad Tangent-space      & 0.1428 & 0.1063 & 0.1427 & 0.1062 \\
\quad Ours               & 0.0424 & 0.0416 & 0.0424 & 0.0415 \\
\bottomrule
\end{tabular}
\end{table}

%% file: sections/proofs.tex
\subsection{Proof of Theorem~\ref{thm:main-fixed-eps-plan}}
\label{app-proof-recovery-entropic-plan}

We work under the standing assumptions of Section~\ref{sec-theory}. In
particular, \(K_\mu\) and \(K_\nu\) are compact, the cost is
\(c(x,y)=\frac12 d(x,y)^2\), and the normalized entropic Schr\"odinger
potential \(g_\varepsilon^\star\) is continuous on \(K_\nu\). We use the
pullback class \(\varphi^*\mathcal F\) and the centered pullback class
\(\mathsf C_\nu(\varphi^*\mathcal F)\) as defined in
\eqref{eq:pullback-class-general} and \eqref{eq-centered-pullback-class}.

We first record that the centering operator \(\mathsf C_\nu\), defined in
\eqref{eq-centered-pullback-class}, preserves uniform density on the centered
subspace.

\begin{lemma}[Density of the centered class]
\label{lem:centered-pullback-class}
Let \(\mathcal A\subset C(K_\nu)\) be dense in \(C(K_\nu)\) with respect to
\(\|\cdot\|_{L^\infty(K_\nu)}\). 
Then:
\begin{enumerate}
\item[(i)] every \(g\in \mathsf C_\nu(\mathcal A)\) satisfies
\[
\int_{K_\nu} g\,d\nu=0;
\]
\item[(ii)] for every \(h\in C(K_\nu)\) with
\[
\int_{K_\nu} h\,d\nu=0,
\]
there exists a sequence \((h_m)_{m\in\mathbb N}\subset \mathsf C_\nu(\mathcal A)\)
such that
\[
\|h_m-h\|_{L^\infty(K_\nu)}\xrightarrow[m\to\infty]{}0.
\]
\end{enumerate}
\end{lemma}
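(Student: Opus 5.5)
Both parts follow directly from the definition of the centering operator, \(\mathsf C_\nu a := a-\int_{K_\nu} a\,d\nu\). Since \(\nu\) is a probability measure concentrated on \(K_\nu\), every \(a\in\mathcal A\subset C(K_\nu)\) is bounded and hence \(\nu\)-integrable, so \(\mathsf C_\nu a\) is well defined.

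\emph{Part (i).} Integrate directly:
\[
\int_{K_\nu}\mathsf C_\nu a\,d\nu=\int_{K_\nu} a\,d\nu-\Bigl(\int_{K_\nu} a\,d\nu\Bigr)\nu(K_\nu)=0,
\]
using \(\nu(K_\nu)=1\).

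\emph{Part (ii).} The key observation is that \(\mathsf C_\nu\) is a contraction-type operator, up to a factor \(2\), in the uniform norm. For \(a,b\in C(K_\nu)\),
\[
\|\mathsf C_\nu a-\mathsf C_\nu b\|_{L^\infty(K_\nu)}\le \|a-b\|_{L^\infty(K_\nu)}+\Bigl|\int_{K_\nu}(a-b)\,d\nu\Bigr|\le 2\|a-b\|_{L^\infty(K_\nu)}.
\]

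Now take \(h\in C(K_\nu)\) with \(\int h\,d\nu=0\). By density of \(\mathcal A\), choose \(a_m\in\mathcal A\) with \(\|a_m-h\|_\infty\to0\). Set \(h_m:=\mathsf C_\nu a_m\in\mathsf C_\nu(\mathcal A)\). Because \(h\) is already centered, \(\mathsf C_\nu h=h\). The estimate above then gives
\[
\|h_m-h\|_\infty=\|\mathsf C_\nu a_m-\mathsf C_\nu h\|_\infty\le 2\|a_m-h\|_\infty\to0.
\]

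There is no real obstacle here. The only points that need care are that uniform convergence on \(K_\nu\) controls the \(\nu\)-integral, which holds because \(\nu\) is a probability measure supported in \(K_\nu\), and that centering fixes functions that are already centered.

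For the application in the main text, take \(\mathcal A=\varphi^*\mathcal F\). This is legitimate because \(\varphi^*\mathcal F\) is uniformly dense in \(C(K_\nu)\) by Proposition~\ref{prop:pullback-universality}.
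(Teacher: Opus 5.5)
Your proof is correct and follows the paper's argument. Part (i) is the same direct integration using \(\nu(K_\nu)=1\). In part (ii), your observation that \(\mathsf C_\nu\) is \(2\)-Lipschitz in the uniform norm and fixes centered functions is the paper's identity \(h_m-h=a_m-h-\int_{K_\nu}(a_m-h)\,d\nu\) and its bound \(2\|a_m-h\|_{L^\infty(K_\nu)}\), phrased at the operator level.
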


\begin{proof}
For (i), let \(g\in \mathsf C_\nu(\mathcal A)\). Then \(g=\mathsf C_\nu a\)
for some \(a\in\mathcal A\), and hence
\[
\int_{K_\nu} g\,d\nu
=
\int_{K_\nu}
\left(a-\int_{K_\nu}a\,d\nu\right)d\nu
=
\int_{K_\nu}a\,d\nu
-
\left(\int_{K_\nu}a\,d\nu\right)\nu(K_\nu)
=0,
\]
because \(\nu(K_\nu)=1\).

For (ii), let \(h\in C(K_\nu)\) satisfy
\[
\int_{K_\nu}h\,d\nu=0.
\]
Since \(\mathcal A\) is dense in \(C(K_\nu)\), there exists a sequence
\((a_m)_{m\in\mathbb N}\subset\mathcal A\) such that
\[
\|a_m-h\|_{L^\infty(K_\nu)}\to 0.
\]
Define
\[
h_m:=\mathsf C_\nu a_m
=
a_m-\int_{K_\nu}a_m\,d\nu .
\]
Then \(h_m\in \mathsf C_\nu(\mathcal A)\). Moreover, since \(h\) is centered,
\[
\int_{K_\nu}a_m\,d\nu
=
\int_{K_\nu}(a_m-h)\,d\nu,
\]
and therefore
\[
h_m-h
=
a_m-h-\int_{K_\nu}(a_m-h)\,d\nu .
\]
Thus
\[
\begin{aligned}
\|h_m-h\|_{L^\infty(K_\nu)}
&\le
\|a_m-h\|_{L^\infty(K_\nu)}
+
\left|\int_{K_\nu}(a_m-h)\,d\nu\right|  \\
&\le
\|a_m-h\|_{L^\infty(K_\nu)}
+
\int_{K_\nu}|a_m-h|\,d\nu  \\
&\le
2\|a_m-h\|_{L^\infty(K_\nu)}.
\end{aligned}
\]
The right-hand side tends to \(0\), so
\[
\|h_m-h\|_{L^\infty(K_\nu)}\to 0.
\]
This proves (ii), and hence the lemma.
\end{proof}

We next record the elementary stability of the soft \(c\)-transform under
uniform perturbations.

\begin{lemma}[Supremum-norm stability of the soft \(c\)-transform]
\label{lem:supnorm-stability}
For all \(g,h\in C(K_\nu)\),
\[
\|\mathcal T_\nu^\varepsilon g-\mathcal T_\nu^\varepsilon h\|_{L^\infty(K_\mu)}
\le
\|g-h\|_{L^\infty(K_\nu)}.
\]
\end{lemma}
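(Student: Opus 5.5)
The plan is a pointwise monotonicity argument, the standard log-sum-exp Lipschitz bound. Fix \(x\in K_\mu\) and set \(\delta:=\|g-h\|_{L^\infty(K_\nu)}\). Since \(\nu\) is concentrated on \(K_\nu\), both normalizing integrals
\[
Z_g(x)=\int_{K_\nu}\exp\!\left(\frac{g(y)-c(x,y)}{\varepsilon}\right)d\nu(y),
\qquad
Z_h(x)=\int_{K_\nu}\exp\!\left(\frac{h(y)-c(x,y)}{\varepsilon}\right)d\nu(y)
\]
may be taken over \(K_\nu\) only. We first check that these integrals lie in \((0,\infty)\), so that both soft \(c\)-transforms are well defined. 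The functions \(g\), \(h\) and \(c(x,\cdot)\) are continuous on the compact set \(K_\nu\), hence bounded there. The integrands are therefore bounded above and bounded below by a strictly positive constant, and \(\nu(K_\nu)=1\).

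Next, on \(K_\nu\) we have the pointwise inequality \(g(y)\le h(y)+\delta\). Since \(s\mapsto e^{s/\varepsilon}\) is monotone and \(\nu\) is positive, this gives
\[
Z_g(x)\le e^{\delta/\varepsilon}Z_h(x).
\]
By symmetry we also get \(Z_h(x)\le e^{\delta/\varepsilon}Z_g(x)\).

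We then apply \(-\varepsilon\log\), which is decreasing. This yields
\[
\mathcal T_\nu^\varepsilon h(x)-\delta\le\mathcal T_\nu^\varepsilon g(x)\le \mathcal T_\nu^\varepsilon h(x)+\delta,
\]
that is, \(|\mathcal T_\nu^\varepsilon g(x)-\mathcal T_\nu^\varepsilon h(x)|\le\delta\). Taking the supremum over \(x\in K_\mu\) gives the claim.

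There is no real obstacle here. The only point needing care is the well-definedness (finiteness and positivity) of the normalizing integrals, and compactness of \(K_\nu\) together with continuity of \(c\) settles it.
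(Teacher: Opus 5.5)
Your proof is correct and follows essentially the same route as the paper: sandwich \(g\) between \(h\pm\delta\) on \(K_\nu\), compare the normalizing integrals up to factors \(e^{\pm\delta/\varepsilon}\), and apply the decreasing map \(-\varepsilon\log\) pointwise on \(K_\mu\). Your explicit check that the normalizing integrals lie in \((0,\infty)\) is a small addition that the paper leaves implicit. It uses boundedness of \(g\), \(h\) and \(c(x,\cdot)\) on the compact set \(K_\nu\).
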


\begin{proof}
Set
\[
\delta:=\|g-h\|_{L^\infty(K_\nu)}.
\]
Then, for every \(y\in K_\nu\),
\[
h(y)-\delta\le g(y)\le h(y)+\delta.
\]
Hence, for every \(x\in K_\mu\),
\[
e^{-\delta/\varepsilon}
\int_{K_\nu} e^{(h(y)-c(x,y))/\varepsilon}\,d\nu(y)
\le
\int_{K_\nu} e^{(g(y)-c(x,y))/\varepsilon}\,d\nu(y)
\le
e^{\delta/\varepsilon}
\int_{K_\nu} e^{(h(y)-c(x,y))/\varepsilon}\,d\nu(y).
\]
Taking logarithms and multiplying by \(-\varepsilon\) gives
\[
|\mathcal T_\nu^\varepsilon g(x)-\mathcal T_\nu^\varepsilon h(x)|
\le
\delta
\qquad\text{for every }x\in K_\mu.
\]
Taking the supremum over \(x\in K_\mu\) proves the claim.
\end{proof}

The next proposition shows that the normalized target-side entropic potential
can be uniformly approximated by functions from the centered pullback class.

\begin{proposition}[Approximation of the normalized entropic potential]
\label{prop:potential-approximation}
Assume that \(\mathcal F\subset C(\mathbb R^n,\mathbb R)\) is dense under the
ucc topology and that the feature map \(\varphi:K_\nu\to\mathbb R^n\) satisfies
Assumption~\ref{ass-feature-regularity}. 
Let \(g_\varepsilon^\star\in C(K_\nu)\) be the normalized target-side
Schr\"odinger potential, chosen so that
\[
\int_{K_\nu} g_\varepsilon^\star\,d\nu=0,
\]
and set
\[
f_\varepsilon^\star:=\mathcal T_\nu^\varepsilon g_\varepsilon^\star.
\]
Then there exists a sequence \((h_m)_{m\in\mathbb N}\subset \mathsf C_\nu(\varphi^*\mathcal F)\) such
that, as \(m\to\infty\),
\[
\|h_m-g_\varepsilon^\star\|_{L^\infty(K_\nu)}\to 0,
\qquad
\|\mathcal T_\nu^\varepsilon h_m-f_\varepsilon^\star\|_{L^\infty(K_\mu)}\to 0,
\]
and
\[
\mathcal J_\varepsilon(h_m)\to\mathrm{OT}_\varepsilon.
\]
\end{proposition}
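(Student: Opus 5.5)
The plan is to combine the three preceding ingredients: density of the pullback class, density of its centered version, and sup-norm stability of the soft \(c\)-transform.

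\emph{Step 1 (a sequence close to \(g_\varepsilon^\star\)).} By Proposition~\ref{prop:pullback-universality}, under Assumption~\ref{ass-feature-regularity} and ucc-density of \(\mathcal F\), the class \(\mathcal A:=\varphi^*\mathcal F\) is dense in \(C(K_\nu)\) for the uniform norm. By Proposition~\ref{prop:dual-attainment-review}, the normalized potential \(g_\varepsilon^\star\) belongs to \(C(K_\nu)\) and satisfies \(\int g_\varepsilon^\star\,d\nu=0\). Lemma~\ref{lem:centered-pullback-class}(ii) then gives \((h_m)\subset\mathsf C_\nu(\varphi^*\mathcal F)\) with \(\|h_m-g_\varepsilon^\star\|_{L^\infty(K_\nu)}\to0\).

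\emph{Step 2 (convergence of the transforms).} Lemma~\ref{lem:supnorm-stability} gives
\[
\|\mathcal T_\nu^\varepsilon h_m-\mathcal T_\nu^\varepsilon g_\varepsilon^\star\|_{L^\infty(K_\mu)}\le\|h_m-g_\varepsilon^\star\|_{L^\infty(K_\nu)},
\]
and the left side is exactly the \(f_\varepsilon^\star\) error, since \(f_\varepsilon^\star=\mathcal T_\nu^\varepsilon g_\varepsilon^\star\) by definition.

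Before this step I would check admissibility, so that each \(\mathcal J_\varepsilon(h_m)\) is well defined and \(\mathcal J_\varepsilon(h_m)\le\mathrm{OT}_\varepsilon\) by Proposition~\ref{prop:semidual-reduction-review}. On the compact set \(K_\mu\times K_\nu\), both \(h_m\) and \(c\) are bounded. Hence the normalizer \(Z_{h_m}(x)\) lies between \(\exp\bigl((-\|h_m\|_\infty-\max c)/\varepsilon\bigr)\) and \(\exp(\|h_m\|_\infty/\varepsilon)\). This makes \(\mathcal T_\nu^\varepsilon h_m\) bounded on \(K_\mu\), so it is in \(L^1(\mu)\) and finite. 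The same bound applies to \(g_\varepsilon^\star\). The only subtlety is that the soft transform is defined for \(x\in\mathcal M\), while \(\mu\) lives on \(K_\mu\), so restricting to \(K_\mu\) is harmless.

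\emph{Step 3 (convergence of the objective).} Using \(\mathcal J_\varepsilon(g_\varepsilon^\star)=\mathrm{OT}_\varepsilon\) from Proposition~\ref{prop:semidual-reduction-review},
\[
|\mathcal J_\varepsilon(h_m)-\mathrm{OT}_\varepsilon|\le\int_{K_\nu}|h_m-g_\varepsilon^\star|\,d\nu+\int_{K_\mu}|\mathcal T_\nu^\varepsilon h_m-f_\varepsilon^\star|\,d\mu\le 2\|h_m-g_\varepsilon^\star\|_{L^\infty(K_\nu)}\to0.
\]
The first integral in fact vanishes, because both \(h_m\) and \(g_\varepsilon^\star\) are centered.

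\emph{Main obstacle.} The approximation itself is routine; the one delicate point is representatives. The identity \(\mathcal J_\varepsilon(g_\varepsilon^\star)=\mathrm{OT}_\varepsilon\) and the sup-norm comparison both require working with the canonical continuous pointwise version of \(g_\varepsilon^\star\) on all of \(K_\nu\), not just its \(\nu\)-a.e. class. I would address this by invoking the last sentence of Proposition~\ref{prop:semidual-reduction-review}, together with the continuity statement of Proposition~\ref{prop:dual-attainment-review}.
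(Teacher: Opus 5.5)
Your proposal is correct and follows the paper's proof essentially step for step: density of the pullback class, then centering via Lemma~\ref{lem:centered-pullback-class}(ii), then sup-norm stability of the soft $c$-transform, and finally cancellation of the $\nu$-integral by centering, which gives $|\mathcal J_\varepsilon(h_m)-\mathrm{OT}_\varepsilon|\le\|h_m-g_\varepsilon^\star\|_{L^\infty(K_\nu)}$. Your admissibility check is a useful detail the paper leaves implicit, and the representative issue you flag is harmless, since a continuous function on $K_\nu=\operatorname{spt}(\nu)$ is uniquely determined by its $\nu$-a.e.\ class (and $\mathcal J_\varepsilon$ depends only on that class anyway).
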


\begin{proof}
By Assumption~\ref{ass-feature-regularity} and the transfer principle from
Appendix~\ref{app:feature-map-review}, the pullback class
\(\varphi^*\mathcal F\) is dense in \(C(K_\nu)\) with respect to
\(\|\cdot\|_{L^\infty(K_\nu)}\). By
Proposition~\ref{prop:dual-attainment-review}, in the compact-support setting
we may choose the normalized target-side Schr\"odinger potential
\(g_\varepsilon^\star\) as an element of \(C(K_\nu)\). Since
\[
\int_{K_\nu} g_\varepsilon^\star\,d\nu=0,
\]
Lemma~\ref{lem:centered-pullback-class} yields a sequence
\((h_m)_{m\in\mathbb N}\subset\mathsf C_\nu(\varphi^*\mathcal F)\) such that
\[
\|h_m-g_\varepsilon^\star\|_{L^\infty(K_\nu)}\to 0.
\]

Applying Lemma~\ref{lem:supnorm-stability}, we obtain
\[
\|\mathcal T_\nu^\varepsilon h_m
-\mathcal T_\nu^\varepsilon g_\varepsilon^\star\|_{L^\infty(K_\mu)}
\le
\|h_m-g_\varepsilon^\star\|_{L^\infty(K_\nu)}
\to 0.
\]
Since
\[
\mathcal T_\nu^\varepsilon g_\varepsilon^\star=f_\varepsilon^\star,
\]
it follows that
\[
\|\mathcal T_\nu^\varepsilon h_m-f_\varepsilon^\star\|_{L^\infty(K_\mu)}
\to 0.
\]

It remains to show convergence of the semidual values. Since
\(h_m\in\mathsf C_\nu(\varphi^*\mathcal F)\), each \(h_m\) is centered:
\[
\int_{K_\nu}h_m\,d\nu=0.
\]
Together with the normalization of \(g_\varepsilon^\star\), this gives
\[
\int_{K_\nu}(h_m-g_\varepsilon^\star)\,d\nu=0.
\]
Therefore,
\begin{align*}
|\mathcal J_\varepsilon(h_m)-\mathcal J_\varepsilon(g_\varepsilon^\star)|
&=
\left|
\int_{K_\mu}
\bigl(
\mathcal T_\nu^\varepsilon h_m
-
\mathcal T_\nu^\varepsilon g_\varepsilon^\star
\bigr)\,d\mu
\right| \\
&\le
\|\mathcal T_\nu^\varepsilon h_m
-\mathcal T_\nu^\varepsilon g_\varepsilon^\star\|_{L^\infty(K_\mu)} \\
&\le
\|h_m-g_\varepsilon^\star\|_{L^\infty(K_\nu)}
\to 0.
\end{align*}
Since \(g_\varepsilon^\star\) is a semidual maximizer,
\[
\mathcal J_\varepsilon(g_\varepsilon^\star)=\mathrm{OT}_\varepsilon,
\]
we conclude that
\[
\mathcal J_\varepsilon(h_m)\to\mathrm{OT}_\varepsilon.
\]
\end{proof}

We next show how approximation of the semidual value translates into
convergence of the induced Gibbs plans.

\begin{proposition}[Gap identity and convergence of Gibbs plans]
\label{prop:dual-gap-kl}
For every bounded measurable function \(g:K_\nu\to\mathbb R\), define
\[
d\pi_g^\varepsilon(x,y)
:=
\exp\!\left(
\frac{\mathcal T_\nu^\varepsilon g(x)+g(y)-c(x,y)}{\varepsilon}
\right)\,d\mu(x)\,d\nu(y).
\]
Then \(\pi_g^\varepsilon\) is a probability measure with first marginal \(\mu\),
and
\begin{equation}
\label{eq:dual-gap-kl}
\varepsilon\,\mathrm{KL}(\pi_\varepsilon^\star \,\|\, \pi_g^\varepsilon)
=
\mathcal J_\varepsilon(g_\varepsilon^\star)-\mathcal J_\varepsilon(g).
\end{equation}
Consequently,
\begin{equation}
\label{eq:tv-bound}
\|\pi_g^\varepsilon-\pi_\varepsilon^\star\|_{\mathrm{TV}}
\le
\sqrt{
\frac{
\mathcal J_\varepsilon(g_\varepsilon^\star)-\mathcal J_\varepsilon(g)
}{2\varepsilon}
}.
\end{equation}
\end{proposition}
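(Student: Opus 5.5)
The plan is to compute the log-density ratio between $\pi_\varepsilon^\star$ and $\pi_g^\varepsilon$ explicitly and integrate it against $\pi_\varepsilon^\star$; Pinsker's inequality then gives \eqref{eq:tv-bound}.

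\emph{Step 1: $\pi_g^\varepsilon$ is a probability measure with first marginal $\mu$.} Since $g$ is bounded and $c$ is bounded on $K_\mu\times K_\nu$, the normalizer
\[
Z_g(x)=\int e^{(g(y)-c(x,y))/\varepsilon}\,d\nu(y)
\]
lies in $(0,\infty)$ for every $x \in K_\mu$, and it is bounded above and away from zero. Hence $\mathcal T_\nu^\varepsilon g$ is bounded. For each $x$, integrating the density in $y$ gives $e^{\mathcal T_\nu^\varepsilon g(x)/\varepsilon}Z_g(x)=1$. By Tonelli, the first marginal of $\pi_g^\varepsilon$ is $\mu$ and its total mass is $1$.

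\emph{Step 2: the gap identity.} Take the canonical representatives from Proposition~\ref{prop:dual-attainment-review}, with $f_\varepsilon^\star=\mathcal T_\nu^\varepsilon g_\varepsilon^\star$ on $K_\mu$. Then both plans are of the same Gibbs form with respect to $\mu\otimes\nu$, and both densities are strictly positive and bounded. So $\pi_\varepsilon^\star\ll\pi_g^\varepsilon$, and the cost terms cancel in the log-ratio:
\[
\log\frac{d\pi_\varepsilon^\star}{d\pi_g^\varepsilon}(x,y)=\frac{1}{\varepsilon}\Bigl(f_\varepsilon^\star(x)-\mathcal T_\nu^\varepsilon g(x)+g_\varepsilon^\star(y)-g(y)\Bigr).
\]
Integrate this against $\pi_\varepsilon^\star\in\Pi(\mu,\nu)$. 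Since the ratio is a sum of a function of $x$ and a function of $y$, its integral splits into a $\mu$-integral and a $\nu$-integral. This gives
\[
\varepsilon\,\mathrm{KL}(\pi_\varepsilon^\star\|\pi_g^\varepsilon)=\int f_\varepsilon^\star\,d\mu+\int g_\varepsilon^\star\,d\nu-\int\mathcal T_\nu^\varepsilon g\,d\mu-\int g\,d\nu.
\]
The first two terms equal $\mathcal J_\varepsilon(g_\varepsilon^\star)$ and the last two equal $\mathcal J_\varepsilon(g)$, which is \eqref{eq:dual-gap-kl}. All functions involved are bounded, so every integral is finite and the splitting is legitimate.

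\emph{Step 3: the total-variation bound.} Pinsker's inequality, $\|P-Q\|_{\mathrm{TV}}\le\sqrt{\mathrm{KL}(P\|Q)/2}$ with TV taken as the supremum over sets, combined with \eqref{eq:dual-gap-kl}, yields \eqref{eq:tv-bound}. The identity also shows that the gap is nonnegative, consistent with Proposition~\ref{prop:semidual-reduction-review}.

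\emph{Main obstacle.} The only delicate point is measure-theoretic bookkeeping. One must fix pointwise representatives so that $f_\varepsilon^\star=\mathcal T_\nu^\varepsilon g_\varepsilon^\star$ holds on all of $K_\mu$ (or at least $\mu$-a.e., which suffices). One must also ensure $g_\varepsilon^\star$ is bounded, which the compact-support regularity in Proposition~\ref{prop:dual-attainment-review} provides. Finally, the TV convention matters: if TV is defined as total variation norm rather than the supremum over sets, the constant changes by a factor of $2$, so the stated bound presumes the supremum-over-sets convention.
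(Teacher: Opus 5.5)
Your proposal is correct and follows essentially the same route as the paper: normalize through the soft $c$-transform to get a probability measure with first marginal $\mu$, compute the log-density ratio (where the cost cancels), integrate it against $\pi_\varepsilon^\star$ using its marginals to get the gap identity, and finish with Pinsker under the sup-over-sets TV convention. Your extra bookkeeping makes explicit what the paper leaves implicit: boundedness of $\mathcal T_\nu^\varepsilon g$ and $g_\varepsilon^\star$, the absolute continuity $\pi_\varepsilon^\star\ll\pi_g^\varepsilon$, and nonnegativity of the gap directly from $\mathrm{KL}\ge 0$ (the paper cites semidual maximality instead).
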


\begin{proof}
By definition of \(\mathcal T_\nu^\varepsilon g\), for every \(x\in K_\mu\),
\[
\int_{K_\nu}
\exp\!\left(
\frac{\mathcal T_\nu^\varepsilon g(x)+g(y)-c(x,y)}{\varepsilon}
\right)\,d\nu(y)=1.
\]
Integrating this identity with respect to \(\mu\) shows that
\(\pi_g^\varepsilon\) is a probability measure. The same identity also implies
that its first marginal is \(\mu\).

Next, using the Gibbs representation of \(\pi_\varepsilon^\star\), we have
\[
\log\frac{d\pi_\varepsilon^\star}{d\pi_g^\varepsilon}(x,y)
=
\frac{
f_\varepsilon^\star(x)+g_\varepsilon^\star(y)
-
\mathcal T_\nu^\varepsilon g(x)-g(y)
}{\varepsilon}.
\]
Integrating with respect to \(\pi_\varepsilon^\star\), and using that the
marginals of \(\pi_\varepsilon^\star\) are \(\mu\) and \(\nu\), gives
\begin{align*}
\varepsilon\,\mathrm{KL}(\pi_\varepsilon^\star \,\|\, \pi_g^\varepsilon)
&=
\int_{K_\mu}
\bigl(f_\varepsilon^\star-\mathcal T_\nu^\varepsilon g\bigr)\,d\mu
+
\int_{K_\nu}
\bigl(g_\varepsilon^\star-g\bigr)\,d\nu \\
&=
\mathcal J_\varepsilon(g_\varepsilon^\star)-\mathcal J_\varepsilon(g),
\end{align*}
because \(f_\varepsilon^\star=\mathcal T_\nu^\varepsilon g_\varepsilon^\star\).
Since \(g_\varepsilon^\star\) is a semidual maximizer,
\[
\mathcal J_\varepsilon(g_\varepsilon^\star)-\mathcal J_\varepsilon(g)\ge 0.
\]
The total-variation bound then follows from Pinsker's inequality, using the
probabilistic convention for total variation.
\end{proof}

We can now conclude the proof of Theorem~\ref{thm:main-fixed-eps-plan}.

\begin{proof}[Proof of Theorem~\ref{thm:main-fixed-eps-plan}]
By Proposition~\ref{prop:potential-approximation}, there exists a sequence
\((g_m)_{m\in\mathbb N}\subset\mathsf C_\nu(\varphi^*\mathcal F)\) such that
\[
\mathcal J_\varepsilon(g_m)\to\mathrm{OT}_\varepsilon
\qquad\text{as }m\to\infty.
\]
For each \(m\), let
\[
\pi_m^\varepsilon:=\pi_{g_m}^\varepsilon
\]
denote the Gibbs plan induced by \(g_m\). Applying
Proposition~\ref{prop:dual-gap-kl}, we obtain
\[
\varepsilon\,\mathrm{KL}(\pi_\varepsilon^\star \,\|\, \pi_m^\varepsilon)
=
\mathcal J_\varepsilon(g_\varepsilon^\star)-\mathcal J_\varepsilon(g_m).
\]
Since \(g_\varepsilon^\star\) is a semidual maximizer,
\[
\mathcal J_\varepsilon(g_\varepsilon^\star)=\mathrm{OT}_\varepsilon,
\]
it follows that
\[
\varepsilon\,\mathrm{KL}(\pi_\varepsilon^\star \,\|\, \pi_m^\varepsilon)
=
\mathrm{OT}_\varepsilon-\mathcal J_\varepsilon(g_m)
\to 0.
\]
Hence
\[
\mathrm{KL}(\pi_\varepsilon^\star \,\|\, \pi_m^\varepsilon)\to 0.
\]
Using \eqref{eq:tv-bound}, we also obtain
\[
\|\pi_m^\varepsilon-\pi_\varepsilon^\star\|_{\mathrm{TV}}
\le
\sqrt{
\frac{\mathrm{OT}_\varepsilon-\mathcal J_\varepsilon(g_m)}{2\varepsilon}
}
\to 0.
\]
Finally, total variation convergence implies weak convergence. Therefore,
\[
\pi_m^\varepsilon\rightharpoonup \pi_\varepsilon^\star.
\]
\end{proof}
\subsection{Proof of Proposition~\ref{thm:main-fixed-eps-map}}
\label{app:proof-main-fixed-eps-map}
We work under the assumptions of Proposition~\ref{thm:main-fixed-eps-map}.
Since \((\mathcal M,g)\) is Cartan--Hadamard, \((\mathcal M,d)\) is a global
NPC space. We use standard facts about barycenters in such spaces: probability
measures in \(\mathcal P_2(\mathcal M)\), and in particular compactly supported
probability measures, admit unique barycenters for the squared-distance
functional, the barycenter map is \(1\)-Lipschitz with respect to \(W_1\), and
the variance inequality holds
\cite[Propositions~4.3, 4.4 and Theorem~6.3]{Sturm2003}.

Set
\[
D_\nu
:=
\operatorname{diam}(K_\nu)
:=
\max_{y,y'\in K_\nu} d(y,y') < \infty .
\]
The maximum is finite because \(K_\nu\) is compact and \(d\) is continuous.

For any probability measure
\(\pi\in\mathcal P(K_\mu\times K_\nu)\) with first marginal \(\mu\), the
disintegration theorem on standard Borel spaces gives a regular conditional
kernel \(x\mapsto \pi_x\in\mathcal P(K_\nu)\), unique \(\mu\)-almost surely,
such that
\[
\pi(dx,dy)=\mu(dx)\,\pi_x(dy).
\]
Since \(\pi_x\) is supported on the compact set \(K_\nu\), we have
\(\pi_x\in\mathcal P_2(\mathcal M)\) for \(\mu\)-almost every \(x\). Hence its
barycenter is well defined and unique:
\[
T_\pi(x)
:=
\operatorname{bar}(\pi_x)
:=
\operatorname*{arg\,min}_{z\in\mathcal M}
\frac12\int_{\mathcal M} d(z,y)^2\,\pi_x(dy).
\]
We will write
\[
T_\varepsilon:=T_{\pi_\varepsilon^\star},
\qquad
\widehat T_m^\varepsilon:=T_{\pi_m^\varepsilon}
\]
for the barycentric projections associated with the optimal entropic coupling
and the approximating Gibbs plans, respectively.

Before proving convergence, we first verify that these barycentric projections
are measurable. This is needed because the estimates below involve integrals of
the form
\[
\int_{K_\mu} d\bigl(T_\pi(x),T_{\widetilde\pi}(x)\bigr)^q\,\mu(dx),
\qquad q\in\{1,2\},
\]
so the maps \(T_\pi\) must be well-defined as measurable maps, up to
\(\mu\)-null sets.

\begin{lemma}[Measurability of barycentric projections]
\label{lem:barycentric-measurability}
For every probability measure
\(\pi\in\mathcal P(K_\mu\times K_\nu)\) with first marginal \(\mu\), the map
\[
T_\pi(x)=\operatorname{bar}(\pi_x)
\]
is Borel measurable, up to modification on a \(\mu\)-null set.
\end{lemma}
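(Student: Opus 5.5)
The map \(T_\pi\) factors as the composition
\[
K_\mu\ni x\;\longmapsto\;\pi_x\in\mathcal P(K_\nu)\;\longmapsto\;\operatorname{bar}(\pi_x)\in\mathcal M,
\]
and we will show each factor is Borel.

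For the first factor, the disintegration theorem on the standard Borel space \(K_\mu\times K_\nu\) gives a regular conditional kernel. In particular, for every bounded Borel \(\phi\) on \(K_\nu\), the function \(x\mapsto\int\phi\,d\pi_x\) is Borel. The Borel \(\sigma\)-algebra of \(\mathcal P(K_\nu)\) with the weak topology is generated by the evaluation maps \(\rho\mapsto\int\phi\,d\rho\) for \(\phi\in C(K_\nu)\); this is a standard fact for compact metric spaces, since \(C(K_\nu)\) is separable and the weak topology is metrizable and separable. Hence \(x\mapsto\pi_x\) is Borel from \(K_\mu\) into \(\mathcal P(K_\nu)\).

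Changing \(\pi_x\) on a \(\mu\)-null set changes \(T_\pi\) only on a \(\mu\)-null set. This is the source of the qualifier ``up to modification on a \(\mu\)-null set.'' On that null set we may redefine \(\pi_x:=\delta_{y_0}\) for a fixed \(y_0\in K_\nu\), so that \(\pi_x\in\mathcal P(K_\nu)\) holds everywhere.

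For the second factor, we show that \(\operatorname{bar}:\mathcal P(K_\nu)\to\mathcal M\) is continuous for the weak topology, hence Borel. We use the fact quoted above from Sturm: in the global NPC space \((\mathcal M,d)\), barycenters are unique and \(1\)-Lipschitz with respect to \(W_1\),
\[
d\bigl(\operatorname{bar}(\rho),\operatorname{bar}(\rho')\bigr)\le W_1(\rho,\rho').
\]
Since \(K_\nu\) is compact with diameter \(D_\nu<\infty\), \(W_1\) metrizes weak convergence on \(\mathcal P(K_\nu)\). Therefore \(\operatorname{bar}\) is continuous on \(\mathcal P(K_\nu)\). The composition of a Borel map with a continuous map is Borel, which proves the lemma.

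The main obstacle is the \(\sigma\)-algebra identification in the first step: that weak Borel measurability of \(x\mapsto\pi_x\) follows from measurability of the integrals against test functions. I would handle it by the standard argument. Fix a countable dense family \(\{\phi_k\}\subset C(K_\nu)\). The map \(\rho\mapsto(\int\phi_k\,d\rho)_k\) is a homeomorphism of \(\mathcal P(K_\nu)\) onto its image in \(\mathbb R^{\mathbb N}\), so measurability reduces to that of the countably many coordinates \(x\mapsto\int\phi_k\,d\pi_x\). These are Borel by the kernel property.

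If one prefers to avoid the Lipschitz estimate, an alternative for the second step is available. Weak continuity of \(\rho\mapsto\int d(z,\cdot)^2\,d\rho\), uniformly for \(z\) in compacts, together with uniqueness of the minimizer and confinement of barycenters to a fixed compact set (for instance the closed convex hull of \(K_\nu\), or a ball of radius \(D_\nu\) around \(K_\nu\)), gives continuity of the argmin by a standard compactness argument.
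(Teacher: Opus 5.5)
Your proof is correct and follows the paper's route: you disintegrate to get a measurable kernel $x\mapsto\pi_x$ into $\mathcal P(K_\nu)$, use that on the compact $K_\nu$ the weak topology and the $W_1$ topology give the same Borel structure, and compose with Sturm's $1$-Lipschitz barycenter map. The only cosmetic difference is that you identify the weak Borel $\sigma$-algebra through integrals against a countable dense family in $C(K_\nu)$, whereas the paper cites the fact that it is generated by the set evaluations $\rho\mapsto\rho(B)$ (Bertsekas--Shreve).
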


\begin{proof}
Since \(K_\mu\) and \(K_\nu\) are compact metric spaces, they are standard
Borel spaces. Therefore, by the existence of regular conditional distributions
and the disintegration theorem \cite[Theorems~6.3--6.4]{Kallenberg2002}, there
exists a \(\mu\)-almost surely unique Markov kernel
\(x\mapsto \pi_x\in\mathcal P(K_\nu)\) such that
\[
\pi(dx,dy)=\mu(dx)\,\pi_x(dy).
\]

We first view this kernel as a measurable map into
\(\mathcal P(K_\nu)\) equipped with the weak Borel \(\sigma\)-algebra. By
definition of a Markov kernel, for every Borel set \(B\subset K_\nu\), the map
\[
x\mapsto \pi_x(B)
\]
is measurable. Since \(K_\nu\) is compact metric, it is separable metrizable,
and the Borel \(\sigma\)-algebra on \(\mathcal P(K_\nu)\) induced by the weak
topology is generated by the evaluation maps
\[
e_B:\rho\mapsto \rho(B),
\qquad B\in\mathcal B(K_\nu);
\]
see \cite[Proposition~7.25]{BertsekasShreve1978}. For each such \(B\), the
composition of \(x\mapsto\pi_x\) with \(e_B\) is precisely
\[
(e_B\circ (x\mapsto\pi_x))(x)=\pi_x(B),
\]
which is measurable by the Markov-kernel property. Since the maps \(e_B\)
generate the weak Borel \(\sigma\)-algebra on \(\mathcal P(K_\nu)\), it follows
that \(x\mapsto \pi_x\) is Borel measurable as a map into
\(\mathcal P(K_\nu)\) equipped with its weak Borel \(\sigma\)-algebra.

Since \(K_\nu\) is compact, \(W_1\)-convergence on \(\mathcal P(K_\nu)\) is
equivalent to weak convergence: by the standard characterization of
Wasserstein convergence, \(W_1\)-convergence is equivalent to weak convergence
plus convergence of first moments, and the first-moment condition is automatic
on compact sets \cite[Theorem~6.9]{Villani2009}. Hence the weak and \(W_1\)
topologies induce the same Borel \(\sigma\)-algebra on \(\mathcal P(K_\nu)\),
so \(x\mapsto \pi_x\) is also measurable as a map into
\((\mathcal P(K_\nu),W_1)\).

The barycenter map is \(1\)-Lipschitz with respect to \(W_1\) in global NPC
spaces \cite[Theorem~6.3]{Sturm2003}. Therefore the composition
\[
x\mapsto \pi_x\mapsto \operatorname{bar}(\pi_x)
\]
is Borel measurable. Thus \(T_\pi\) is Borel measurable, up to modification on
a \(\mu\)-null set.
\end{proof}

We use the probabilistic convention for total variation: for probability
measures \(\rho,\widetilde\rho\) on a common measurable space,
\[
\|\rho-\widetilde\rho\|_{\mathrm{TV}}
:=
\sup_A |\rho(A)-\widetilde\rho(A)|.
\]
Equivalently, if \(\lambda\) is any finite measure such that
\(\rho,\widetilde\rho\ll\lambda\), then
\[
\|\rho-\widetilde\rho\|_{\mathrm{TV}}
=
\frac12
\int
\left|
\frac{d\rho}{d\lambda}
-
\frac{d\widetilde\rho}{d\lambda}
\right|d\lambda .
\]
We will also use the variational representation
\[
\|\rho-\widetilde\rho\|_{\mathrm{TV}}
=
\sup_{\substack{g\ \mathrm{measurable}\\ |g|\le 1/2}}
\left\{
\int g\,d\rho-\int g\,d\widetilde\rho
\right\}
=
\frac12
\sup_{\substack{f\ \mathrm{measurable}\\ \|f\|_\infty\le 1}}
\left|
\int f\,d(\rho-\widetilde\rho)
\right|.
\]
This is the total-variation special case of the variational representation of
\(f\)-divergences, obtained by taking \(f(t)=\frac12|t-1|\); see
\cite[Theorem~7.26 and Example~7.3]{PolyanskiyWu2025}.

We next record a fiberwise identity for total variation. This is a direct
consequence of disintegration and the Radon--Nikodym theorem.

\begin{lemma}[Fiberwise total variation identity]
\label{lem:fiber-tv-identity}
Let \(\pi,\widetilde\pi\in\mathcal P(K_\mu\times K_\nu)\) have first marginal
\(\mu\), and write
\[
\pi(dx,dy)=\mu(dx)\,\pi_x(dy),
\qquad
\widetilde\pi(dx,dy)=\mu(dx)\,\widetilde\pi_x(dy).
\]
Then
\[
\|\pi-\widetilde\pi\|_{\mathrm{TV}}
=
\int_{K_\mu}\|\pi_x-\widetilde\pi_x\|_{\mathrm{TV}}\,\mu(dx).
\]
\end{lemma}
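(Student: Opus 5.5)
The plan is to reduce both sides to densities with respect to a common dominating measure and then split the integral fiberwise. Set \(\rho:=\tfrac12(\pi+\widetilde\pi)\), a probability measure on \(K_\mu\times K_\nu\) with first marginal \(\mu\), and disintegrate it as \(\rho(dx,dy)=\mu(dx)\rho_x(dy)\). Since \(\pi,\widetilde\pi\le 2\rho\), both are absolutely continuous with respect to \(\rho\). We write \(p:=d\pi/d\rho\) and \(\widetilde p:=d\widetilde\pi/d\rho\), which are jointly measurable and bounded by \(2\).

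The first step is to check that \(\pi_x(dy)=p(x,y)\rho_x(dy)\) and \(\widetilde\pi_x(dy)=\widetilde p(x,y)\rho_x(dy)\) for \(\mu\)-almost every \(x\). The kernel \(x\mapsto p(x,\cdot)\rho_x\) is measurable. For rectangles \(A\times B\) it reproduces \(\pi\), because
\[
\int_A\int_B p(x,y)\,\rho_x(dy)\,\mu(dx)=\int_{A\times B}p\,d\rho=\pi(A\times B).
\]
Rectangles form a \(\pi\)-system generating the product \(\sigma\)-algebra, so this kernel is a disintegration of \(\pi\). By the \(\mu\)-a.s. uniqueness of disintegrations on standard Borel spaces (Kallenberg, as already used in Lemma~\ref{lem:barycentric-measurability}), it agrees with \(\pi_x\) for \(\mu\)-a.e. \(x\); in particular \(p(x,\cdot)\) is a probability density with respect to \(\rho_x\) for a.e. \(x\). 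The same argument applies to \(\widetilde\pi\).

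The second step is the computation itself. Using the density formula for total variation recalled above with \(\lambda=\rho\), and Fubini for the disintegration (the integrand is nonnegative and measurable),
\[
\|\pi-\widetilde\pi\|_{\mathrm{TV}}
=\tfrac12\int|p-\widetilde p|\,d\rho
=\int_{K_\mu}\Bigl(\tfrac12\int_{K_\nu}|p(x,y)-\widetilde p(x,y)|\,\rho_x(dy)\Bigr)\mu(dx).
\]
By the first step and the same density formula applied fiberwise with \(\lambda=\rho_x\), the inner bracket equals \(\|\pi_x-\widetilde\pi_x\|_{\mathrm{TV}}\) for \(\mu\)-a.e. \(x\), which gives the identity.

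The only delicate point is the first step, namely identifying fiberwise densities with the joint Radon–Nikodym derivative. That identification is exactly what the uniqueness of regular conditional kernels provides. As a byproduct, the argument shows that \(x\mapsto\|\pi_x-\widetilde\pi_x\|_{\mathrm{TV}}\) is measurable (a.e. equal to a Fubini section integral), so the right-hand side is well defined.
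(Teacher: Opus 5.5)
Your proof is correct. It has the same skeleton as the paper's proof: a common dominating measure, the density formula for total variation applied once on the product and once on each fiber, and Fubini over the disintegration. The paper's $\lambda=\mu(dx)\,(\pi_x+\widetilde\pi_x)(dy)$ is just $\pi+\widetilde\pi=2\rho$.

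The genuine difference is the direction of the key identification step.

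\begin{itemize}
\item The paper starts from the fiberwise densities $r(x,\cdot)=d\pi_x/d(\pi_x+\widetilde\pi_x)$. It invokes a Radon--Nikodym theorem for kernels to obtain a jointly measurable version, and only then checks, by integrating against bounded test functions, that $r=d\pi/d\lambda$.
\item You start from the joint derivative $p=d\pi/d\rho$, which needs only the ordinary Radon--Nikodym theorem. You then show its sections are the fiberwise densities: $\mu(dx)\,p(x,y)\,\rho_x(dy)$ reproduces $\pi$ on rectangles, so $\mu$-a.s. uniqueness of disintegrations applies.
\end{itemize}

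What your route buys: it replaces the measurable-density theorem for kernels, which the paper uses without a precise reference, with the uniqueness of regular conditional distributions. The paper already cites that uniqueness (via Kallenberg) for Lemma~\ref{lem:barycentric-measurability}.

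What the paper's route buys: the disintegration of the dominating measure is written down by construction, so no separate disintegration is needed.

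You can get that advantage too by choosing $\rho_x:=\tfrac12(\pi_x+\widetilde\pi_x)$ explicitly. Your uniqueness argument then goes through unchanged.
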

\begin{proof}
Let
\[
\lambda(dx,dy):=\mu(dx)\bigl(\pi_x+\widetilde\pi_x\bigr)(dy).
\]
Then \(\lambda\) is a finite measure and \(\pi,\widetilde\pi\ll\lambda\).
Indeed, if \(\lambda(A)=0\) for a Borel set
\(A\subset K_\mu\times K_\nu\), then, writing
\[
A_x:=\{y\in K_\nu:(x,y)\in A\},
\]
we have
\[
0=\lambda(A)
=
\int_{K_\mu}\bigl(\pi_x(A_x)+\widetilde\pi_x(A_x)\bigr)\,\mu(dx).
\]
Since the integrand is nonnegative, this implies
\(\pi_x(A_x)=\widetilde\pi_x(A_x)=0\) for \(\mu\)-almost every \(x\). Hence
\[
\pi(A)=\int_{K_\mu}\pi_x(A_x)\,\mu(dx)=0,
\qquad
\widetilde\pi(A)=\int_{K_\mu}\widetilde\pi_x(A_x)\,\mu(dx)=0.
\]
Thus \(\pi,\widetilde\pi\ll\lambda\).

By the Radon--Nikodym theorem for kernels on standard Borel spaces, we may
choose jointly measurable versions of the fiberwise densities
\[
r(x,y):=\frac{d\pi_x}{d(\pi_x+\widetilde\pi_x)}(y),
\qquad
\widetilde r(x,y):=
\frac{d\widetilde\pi_x}{d(\pi_x+\widetilde\pi_x)}(y).
\]
We claim that
\[
\frac{d\pi}{d\lambda}(x,y)=r(x,y),
\qquad
\frac{d\widetilde\pi}{d\lambda}(x,y)=\widetilde r(x,y).
\]
To verify the first identity, let \(\Phi\) be a bounded measurable function on
\(K_\mu\times K_\nu\). Then
\begin{align*}
\int_{K_\mu\times K_\nu}\Phi(x,y)r(x,y)\,\lambda(dx,dy)
&=
\int_{K_\mu}
\left[
\int_{K_\nu}
\Phi(x,y)r(x,y)
\,(\pi_x+\widetilde\pi_x)(dy)
\right]\mu(dx) \\
&=
\int_{K_\mu}
\left[
\int_{K_\nu}
\Phi(x,y)\,\pi_x(dy)
\right]\mu(dx) \\
&=
\int_{K_\mu\times K_\nu}\Phi(x,y)\,\pi(dx,dy).
\end{align*}
Therefore \(r=d\pi/d\lambda\). The proof that
\(\widetilde r=d\widetilde\pi/d\lambda\) is identical.

Using the density representation of total variation with respect to the common
dominating measure \(\lambda\), we obtain
\begin{align*}
2\|\pi-\widetilde\pi\|_{\mathrm{TV}}
&=
\int_{K_\mu\times K_\nu}
|r(x,y)-\widetilde r(x,y)|\,\lambda(dx,dy) \\
&=
\int_{K_\mu}
\left(
\int_{K_\nu}
|r(x,y)-\widetilde r(x,y)|
\,(\pi_x+\widetilde\pi_x)(dy)
\right)\mu(dx).
\end{align*}
For each fixed \(x\), the measure \(\pi_x+\widetilde\pi_x\) dominates both
\(\pi_x\) and \(\widetilde\pi_x\), with densities \(r(x,\cdot)\) and
\(\widetilde r(x,\cdot)\), respectively. Therefore, again by the density
representation of total variation,
\[
\int_{K_\nu}
|r(x,y)-\widetilde r(x,y)|
\,(\pi_x+\widetilde\pi_x)(dy)
=
2\|\pi_x-\widetilde\pi_x\|_{\mathrm{TV}}
\qquad
\text{for \(\mu\)-a.e. }x.
\]
Substituting this into the previous display gives
\[
2\|\pi-\widetilde\pi\|_{\mathrm{TV}}
=
2\int_{K_\mu}\|\pi_x-\widetilde\pi_x\|_{\mathrm{TV}}\,\mu(dx).
\]
Dividing by \(2\) proves the identity.
\end{proof}
The next estimate converts total variation convergence of plans with first
marginal \(\mu\) into \(L^1\) and \(L^2\) convergence of their barycentric
projections.

\begin{proposition}[From plan convergence to barycentric map convergence]
\label{prop:plan-to-map}
Let \(\pi,\widetilde\pi\in\mathcal P(K_\mu\times K_\nu)\) have first marginal
\(\mu\). Then
\begin{equation}
\label{eq:appendix-L1-map-bound}
\int_{K_\mu} d\bigl(T_\pi(x),T_{\widetilde\pi}(x)\bigr)\,\mu(dx)
\le
D_\nu\,\|\pi-\widetilde\pi\|_{\mathrm{TV}},
\end{equation}
and
\begin{equation}
\label{eq:appendix-L2-map-bound}
\int_{K_\mu} d\bigl(T_\pi(x),T_{\widetilde\pi}(x)\bigr)^2\,\mu(dx)
\le
2D_\nu^2\,\|\pi-\widetilde\pi\|_{\mathrm{TV}}.
\end{equation}
\end{proposition}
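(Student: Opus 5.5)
The plan is to argue fiber by fiber and then integrate against \(\mu\) using Lemma~\ref{lem:fiber-tv-identity}. Fix \(\pi,\widetilde\pi\) as in the statement and disintegrate them as \(\pi=\mu\otimes\pi_x\) and \(\widetilde\pi=\mu\otimes\widetilde\pi_x\). For \(\mu\)-a.e.\ \(x\), both \(\pi_x\) and \(\widetilde\pi_x\) are probability measures on the compact set \(K_\nu\). By Lemma~\ref{lem:barycentric-measurability}, \(x\mapsto d(T_\pi(x),T_{\widetilde\pi}(x))\) is measurable, so the integrals in \eqref{eq:appendix-L1-map-bound}--\eqref{eq:appendix-L2-map-bound} make sense. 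The whole argument then reduces to the single fiberwise estimate
\[
d\bigl(\operatorname{bar}(\pi_x),\operatorname{bar}(\widetilde\pi_x)\bigr)
\le W_1(\pi_x,\widetilde\pi_x)
\le D_\nu\,\|\pi_x-\widetilde\pi_x\|_{\mathrm{TV}} .
\]

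The first inequality is the \(1\)-Lipschitz property of the barycenter map with respect to \(W_1\) in global NPC spaces \cite[Theorem~6.3]{Sturm2003}, which applies because \((\mathcal M,d)\) is Cartan--Hadamard.

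The second inequality is the key elementary step. Let \(\tau:=\|\pi_x-\widetilde\pi_x\|_{\mathrm{TV}}\), and assume \(\tau>0\) (if \(\tau=0\) there is nothing to prove). Form the common part \(\rho:=\pi_x\wedge\widetilde\pi_x\), whose mass is \(1-\tau\). The residuals \(\pi_x-\rho\) and \(\widetilde\pi_x-\rho\) are nonnegative, each has mass \(\tau\), and both are supported in \(K_\nu\). Now build a coupling in two parts:
\begin{itemize}
\item send \(\rho\) along the diagonal, at zero cost;
\item couple the residuals by the normalized product \(\tau^{-1}(\pi_x-\rho)\otimes(\widetilde\pi_x-\rho)\).
\end{itemize}
Every pair in the second part lies in \(K_\nu\times K_\nu\), so it costs at most \(D_\nu\). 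Hence \(W_1(\pi_x,\widetilde\pi_x)\le D_\nu\tau\). I expect the only care needed is in checking that this two-part measure really has marginals \(\pi_x\) and \(\widetilde\pi_x\); this is routine.

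For the \(L^1\) bound, integrate the fiberwise estimate in \(x\) and apply Lemma~\ref{lem:fiber-tv-identity}, which turns \(\int\|\pi_x-\widetilde\pi_x\|_{\mathrm{TV}}\,d\mu\) into \(\|\pi-\widetilde\pi\|_{\mathrm{TV}}\). This gives \eqref{eq:appendix-L1-map-bound}.

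For the \(L^2\) bound, first note that \(\|\pi_x-\widetilde\pi_x\|_{\mathrm{TV}}\le 1\). The fiberwise estimate therefore also gives the uniform bound \(d(T_\pi(x),T_{\widetilde\pi}(x))\le D_\nu\). It follows that
\[
d\bigl(T_\pi(x),T_{\widetilde\pi}(x)\bigr)^2
\le D_\nu\, d\bigl(T_\pi(x),T_{\widetilde\pi}(x)\bigr)
\le D_\nu^2\,\|\pi_x-\widetilde\pi_x\|_{\mathrm{TV}} .
\]
Integrating and using Lemma~\ref{lem:fiber-tv-identity} once more gives a bound of \(D_\nu^2\|\pi-\widetilde\pi\|_{\mathrm{TV}}\). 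This is even stronger than \eqref{eq:appendix-L2-map-bound}, which allows the constant \(2D_\nu^2\).

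I do not expect any serious obstacle. The substantive inputs are Sturm's Lipschitz estimate, the coupling bound \(W_1\le D_\nu\,\mathrm{TV}\) on a set of diameter \(D_\nu\), and the fiberwise TV identity, all already available.
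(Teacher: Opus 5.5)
Your proof is correct. Its skeleton matches the paper's: Sturm's \(W_1\)-contraction of barycenters, then a fiberwise bound \(W_1(\pi_x,\widetilde\pi_x)\le D_\nu\|\pi_x-\widetilde\pi_x\|_{\mathrm{TV}}\), then integration via Lemma~\ref{lem:fiber-tv-identity}, and finally \(a^2\le Ca\). Two steps, however, are argued differently.

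\textbf{The bound \(W_1\le D_\nu\,\mathrm{TV}\).} The paper works on the dual side. It invokes Kantorovich--Rubinstein duality, recenters and rescales a \(1\)-Lipschitz test function to some \(h\in[0,1]\), and applies the variational formula for total variation to \(h-\tfrac12\). You work on the primal side with the maximal coupling: the diagonal on \(\pi_x\wedge\widetilde\pi_x\) and the normalized product on the residuals. Your marginal check is indeed immediate, since each residual has mass \(\tau\). This route avoids duality entirely and extends verbatim to \(W_p^p\le D_\nu^p\,\mathrm{TV}\).

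\textbf{The \(L^2\) bound.} The paper uses Sturm's variance inequality to place each barycenter within \(D_\nu\) of a fixed \(y_0\in K_\nu\). This gives \(d(T_\pi(x),T_{\widetilde\pi}(x))\le 2D_\nu\) and hence the constant \(2D_\nu^2\). You instead reuse the fiberwise estimate together with \(\|\pi_x-\widetilde\pi_x\|_{\mathrm{TV}}\le 1\) (equivalently, \(W_1\le D_\nu\) for any two laws on \(K_\nu\)) to get \(d(T_\pi(x),T_{\widetilde\pi}(x))\le D_\nu\). This needs no extra ingredient and gives the sharper constant \(D_\nu^2\), which of course implies the stated bound.

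The paper's variance-inequality step does give slightly more: each barycentric map individually lies within \(D_\nu\) of every point of \(K_\nu\). But that also follows from the contraction property applied to \(\pi_x\) and \(\delta_{y_0}\), so your version loses nothing.
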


\begin{proof}
Write
\[
\pi(dx,dy)=\mu(dx)\,\pi_x(dy),
\qquad
\widetilde\pi(dx,dy)=\mu(dx)\,\widetilde\pi_x(dy).
\]
By Lemma~\ref{lem:barycentric-measurability}, the maps \(T_\pi\) and
\(T_{\widetilde\pi}\) are Borel measurable. Since \(d\) is continuous, the
functions
\[
x\mapsto d\bigl(T_\pi(x),T_{\widetilde\pi}(x)\bigr),
\qquad
x\mapsto d\bigl(T_\pi(x),T_{\widetilde\pi}(x)\bigr)^2
\]
are measurable.

By the \(W_1\)-contraction property of barycenters in global NPC spaces
\cite[Theorem~6.3]{Sturm2003},
\[
d\bigl(T_\pi(x),T_{\widetilde\pi}(x)\bigr)
\le
W_1(\pi_x,\widetilde\pi_x)
\qquad \mu\text{-a.e. }x.
\]
We now bound the fiberwise \(W_1\) distance by fiberwise total variation. Since both conditional
laws are supported on \(K_\nu\), they belong to \(\mathcal P_1(K_\nu)\). By the
Kantorovich--Rubinstein duality formula \cite[Theorem~5.10]{Villani2009},
\[
W_1(\pi_x,\widetilde\pi_x)
=
\sup_{\operatorname{Lip}(\phi)\le 1}
\left|
\int_{K_\nu}\phi\,d(\pi_x-\widetilde\pi_x)
\right|.
\]
Fix any \(1\)-Lipschitz \(\phi:K_\nu\to\mathbb R\). Since
\(\pi_x\) and \(\widetilde\pi_x\) are probability measures,
\[
(\pi_x-\widetilde\pi_x)(K_\nu)=0.
\]
Thus subtracting constants from \(\phi\) does not change the integral:
\[
\int_{K_\nu}\phi\,d(\pi_x-\widetilde\pi_x)
=
\int_{K_\nu}(\phi-a)\,d(\pi_x-\widetilde\pi_x)
\qquad \text{for every } a\in\mathbb R.
\]
Choose \(a:=\inf_{K_\nu}\phi\). Then
\[
0\le \phi-a\le \operatorname{osc}_{K_\nu}(\phi).
\]
If \(\operatorname{osc}_{K_\nu}(\phi)=0\), the integral is zero. Otherwise, set
\[
h:=\frac{\phi-a}{\operatorname{osc}_{K_\nu}(\phi)}.
\]
Then \(0\le h\le 1\). Since
\[
(\pi_x-\widetilde\pi_x)(K_\nu)=0,
\]
we may subtract \(1/2\) from \(h\) without changing the integral:
\[
\int_{K_\nu}h\,d(\pi_x-\widetilde\pi_x)
=
\int_{K_\nu}\left(h-\frac12\right)\,d(\pi_x-\widetilde\pi_x).
\]
Moreover, \(\left|h-\frac12\right|\le 1/2\). Hence, by the variational
representation of total variation stated above,
\[
\left|
\int_{K_\nu}h\,d(\pi_x-\widetilde\pi_x)
\right|
\le
\|\pi_x-\widetilde\pi_x\|_{\mathrm{TV}}.
\]
Therefore,
\[
\left|
\int_{K_\nu}\phi\,d(\pi_x-\widetilde\pi_x)
\right|
\le
\operatorname{osc}_{K_\nu}(\phi)\,
\|\pi_x-\widetilde\pi_x\|_{\mathrm{TV}}.
\]
Because \(\phi\) is \(1\)-Lipschitz and \(K_\nu\) has diameter \(D_\nu\),
\[
\operatorname{osc}_{K_\nu}(\phi)
:=
\sup_{y,y'\in K_\nu}|\phi(y)-\phi(y')|
\le
D_\nu.
\]
Thus
\[
\left|
\int_{K_\nu}\phi\,d(\pi_x-\widetilde\pi_x)
\right|
\le
D_\nu\,\|\pi_x-\widetilde\pi_x\|_{\mathrm{TV}}.
\]
Taking the supremum over all \(1\)-Lipschitz \(\phi\) yields
\[
W_1(\pi_x,\widetilde\pi_x)
\le
D_\nu\,\|\pi_x-\widetilde\pi_x\|_{\mathrm{TV}}.
\]
Combining this with the barycenter contraction gives
\[
d\bigl(T_\pi(x),T_{\widetilde\pi}(x)\bigr)
\le
D_\nu\,\|\pi_x-\widetilde\pi_x\|_{\mathrm{TV}}
\qquad \mu\text{-a.e. }x.
\]
Integrating over \(K_\mu\) and using Lemma~\ref{lem:fiber-tv-identity}, we get
\[
\int_{K_\mu} d\bigl(T_\pi(x),T_{\widetilde\pi}(x)\bigr)\,\mu(dx)
\le
D_\nu\int_{K_\mu}
\|\pi_x-\widetilde\pi_x\|_{\mathrm{TV}}\,\mu(dx)
=
D_\nu\,\|\pi-\widetilde\pi\|_{\mathrm{TV}}.
\]
This proves \eqref{eq:appendix-L1-map-bound}.

It remains to prove the \(L^2\) estimate. Fix \(y_0\in K_\nu\). By the
variance inequality for barycenters in global NPC spaces
\cite[Proposition~4.4]{Sturm2003}, for \(\mu\)-almost every \(x\),
\[
d\bigl(T_\pi(x),y_0\bigr)^2
\le
\int_{K_\nu} d(y_0,y)^2\,\pi_x(dy).
\]
Since \(\pi_x\) is supported on \(K_\nu\) and \(y_0\in K_\nu\), we have
\(d(y_0,y)\le D_\nu\) for all \(y\in K_\nu\). Therefore
\[
d\bigl(T_\pi(x),y_0\bigr)^2
\le
\int_{K_\nu}D_\nu^2\,\pi_x(dy)
=
D_\nu^2.
\]
Thus
\[
d\bigl(T_\pi(x),y_0\bigr)\le D_\nu.
\]
The same argument applied to \(\widetilde\pi_x\) gives
\[
d\bigl(T_{\widetilde\pi}(x),y_0\bigr)\le D_\nu.
\]
By the triangle inequality,
\[
d\bigl(T_\pi(x),T_{\widetilde\pi}(x)\bigr)
\le
d\bigl(T_\pi(x),y_0\bigr)
+
d\bigl(y_0,T_{\widetilde\pi}(x)\bigr)
\le
2D_\nu
\qquad \mu\text{-a.e. }x.
\]
Hence, setting
\[
a(x):=d\bigl(T_\pi(x),T_{\widetilde\pi}(x)\bigr),
\]
we have \(0\le a(x)\le 2D_\nu\), and therefore
\[
a(x)^2\le 2D_\nu a(x).
\]
Equivalently,
\[
d\bigl(T_\pi(x),T_{\widetilde\pi}(x)\bigr)^2
\le
2D_\nu\,d\bigl(T_\pi(x),T_{\widetilde\pi}(x)\bigr).
\]
Integrating and applying the \(L^1\) bound \eqref{eq:appendix-L1-map-bound},
\[
\int_{K_\mu} d\bigl(T_\pi(x),T_{\widetilde\pi}(x)\bigr)^2\,\mu(dx)
\le
2D_\nu
\int_{K_\mu} d\bigl(T_\pi(x),T_{\widetilde\pi}(x)\bigr)\,\mu(dx)
\le
2D_\nu^2\,\|\pi-\widetilde\pi\|_{\mathrm{TV}}.
\]
This proves \eqref{eq:appendix-L2-map-bound}.
\end{proof}

We now prove the proposition.

\begin{proof}[Proof of Proposition~\ref{thm:main-fixed-eps-map}]
By Theorem~\ref{thm:main-fixed-eps-plan}, there exists a sequence
\((g_m)_{m\in\mathbb N}\subset\mathsf C_\nu(\varphi^*\mathcal F)\) such that,
if \(\pi_m^\varepsilon\) denotes the induced Gibbs plan, then
\[
\|\pi_m^\varepsilon-\pi_\varepsilon^\star\|_{\mathrm{TV}}
\xrightarrow[m\to\infty]{}0.
\]
By Lemma~\ref{lem:barycentric-measurability}, both
\[
\widehat T_m^\varepsilon=T_{\pi_m^\varepsilon},
\qquad
T_\varepsilon=T_{\pi_\varepsilon^\star}
\]
are Borel measurable, so the \(L^1(\mu)\) and \(L^2(\mu)\) distances below are
well defined.

Applying Proposition~\ref{prop:plan-to-map} with
\[
\pi=\pi_m^\varepsilon,
\qquad
\widetilde\pi=\pi_\varepsilon^\star,
\]
we obtain
\[
\int_{K_\mu}
d\bigl(\widehat T_m^\varepsilon(x),T_\varepsilon(x)\bigr)\,\mu(dx)
\le
D_\nu\,\|\pi_m^\varepsilon-\pi_\varepsilon^\star\|_{\mathrm{TV}}
\xrightarrow[m\to\infty]{}0,
\]
and
\[
\int_{K_\mu}
d\bigl(\widehat T_m^\varepsilon(x),T_\varepsilon(x)\bigr)^2\,\mu(dx)
\le
2D_\nu^2\,
\|\pi_m^\varepsilon-\pi_\varepsilon^\star\|_{\mathrm{TV}}
\xrightarrow[m\to\infty]{}0.
\]
Therefore
\[
\widehat T_m^\varepsilon\to T_\varepsilon
\qquad\text{in }L^2(\mu).
\]
The \(L^1(\mu)\) convergence follows either from the first bound above or from
Cauchy--Schwarz, since \(\mu\) is a probability measure.
\end{proof}

\subsection{Proof of Proposition~\ref{prop:main-fixed-eps-heat}}
\label{app:proof-main-fixed-eps-heat}

\begin{proof}
Let \((g_m)_{m\in\mathbb N}\subset \mathsf C_\nu(\varphi^*\mathcal F)\) be the
approximating sequence given by Theorem~\ref{thm:main-fixed-eps-plan}. Thus the
associated Gibbs plans satisfy
\[
\|\pi_m^\varepsilon-\pi_\varepsilon^\star\|_{\mathrm{TV}}
\xrightarrow[m\to\infty]{}0.
\]

Fix \(t>0\). By the heat-kernel facts recalled in
Appendix~\ref{app:heat-smoothed-stochastic-completeness}, the heat kernel
\(p_t(y,z)\) associated with the Riemannian heat semigroup is nonnegative and,
under stochastic completeness, satisfies
\[
\int_{\mathcal M}p_t(y,z)\,\mathrm{vol}_{\mathcal M}(dz)=1.
\]
Hence
\[
P_t(y,dz):=p_t(y,z)\,\mathrm{vol}_{\mathcal M}(dz)
\]
is a conservative Markov kernel on \(\mathcal M\). This is the only point in
the proof where stochastic completeness is used.

For a probability measure
\(\eta\in\mathcal P(\mathcal M\times\mathcal M)\), let \(\eta P_t\) denote the
measure obtained by applying \(P_t\) in the second variable:
\[
(\eta P_t)(C)
:=
\int_{\mathcal M\times\mathcal M}
\int_{\mathcal M}\mathbbm 1_C(x,z)\,P_t(y,dz)\,\eta(dx,dy),
\qquad
C\in\mathcal B(\mathcal M\times\mathcal M).
\]
Equivalently, if \((x,y)\sim\eta\) and, conditionally on \(y\),
\(z\sim P_t(y,\cdot)\), then \((x,z)\sim\eta P_t\). By the definition of the
heat-smoothed transport surrogate,
\[
\Pi_{m,t}^\varepsilon=\pi_m^\varepsilon P_t,
\qquad
\Pi_{\varepsilon,t}^\star=\pi_\varepsilon^\star P_t.
\]

We first prove absolute continuity of the smoothed conditional laws. Since
\(\mathcal M\) is a Riemannian manifold, it is a standard Borel space, and the
plans admit regular conditional distributions. Let
\(\pi_{m,x}^\varepsilon\) be a disintegration of \(\pi_m^\varepsilon\) with
respect to its first marginal \(\mu\). Then, for \(\mu\)-a.e. \(x\),
\[
Q_{m,t}^\varepsilon(x,dz)
=
\int_{\mathcal M}P_t(y,dz)\,\pi_{m,x}^\varepsilon(dy).
\]
Using the heat-kernel representation recalled in
Appendix~\ref{app:heat-smoothed-stochastic-completeness}, for every measurable
\(B\subseteq\mathcal M\) we have
\[
Q_{m,t}^\varepsilon(x,B)
=
\int_{\mathcal M}
\left(
\int_B p_t(y,z)\,\mathrm{vol}_{\mathcal M}(dz)
\right)
\pi_{m,x}^\varepsilon(dy).
\]
Since \(p_t(y,z)\ge0\), Tonelli's theorem gives
\[
Q_{m,t}^\varepsilon(x,B)
=
\int_B
\left(
\int_{\mathcal M}p_t(y,z)\,\pi_{m,x}^\varepsilon(dy)
\right)
\mathrm{vol}_{\mathcal M}(dz).
\]
Hence
\[
Q_{m,t}^\varepsilon(x,dz)
=
\left(
\int_{\mathcal M}p_t(y,z)\,\pi_{m,x}^\varepsilon(dy)
\right)
\mathrm{vol}_{\mathcal M}(dz).
\]
Therefore
\[
Q_{m,t}^\varepsilon(x,\cdot)\ll \mathrm{vol}_{\mathcal M}
\qquad
\text{for }\mu\text{-a.e. }x,
\]
with density
\[
q_{m,t}^\varepsilon(x,z)
=
\int_{\mathcal M}p_t(y,z)\,\pi_{m,x}^\varepsilon(dy).
\]
The same argument applied to a disintegration
\(\pi_{\varepsilon,x}^\star\) of \(\pi_\varepsilon^\star\) shows that
\[
Q_{\varepsilon,t}^\star(x,\cdot)\ll \mathrm{vol}_{\mathcal M}
\qquad
\text{for }\mu\text{-a.e. }x.
\]

We now prove the total-variation contraction estimate. Let
\(C\in\mathcal B(\mathcal M\times\mathcal M)\), and define
\[
H_C(x,y)
:=
\int_{\mathcal M}\mathbbm 1_C(x,z)\,P_t(y,dz).
\]
Since \(P_t\) is a Markov kernel, \(0\le H_C\le 1\). Moreover,
\[
\Pi_{m,t}^\varepsilon(C)-\Pi_{\varepsilon,t}^\star(C)
=
\int_{\mathcal M\times\mathcal M}
H_C(x,y)\,
d(\pi_m^\varepsilon-\pi_\varepsilon^\star)(x,y).
\]
We now prove the total-variation contraction estimate. Let
\(C\in\mathcal B(\mathcal M\times\mathcal M)\), and define
\[
H_C(x,y)
:=
\int_{\mathcal M}\mathbbm 1_C(x,z)\,P_t(y,dz).
\]
Since \(P_t\) is a Markov kernel, \(0\le H_C\le 1\). Moreover,
\[
\Pi_{m,t}^\varepsilon(C)-\Pi_{\varepsilon,t}^\star(C)
=
\int_{\mathcal M\times\mathcal M}
H_C(x,y)\,
d(\pi_m^\varepsilon-\pi_\varepsilon^\star)(x,y).
\]
Since \(\pi_m^\varepsilon\) and \(\pi_\varepsilon^\star\) are probability
measures,
\[
(\pi_m^\varepsilon-\pi_\varepsilon^\star)(\mathcal M\times\mathcal M)=0.
\]
Thus
\[
\int H_C\,d(\pi_m^\varepsilon-\pi_\varepsilon^\star)
=
\int \left(H_C-\frac12\right)
\,d(\pi_m^\varepsilon-\pi_\varepsilon^\star),
\]
and \(\left|H_C-\frac12\right|\le 1/2\). By the variational representation of
total variation,
\[
\left|
\Pi_{m,t}^\varepsilon(C)-\Pi_{\varepsilon,t}^\star(C)
\right|
\le
\|\pi_m^\varepsilon-\pi_\varepsilon^\star\|_{\mathrm{TV}}.
\]
Taking the supremum over
\(C\in\mathcal B(\mathcal M\times\mathcal M)\) gives
\[
\|\Pi_{m,t}^\varepsilon-\Pi_{\varepsilon,t}^\star\|_{\mathrm{TV}}
\le
\|\pi_m^\varepsilon-\pi_\varepsilon^\star\|_{\mathrm{TV}}.
\]
Taking the supremum over
\(C\in\mathcal B(\mathcal M\times\mathcal M)\) gives
\[
\|\Pi_{m,t}^\varepsilon-\Pi_{\varepsilon,t}^\star\|_{\mathrm{TV}}
\le
\|\pi_m^\varepsilon-\pi_\varepsilon^\star\|_{\mathrm{TV}}.
\]
Together with Theorem~\ref{thm:main-fixed-eps-plan}, this yields
\[
\|\Pi_{m,t}^\varepsilon-\Pi_{\varepsilon,t}^\star\|_{\mathrm{TV}}
\xrightarrow[m\to\infty]{}0.
\]
In particular,
\[
\Pi_{m,t}^\varepsilon\rightharpoonup\Pi_{\varepsilon,t}^\star
\qquad
(m\to\infty)
\]
for every fixed \(t>0\).

It remains to show that the population-level heat smoothing has vanishing bias:
\[
\Pi_{\varepsilon,t}^\star\rightharpoonup\pi_\varepsilon^\star
\qquad
(t\downarrow0).
\]
Let \(\Phi\in C_b(\mathcal M\times\mathcal M)\). Then
\[
\int_{\mathcal M\times\mathcal M}\Phi(x,z)\,
\Pi_{\varepsilon,t}^\star(dx,dz)
=
\int_{\mathcal M\times\mathcal M}
(P_t\Phi_x)(y)\,\pi_\varepsilon^\star(dx,dy),
\qquad
\Phi_x(z):=\Phi(x,z).
\]
By the extension of the small-time heat-kernel limit to bounded continuous
functions recalled in
Appendix~\ref{app:heat-smoothed-stochastic-completeness}, for every fixed
\((x,y)\in\mathcal M\times\mathcal M\),
\[
(P_t\Phi_x)(y)
\xrightarrow[t\downarrow0]{}
\Phi_x(y)=\Phi(x,y),
\]
because \(\Phi_x\in C_b(\mathcal M)\). Moreover, since \(P_t\) is a Markov
kernel,
\[
|(P_t\Phi_x)(y)-\Phi(x,y)|
\le
2\|\Phi\|_\infty.
\]
Dominated convergence therefore gives
\[
\int \Phi\,d\Pi_{\varepsilon,t}^\star
\xrightarrow[t\downarrow0]{}
\int \Phi\,d\pi_\varepsilon^\star.
\]
Hence
\[
\Pi_{\varepsilon,t}^\star\rightharpoonup\pi_\varepsilon^\star
\qquad
(t\downarrow0).
\]

Finally, let \(t_m\downarrow0\). For any
\(\Phi\in C_b(\mathcal M\times\mathcal M)\),
\[
\left|
\int\Phi\,d\Pi_{m,t_m}^\varepsilon
-
\int\Phi\,d\pi_\varepsilon^\star
\right|
\le
\left|
\int\Phi\,d(\Pi_{m,t_m}^\varepsilon-\Pi_{\varepsilon,t_m}^\star)
\right|
+
\left|
\int\Phi\,d\Pi_{\varepsilon,t_m}^\star
-
\int\Phi\,d\pi_\varepsilon^\star
\right|.
\]
For the first term, using the probabilistic convention for total variation,
\[
\left|
\int\Phi\,d(\Pi_{m,t_m}^\varepsilon-\Pi_{\varepsilon,t_m}^\star)
\right|
\le
2\|\Phi\|_\infty
\|\Pi_{m,t_m}^\varepsilon-\Pi_{\varepsilon,t_m}^\star\|_{\mathrm{TV}}.
\]
By the contraction estimate proved above,
\[
\|\Pi_{m,t_m}^\varepsilon-\Pi_{\varepsilon,t_m}^\star\|_{\mathrm{TV}}
\le
\|\pi_m^\varepsilon-\pi_\varepsilon^\star\|_{\mathrm{TV}}
\xrightarrow[m\to\infty]{}0.
\]
The second term tends to zero because \(t_m\downarrow0\) and
\[
\Pi_{\varepsilon,t}^\star\rightharpoonup\pi_\varepsilon^\star
\qquad
(t\downarrow0).
\]
Therefore
\[
\Pi_{m,t_m}^\varepsilon\rightharpoonup\pi_\varepsilon^\star,
\]
which completes the proof.
\end{proof}